\newtheorem{lemma}{Lemma}
\newtheorem*{proof*}{Proof}
\newtheorem{theorem}{Theorem}
\newtheorem{example}{Example}
\newtheorem*{condition*}{Monotonicity Condition}
\newtheorem{con}{Condition}
\newtheorem{proposition}{Proposition}
\newtheorem{assumption}{Assumption}
\newtheorem{corollary}{Corollary}
\newcommand{\rt}[1]{{\color{black} #1}}
\newcommand{\xr}[1]{{\color{black} #1}}
\title{%Long-term Impact of Fair Decisions: Equilibrium and Equitability Analysis of Group Qualification
How Do Fair Decisions Fare  \\ in Long-term Qualification?
}
\author{
Xueru Zhang${}^{1,}$\thanks{Equal contribution} \hspace{1cm} Ruibo Tu${}^{2,}$\footnotemark[1] \hspace{1cm} Yang Liu${}^3$ \hspace{1cm} Mingyan Liu${}^1$ \AND
Hedvig Kjellström${}^2$ \hspace{1cm} Kun Zhang${}^4$ \hspace{1cm} Cheng Zhang${}^5$ \AND \\
${}^1$University of Michigan, \texttt{\{xueru,mingyan\}@umich.edu} 
\\
${}^2$KTH Royal Institute of Technology, \texttt{\{ruibo,hedvig\}@kth.se}
\\
${}^3$University of California, Santa Cruz, \texttt{yangliu@ucsc.edu}
\\
${}^4$Carnegie Mellon University,
   \texttt{kunz1@cmu.edu}
\\
 ${}^5$Microsoft Research,
   \texttt{Cheng.Zhang@microsoft.com}
}
\author{
   Xueru Zhang\thanks{Equal contribution}  \\
   University of Michigan\\
   \texttt{xueru@umich.edu} \\
   \And
   Ruibo Tu\footnotemark[1]\\
   KTH Royal Institute of Technology \\
   \texttt{ruibo@kth.se} \\
   \And
   Yang Liu \\
   University of California, Santa Cruz \\
   \texttt{yangliu@ucsc.edu} \\
   \And
   Mingyan Liu \\
   University of Michigan\\
   \texttt{mingyan@umich.edu} \\
   \And
   Hedvig Kjellström \\
   KTH Royal Institute of Technology \\
   \texttt{email} \\
   \And
   Kun Zhang \\
   Carnegie Mellon University \\
   \texttt{kunz1@cmu.edu} \\
   \And
   Cheng Zhang \\
   Microsoft Research, Cambridge \\
   \texttt{Cheng.Zhang@microsoft.com} \\
}
\begin{document}
\maketitle
\vspace{-6pt}
\begin{abstract}

Although many fairness criteria have been proposed for decision making, their long-term impact on the well-being of a population remains unclear. In this work, we study the dynamics of population qualification and algorithmic decisions under a partially observed Markov decision problem setting. By characterizing the equilibrium of such dynamics, we analyze the long-term impact of static fairness constraints on the equality and improvement of group well-being. Our results show that static fairness constraints can either promote equality or exacerbate  disparity depending on the driving factor of qualification transitions and the effect of sensitive attributes on feature distributions. We also consider possible interventions that can effectively improve group qualification or promote equality of group qualification. Our theoretical results and experiments on static real-world datasets with simulated dynamics show that our framework can be used to facilitate social science studies. 
\end{abstract}

\vspace{-5pt}
\section{Introduction}\label{sec:introduction}
Automated decision making systems 
% that are 
trained with real-world data can have inherent bias and exhibit discrimination against disadvantaged groups. One common approach to alleviating the issue is to impose fairness constraints on the decision such that certain statistical measures (e.g., true positive rate, positive classification rate, etc.) across multiple groups are (approximately) equalized. However, their effectiveness has been studied mostly in a static framework, where only the immediate impact of the constraint is assessed but not its long-term consequences. 
Recent studies have shown that imposing static fairness criteria intended to protect disadvantaged groups can actually lead to pernicious long-term effects \cite{pmlr-v80-liu18c,zhang2019group}. These long-term effects are heavily shaped by the interplay between algorithmic decisions and  individuals' reactions \citep{liu2019disparate}:  algorithmic decisions lead to changes in the underlying feature distribution, which then feeds back into the decision making process. Understanding how this type of coupled dynamics evolve is a major challenge \citep{d2020fairness}. 

Toward this end, we \rt{consider a discrete-time sequential decision process} applied to a certain population, where responses to the decisions made at each time step are manifested in changes in the features of the population in the next time step. Our goal is to understand how (static) fairness criteria in this type of decision making affect the evolution of group well-being and characterize any equilibrium state the system may converge to. In particular, we will focus on {\em myopic} policies that maximize the immediate utility under static fairness constraints, and examine their impact on different groups in the long run.

More specifically, we seek to study the dynamics of group qualification rates \citep{khajehnejad2019optimal,liu2019disparate,socialequality2019,williams2019dynamic} and evaluate the long-term impact of various static fairness constraints imposed on \rt{decision making.
% binary decisions at each discrete time step. 
We} examine whether these static fairness \rt{constraints}  mitigate or worsen the qualification disparity in the long-run. Our work can be applied to a variety of applications such as recruitment and bank lending. In these applications, an {\em institute} observes {\em individuals'} features (e.g., credit scores), and makes {\em myopic decisions} (e.g., issue loans) by assessing such features against some variables of interest (e.g., ability to repay) which are unknown and unobservable to the institute when making decisions. Individuals respond to the decisions by investing in effort to either improve or maintain their qualification in the next time step.  These actions collectively change the qualification rate of the population. In summary, our main contributions are:  

1. \emph{We analyze the equilibrium of qualification rates in different groups under a general class of fairness constraints (Section \ref{sec:evolution}).} 
We use a Partially Observed Markov Decision Process (POMDP) framework to model the sequential decision making in different scenarios (Section \ref{sec:model}). 
Using this model, we 
\rt{show that under our formulation optimal policies are of the threshold type}
% prove that the optimal myopic policy is a threshold policy 
and provide a way to compute the threshold.
We then prove the existence of an equilibrium (in terms of long-term qualification rates) using 
\rt{threshold policies}
% this optimal policy, 
and provide sufficient conditions for a unique equilibrium.
\\
2. \emph{We analyze the impact of fairness constraints on the disparity of qualification rates} when the equilibrium is unique (Section \ref{sec:impacts}). Our findings suggest that the same fairness constraint can have opposite impacts on the equilibrium depending on the underlying problem scenario.
\\
3. \emph{We explore alternative interventions that can be effective in improving qualification rates at the equilibrium and promoting equality across different groups (Section \ref{sec:interventions}).} 
\\
% 4. \emph{We examine our theory on synthetic Gaussian datasets and two real-world scenarios (Section \ref{sec:exp}).} Our experiments show consistent conclusions cross domains and support for real-life policy making. 
4. \emph{We examine our theory on synthetic Gaussian datasets and two real-world scenarios (Section \ref{sec:exp}).} Our experiments show \rt{that our framework can help examine findings} cross domains and  support real-life policy making. 
\vspace{-0.35em}
\section{Related Work}\label{app:relat}
\vspace{-0.35em}
Among existing works on fairness in sequential decision making problems \cite{zhang2020fairness}, many assume that the population's feature distribution neither changes over time nor is it affected by decisions; examples include studies on handling bias in online learning \citep{bechavod2019equal,dimitrakakis2019bayesian,ensign2018runaway,ensign2018decision, gillen2018online,heidari2018preventing, kallus2018residual,kilbertus2019improving} and bandits problems \citep{auer2002finite,chen2019fair,joseph2016fairness,joseph2018meritocratic,li2019combinatorial,liu2017calibrated,patil2019achieving,tang2020fair}. The goal of most of these work is to design algorithms that can learn near-optimal policy quickly from the sequentially arrived data and the partially observed information, and understand the impact of imposing fairness intervention on the learned policy (e.g., total utility, learning rate, sample complexity, etc.)

 However, recent studies \citep{aneja2019no,chaney2018algorithmic,fuster2018predictably} have shown that there exists a complex interplay between algorithmic decisions and individuals, e.g., user participation dynamics \citep{pmlr-v80-hashimoto18a,xueru1,zhang2019group},  strategic reasoning in a game \citep{hu2019disparate,khajehnejad2019optimal}, etc., such that decision making directly leads to changes in the underlying feature distribution, which then feeds back into the decision making process. Many studies thus aim at understanding the impacts of imposing fairness constraints when decisions affect underlying feature distribution. For example, \cite{pmlr-v80-liu18c,heidari2019long,kannan2019downstream,khajehnejad2019optimal} construct two-stage models where only the one-step impacts of fairness intervention on the underlying population are examined but not the long-term impacts in a sequential framework; 
 \citep{jabbari2017fairness, nabi2019learning} focus on the fairness in reinforcement learning, of which the goal is to learn a long-run optimal policy that maximizes the cumulative rewards subject to certain fairness constraint; \cite{pmlr-v80-hashimoto18a,zhang2019group} construct a user participation dynamics model where individuals respond to perceived decisions by leaving the system uniformly at random. The goal is to understand the impact of various fairness interventions on group representation.  
 
 Our work is most relevant to \citep{hu2019disparate, liu2019disparate,socialequality2019,williams2019dynamic}, which study the long-term impacts of decisions on the groups' qualification states with %considering 
 different dynamics.  
 In \citep{hu2019disparate,liu2019disparate}, strategic individuals are assumed to be able to observe the current policy, based on which they can manipulate the qualification states strategically to receive better decisions. However, there is a lack of study on the influence of the sensitive attribute on dynamics and impact of fairness constraints. Besides, in many cases, the qualification states are affected by both the policy and the qualifications at the previous time step, which is considered in \citep{socialequality2019,williams2019dynamic}. 
However, they assume that the decision maker have access to qualification states and the dynamics of the qualification rates is the same in different groups, i.e.,the equally qualified people from different groups after perceiving the same decision will have the same future qualification state. In fact, the qualification states are unobservable in most cases, and the dynamics can vary across different groups. If considering such difference, the dynamics can be much more complicated such that the social equality can not be attained easily as concluded in \citep{socialequality2019,williams2019dynamic}.

\section{Problem Formulation} 
\vspace{-0.5em} 
\label{sec:model}
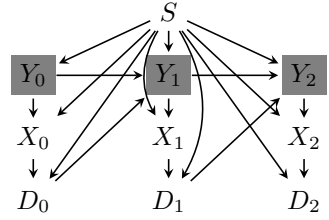
\begin{wrapfigure}[9]{r}{0.37\textwidth}
    \centering
    \vspace{-4.1em}
    \begin{tikzpicture}[
			scale=0.12,
            > = stealth, % arrow head style
            shorten > = 1pt, % don't touch arrow head to node
            auto,
            node distance = 1cm, % distance between nodes
            semithick % line style
        ]

        \tikzstyle{state}=[
            draw = none,
            % thick,
            % minimum size = 1mm
        ]
        \tikzstyle{hstate}=[
            draw = none,
            fill = gray
            % thick,
            % minimum size = 1mm
        ]
      \node (S) at  (0,14){$S$};

        \node[state] (X0) at  (-15,0){$X_0$};
        \node[state] (X1) at  (0,0){$X_1$};
        \node[state] (X2) at  (15,0){$X_2$};
        
        \node[hstate] (Y0) at  (-15,7){$Y_0$};
        \node[hstate] (Y1) at  (0,7){$Y_1$};
        \node[hstate] (Y2) at  (15,7){$Y_2$};
        
        \node[state] (A0) at  (-15,-7){$D_0$};
        \node[state] (A1) at  (0,-7){$D_1$};
        \node[state] (A2) at  (15,-7){$D_2$};
        
        % Node X0        
        \path[<-] (X0) edge node {} (Y0);
        \path[->] (X0) edge node {} (A0);
        
        % Node X1
        \path[<-] (X1) edge node {} (Y1);
        \path[->] (X1) edge node {} (A1);
        
        % Node X2
        \path[<-] (X2) edge node {} (Y2);
        \path[->] (X2) edge node {} (A2);

        \path[->] (Y0) edge node {} (Y1);
        \path[->] (Y1) edge node {} (Y2);
        
        \path[->] (A0) edge node {} (Y1);
        \path[->] (A1) edge node {} (Y2);
        % \path[->] (A0) edge node {} (R0);
        % \path[->] (A1) edge node {} (R1);

        % \path[->] (Y0) edge node {} (R0);
        % \path[->] (Y1) edge node {} (R1);
        
        \path[->] (S) edge node {} (Y0);
        \path[->] (S) edge node {} (Y1);
        \path[->] (S) edge node {} (Y2);
        
        \path[->] (S) edge node {} (A0);
        \path[->] (S) edge[bend left] node {} (A1);
        \path[->] (S) edge node {} (A2);

        \path[->] (S) edge node {} (X0);
        \path[->] (S) edge[bend right] node{} (X1);
        \path[->] (S) edge node {} (X2);

        % \path[->] (A0) edge [dashed] node {} (X1);
        % \path[->] (A1) edge [dashed] node{} (X2);
        % \path[->] (S) edge node {} (R1);
        % \path[->] (S) edge node {} (R0);
    \end{tikzpicture}
    \caption{ The graphical representation of our model where gray shades indicate latent variables.}
  \label{fig:model_diagram}
\end{wrapfigure}
\textbf{Partially Observed Markov Decision Process (POMDP).} Consider two groups $\mathcal{G}_a$ and $\mathcal{G}_b$ distinguished by a sensitive {\em attribute}  ${S=s} \in \{a,b\}$ (e.g., gender), \rt{with fractions} 
% each of a fraction 
$p_s\coloneqq \mathbb{P}(S = s)$ of the population. At time $t$, an \textit{individual} with attribute $s$ has feature\footnote{\xr{For simplicity of exposition, our analysis is based on one-dimensional feature space. However, the conclusions hold for high-dimensional features. This can be done by first mapping the feature space to a one-dimensional qualification profile space, this extension is given in Appendix \ref{app:highDim}}.} ${X_t = x} \in \mathbb{R}$ determined by a hidden \textit{qualification} state  ${Y_t=y}\in \{0,1\}$, \rt{and} both are time-varying. We adopt a natural assumption that an individual's attribute and current features constitute sufficient statistics, so that conditioned on these, the decision is independent of past features \rt{and}
% or
decisions. This allows an {\em institute} (decision maker) to adopt a Markov policy:  it makes decisions ${D_t=d} \in \{0,1\}$ (reject or accept) using a policy \footnote{
\xr{We use group-dependent policies so that the optimal policies can achieve the \textit{perfect} fairness, i.e., certain statistical measures are equalized \textit{exactly}, 
which allows us to study the impact of fairness constraint \textit{precisely}. Although using group-dependent policies might be prohibited in some scenarios (e.g., criminal justice), our qualitative conclusions are applicable to cases when two groups share the same policy, under which the \textit{approximate} fairness is typically attained to maximize utility.}}%\rt{To complement the existed fairness studies in machine learning literature, we mainly study group-dependent policies in this work, which might be prohibited in some scenarios, such as criminal justice. And one can simply follow the same analysis methods for the polices that are not group dependent.}
$\pi^s_t(x) \coloneqq \mathbb{P}(D_t = 1\mid X_t = x,S=s)$
% , $x\in \mathbb{R}^d$, $s \in \{a,b\}$ 
to maximize an instantaneous utility $R_t(D_t,Y_t)$, possibly subject to certain constraints. An individual is informed of the decision, and subsequently takes actions that may change the qualification $Y_{t+1}$ and features $X_{t+1}$. The \rt{latter}
% of which 
is used to drive the institute's decision at the next time step. This process is shown in Fig. \ref{fig:model_diagram}. Note that this model can be viewed as capturing either a randomly selected individual repeatedly going through the decision cycles, or population-wide average when all individuals are subject to the decision cycles. Thus, $\alpha_t^s \coloneqq \mathbb{P}(Y_t = 1\mid S = s)$ is the probability of an individual from $\mathcal{G}_s$ qualified at time $t$ at the individual level, while being the \textit{qualification rate} at the group level. One of our primary goals is to study \rt{how $\alpha^s_t$
% $(\alpha_t^a,\alpha^b_t)$ 
evolves
% over time 
under different (fair) policies.}

\textbf{Feature generation process.} 
\rt{In many real-world scenarios, \xr{equally qualified individuals from different groups can have different features, potentially due to the different culture backgrounds and physiological differences of different demographic
groups.} %feature distributions can actually be group-dependent because of different culture backgrounds and physiological differences in different demographic groups. %Moreover, their reactions to decisions  (transition probabilities) can also be group-dependent.
Therefore, we consider that} at time step $t$, given $Y_t=y$ and $S=s$, features $X_t$ are generated by $G^s_y(x) \coloneqq \mathbb{P}(X_t=x\mid Y_t = y, S =s)$.
% , \forall x\in \mathbb{R}, y \in \{0,1\}, s\in \{a,b\}$. 
This will be referred to as the \textit{feature distribution} and assumed time-invariant.  The convex combination 
$\mathbb{P}(X_t=x\mid S =s) = \alpha^s_tG^s_1(x)+(1-\alpha^s_t)G^s_0(x)$ will be referred to as the \textit{composite feature distribution} of group $\mathcal{G}_s$ at time $t$. 
% In Appendix \ref{sec:generation} we consider a further generalization where the feature distribution is also  affected by the previous decision: $G^s_{yd}(x) \coloneqq \mathbb{P}(X_t=x\mid Y_t = y, D_{t-1} = d,S =s)$.

\textbf{Transition of qualification states.}
At time $t$, after receiving decision $D_t$,  an individual takes actions such as exerting effort/investment, imitating others, etc., which results in a new qualification $Y_{t+1}$. This is modeled by a set of transitions $T^s_{yd}\coloneqq \mathbb{P}(Y_{t+1} = 1\mid Y_{t}=y, D_{t}=d, S=s)$, \xr{which are time-invariant and group-dependent}.
% , y\in \{0,1\}, d \in \{0,1\}, s\in \{a,b\}$. 
These transitions characterize individuals' ability to maintain or improve its qualification. Note that we don't model individuals' strategic responses as in \cite{hu2019disparate,khajehnejad2019optimal}, but rather use $T_{yd}^s$ to capture the overall effect; in other words, this single quantity may encapsulate the individual's willingness to exert effort, the cost of such effort, as well as the strength of community support, etc. Specifically, $T^s_{0d}$ (resp. $T^s_{1d}$) represents the probability of individuals from $\mathcal{G}_s$ who were previously unqualified (resp. qualified) became (resp. remain) qualified after receiving decision $d \in \{0,1\}$. \xr{Note that the case when feature distributions or transitions are group-independent is a special case of our formulation, i.e., by setting $G^a_{y} = G^b_{y}$ or $T^a_{yd} = T^b_{yd}$.}
%\rt{Note that in our formulation, the group-independent formulation is a special case of the group-dependent one, i.e.,  feature distributions and transition probabilities across groups are same.}

\textbf{Fair myopic policy of an institute.}
A myopic policy $\pi_t$ at time $t$ aims at maximizing the instantaneous expected utility/reward $ \mathcal{U}(D_t,Y_t) = \mathbb{E}[R_t(D_t,Y_t)]$, where the institute gains $u_+ > 0$ by accepting a qualified individual and incurs a cost $u_- > 0$ by accepting an unqualified individual, i.e., $R_t(D_t,Y_t)\coloneqq \begin{cases}
u_+, & \text{ if } Y_t=1 \text{ and } D_t = 1\\
-u_-, & \text{ if } Y_t=0 \text{ and } D_t = 1\\
0, & \text{ if } D_t = 0
\end{cases}$. A fair myopic policy maximizes the above utility subject to a fairness constraint $\mathcal{C}$. We focus on a set of group fairness constraints that equalize certain statistical measure between $\mathcal{G}_a$ and $\mathcal{G}_b$. A commonly studied (one-shot) fair machine learning problem is to find $(\pi^a_t,\pi^b_t)$ that solves the following constrained optimization,
\begin{eqnarray}
 &\max_{\pi^a,\pi^b}&\mathcal{U}(D_t,Y_t) =
p_a\mathbb{E}[R_t(D_t,Y_t)|S=a]  +
p_b\mathbb{E}[R_t(D_t,Y_t)|S=b] \nonumber  \\ &\text{ s.t.  }& 
\mathbb{E}_{X_t\sim \mathcal{P}^a_{\mathcal{C}}}[\pi^a(X_t)] = \mathbb{E}_{X_t\sim \mathcal{P}^b_{\mathcal{C}}}[\pi^b(X_t)]~,  \label{eq:fair_constraint_e}
\end{eqnarray}
where $\mathcal{P}^s_{\mathcal{C}}$ is some probability distribution over features $X_t$ and specifies the fairness metric $\mathcal{C}$. Many popular fairness metrics can be written in this form, e.g., 
\begin{enumerate}[noitemsep,topsep=0pt]
     \item Equality of Opportunity (\texttt{EqOpt}) \cite{hardt2016equality}: this requires the true positive rate (TPR) \rt{to} be equal, 
     i.e., $\mathbb{P}(D_t = 1|Y_t = 1,S=a) = \mathbb{P}(D_t = 1|Y_t = 1,S=b)$. This is equivalent to $\mathbb{E}_{X_t|Y_t = 1,S=a}[\pi_t^a(X_t)] = \mathbb{E}_{X_t|Y_t = 1,S=b}[\pi_t^b(X_t)]$, i.e.,
     $\mathcal{P}^s_{\texttt{EqOpt}}(x) = G^s_1(x)$.
     \item Demographic Parity (\texttt{DP}) \cite{barocas-hardt-narayanan}: this requires the positive rate (PR) \rt{to} be equal, i.e., 
     $\mathbb{P}(D_t = 1|S=a) = \mathbb{P}(D_t = 1|S=b)$. This is equivalent to $\mathbb{E}_{X_t|S=a}[\pi_t^a(X_t)] = \mathbb{E}_{X_t|S=b}[\pi_t^b(X_t)]$, i.e.,
     $\mathcal{P}^s_{\texttt{DP}}(x) = (1-\alpha^s_t) G^s_0(x)+ \alpha^s_t G^s_1(x)$.
\end{enumerate}
We focus on this class of myopic polices in this paper, and refer to the solution to \eqref{eq:fair_constraint_e} as the optimal policy. 
We further define \textit{qualification profile}\footnote{We assume the institute has perfect knowledge of $\gamma^s_t(x)$. In practice, this is obtained via learning/estimating $\alpha^s_t$ and $G^s_y(x)$ from data \cite{james1978estimation,patra2016estimation}. 
% The impact of estimation error is discussed in the Appendix \ref{app:discussion}.
},  
$\gamma_t^s(x)$, the probability an individual with features $x$ from group $\mathcal{G}_s$ is qualified at $t$, i.e.,
\begin{eqnarray} \label{eq:qlfn_profile}
\gamma_t^s(x) = \mathbb{P}(Y_t = 1\mid X_t = x, S=s)
= \frac{1}{\frac{G^s_0(x)}{G^s_1(x)}(\frac{1}{\alpha_t^s}-1) + 1}, ~~x\in \mathbb{R}.
\end{eqnarray}
Then the utility  obtained from the group  
$\mathcal{G}_s$ at time step $t$ is given by $\mathbb{E}[R_t(D_t,Y_t)|S=s] = \mathbb{E}_{X_t|S=s}[\pi^s_t(X_t)(\gamma_t^s(X_t)(u_++u_-) -u_-)]$. 
% See Appendix \ref{app:derivations} for details on .
\rt{Detailed derivation is shown in Appendix \ref{app:derivations}.}

\section{Evolution and Equilibrium Analysis of Qualification Rates} \label{sec:evolution}
In this section, we first solve the one-shot optimization problem \eqref{eq:fair_constraint_e} (Sec. \ref{sec:opt_policy}). We then show that under the optimal policy, there exists an equilibrium of qualification rates in the long run, and that a sufficient condition for its uniqueness is also introduced (Sec. \ref{sec:evolutions_equilibrium}). 
% For simplicity of exposition, our analysis from here on out is for one-dimensional feature space; our conclusions, however, hold for high-dimensional features; this can be done by mapping features to the one-dimensional qualification profile space which admits optimal threshold policies (this extension is given in Appendix \ref{app:highDim}).

\subsection{Threshold policies are optimal} \label{sec:opt_policy}
If an individual's qualification is observable, the optimal policy is straightforward absent of fairness constraints:  accepting all qualified ones and rejecting the rest. When qualification is not observable, the institute needs to infer from observed features and accepts those most likely to be qualified. Next we show that under mild assumptions, optimal policies are in the form of threshold policies.
\rt{
\begin{assumption}\label{ass:mono-inference}
$G_y^s(x)$ and the CDF, $\int_{-\infty}^{x} {G}_y^s(z)dz,$ 
are continuous in $x\in \mathbb{R}$, $\forall y,s$; %$\forall y \in \{0,1\}$, $\forall s \in \{a,b\}$; 
\xr{$G^s_1(x)$ and $G^s_0(x)$ satisfy strict monotone likelihood ratio property, i.e., $\frac{G^s_1(x)}{G^s_0(x)}$ is strictly increasing in $x\in\mathbb{R}$.} %$\gamma^s(x)$ is strictly decreasing in $x\in\mathbb{R}$.
\end{assumption}
\begin{assumption}\label{ass:fair_prob}
$\forall s \in \{a,b\}$, $\mathcal{P}_{\mathcal{C}}^s(x)$ is continuous in $x\in\mathbb{R}$;  
$\frac{\mathbb{P}(X=x\mid S=s)}{\mathcal{P}_{\mathcal{C}}^s(x)}$ is non-decreasing in $x\in\mathbb{R}$.
\end{assumption}
}
% \begin{assumption}\label{ass:mono-inference}
% $\forall s \in \{a,b\}$, $G_y^s(x)$ is continuous and 
% \rt{$\gamma^s(x)$}
% % $\frac{G_0^s(x)}{G_1^s(x)}$ 
% is strictly decreasing in $x\in\mathbb{R}$; 
% and the CDF, $\int_{-\infty}^{x} {G}_y^s(z)dz,$ 
% is continuous in $x\in \mathbb{R}$, $\forall y \in \{0,1\}$.
% \end{assumption}
% \begin{assumption}\label{ass:fair_prob}
% $\forall s \in \{a,b\}$, $\mathcal{P}_{\mathcal{C}}^s(x)$ is continuous in $x\in\mathbb{R}$;  $\frac{G_1^s(x)}{\mathcal{P}_{\mathcal{C}}^s(x)}$ is non-decreasing in $x\in\mathbb{R}$; and $\frac{G_0^s(x)}{\mathcal{P}_{\mathcal{C}}^s(x)}$ is non-increasing in $x\in\mathbb{R}$.
% \end{assumption}
% Assumption \ref{ass:mono-inference} says that an individual is more likely to be qualified as his or her feature value increases\footnote{When qualification increases as the feature value $x$ decreases, one can simply use the opposite of $x$.}. Assumption \ref{ass:fair_prob} limits the type of fairness constraints, but is satisfied by many commonly used ones, including \texttt{EqOpt} and \texttt{DP}. 

\rt{Assumption \ref{ass:mono-inference} says that an individual is more likely to be qualified as his/her feature value increases\footnote{When qualification increases as the feature value $x$ decreases, one can simply use the opposite of $x$.}. \xr{We show that }
\emph{under Assumption \ref{ass:mono-inference}, the optimal unconstrained policy is a threshold policy, i.e., $\forall x, t$ and $s\in \{a,b\}$, $\pi^s_t(x) = \textbf{1}(x\geq \theta^s_t)$ for some $\theta^s_t \in \mathbb{R}$.} Assumption \ref{ass:fair_prob} \xr{limits the types of fairness constraints, but is satisfied by many commonly used ones, including \texttt{EqOpt} and \texttt{DP}. We show that} % (satisfied by many commonly used fairness constraints, including \texttt{EqOpt} and \texttt{DP}) is a condition of static fairness criteria such that
\emph{for any fairness constraint $\mathcal{C}$ satisfying Assumption \ref{ass:fair_prob}, the optimal fair policy is a threshold policy.} \xr{The proof of these results is given in Appendix \ref{app:proofs}, which is consistent with Theorem 3.2 in \citep{corbett2017algorithmic}.}
Moreover, under Assumption \ref{ass:mono-inference} and \ref{ass:fair_prob}, a threshold as a function of qualification rates, $\theta^s_t\coloneqq \theta^s(\alpha^a_t,\alpha^b_t)$, is continuous and non-increasing in $\alpha^a_t$ and $\alpha^b_t$. \xr{In the next Lemma \ref{lemma:opt_fair_policy_eq}, we further characterize these optimal (fair) thresholds in the optimal (fair) policies.}}
% \begin{theorem}[Threshold policies are optimal]\label{lemma:optimal_threshold_form}
% Under Assumption \ref{ass:mono-inference}, the unconstrained optimal policy is a threshold policy; for any fairness constraint $\mathcal{C}$ satisfying  Assumption \ref{ass:fair_prob}, the optimal fair policy is a threshold policy, i.e.,  $\forall x, t$ and $s\in \{a,b\}$, $\pi^s_t(x) = \textbf{1}(x\geq \theta^s_t)$ for some $\theta^s_t \in \mathbb{R}$. Threshold as a function of qualification rates, $\theta^s_t\coloneqq \theta^s_t(\alpha^a_t,\alpha^b_t)$, is continuous and non-increasing in $\alpha^a_t$ and $\alpha^b_t$.
% \end{theorem}
% Theorem \ref{lemma:optimal_threshold_form} 
%\rt{This} shows that under mild assumptions, the optimal (fair) polices are  deterministic threshold polices. 
%We characterize these optimal (fair) thresholds in the next Lemma \ref{lemma:opt_fair_policy_eq}. 

\begin{lemma}[Optimal (fair) threshold]\label{lemma:opt_fair_policy_eq}
Let $(\gamma^a(x),\gamma^b(x))$ be a pair of qualification profiles for groups $\mathcal{G}_a$ and $\mathcal{G}_b$ at $t$. Let threshold pairs $(\theta^{a*}_{\texttt{UN}},\theta^{b*}_{\texttt{UN}})$ and $(\theta^{a*}_{\mathcal{C}},\theta^{b*}_{\mathcal{C}})$ be the unconstrained and fair optimal thresholds under constraint $\mathcal{C}$, respectively. Then we have ${\gamma}^a(\theta^{a*}_{\texttt{UN}}) = {\gamma}^b(\theta^{b*}_{\texttt{UN}}) = \frac{u_-}{u_++u_-}$ and 
\begin{eqnarray}
\resizebox{0.9\hsize}{!}{$%
p_a  
\Big({\gamma}^a(\theta^{a*}_{\mathcal{C}})-\frac{u_{-}}{u_{+}+u_{-}}\Big)
\frac{\mathbb{P}(X=\theta^{a*}_{\mathcal{C}}\mid S=a) }{\mathcal{P}^a_{\mathcal{C}}( \theta^{a*}_{\mathcal{C}})}
+p_b  
\Big({\gamma}^b(\theta^{b*}_{\mathcal{C}}) -\frac{u_{-}}{u_{+}+u_{-}}\Big)
\frac{\mathbb{P}(X=\theta^{b*}_{\mathcal{C}}\mid S=b) }
{\mathcal{P}_{\mathcal{C}}^b( \theta^{b*}_{\mathcal{C}})}
=0.$}
\label{eq:opt_fair_policy}
\end{eqnarray}
\end{lemma}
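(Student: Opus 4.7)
The plan is to leverage the preceding threshold-policy characterization so the optimization reduces to choosing two scalar thresholds $(\theta^a,\theta^b)\in\mathbb{R}^2$, then apply first-order/KKT conditions. Using $\pi^s(x)=\mathbf{1}(x\geq\theta^s)$ and the identity $\mathbb{E}[R_t\mid S=s]=\mathbb{E}_{X\mid S=s}[\pi^s(X)((u_++u_-)\gamma^s(X)-u_-)]$ given right before the lemma, the joint objective becomes
\begin{equation*}
\mathcal{U}(\theta^a,\theta^b)=\sum_{s\in\{a,b\}}p_s\int_{\theta^s}^{\infty}\bigl[(u_++u_-)\gamma^s(x)-u_-\bigr]\,\mathbb{P}(X=x\mid S=s)\,dx.
\end{equation*}

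For the unconstrained case, the objective decouples across groups, so I would differentiate each summand in $\theta^s$ via the Leibniz rule (valid since $G^s_y$ is continuous by Assumption \ref{ass:mono-inference}) and set the derivative to zero, obtaining $-p_s\bigl[(u_++u_-)\gamma^s(\theta^{s*}_{\texttt{UN}})-u_-\bigr]\mathbb{P}(X=\theta^{s*}_{\texttt{UN}}\mid S=s)=0$. Because the composite density $\mathbb{P}(X=x\mid S=s)=\alpha_t^s G_1^s(x)+(1-\alpha_t^s)G_0^s(x)$ is strictly positive and $p_s>0$, the bracketed term must vanish, yielding $\gamma^s(\theta^{s*}_{\texttt{UN}})=u_-/(u_++u_-)$.

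For the constrained case, I would rewrite the fairness constraint in \eqref{eq:fair_constraint_e} under threshold policies as $h(\theta^a,\theta^b):=\int_{\theta^a}^{\infty}\mathcal{P}^a_{\mathcal{C}}(x)\,dx-\int_{\theta^b}^{\infty}\mathcal{P}^b_{\mathcal{C}}(x)\,dx=0$, form the Lagrangian $L=\mathcal{U}-\lambda h$, and impose stationarity in each threshold. The two resulting equations give
\begin{equation*}
\lambda=\frac{p_a\bigl[(u_++u_-)\gamma^a(\theta^{a*}_{\mathcal{C}})-u_-\bigr]\mathbb{P}(X=\theta^{a*}_{\mathcal{C}}\mid S=a)}{\mathcal{P}^a_{\mathcal{C}}(\theta^{a*}_{\mathcal{C}})}=-\frac{p_b\bigl[(u_++u_-)\gamma^b(\theta^{b*}_{\mathcal{C}})-u_-\bigr]\mathbb{P}(X=\theta^{b*}_{\mathcal{C}}\mid S=b)}{\mathcal{P}^b_{\mathcal{C}}(\theta^{b*}_{\mathcal{C}})}.
\end{equation*}
Eliminating $\lambda$, dividing by $(u_++u_-)>0$, and rearranging produces \eqref{eq:opt_fair_policy}.

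The main obstacle will be justifying the first-order approach: I need interior optima (so Leibniz differentiation and stationarity are legitimate), strict positivity of $\mathcal{P}^s_{\mathcal{C}}$ at the optimum (so the division is well-defined), and a constraint qualification (so a multiplier $\lambda$ exists). Interiorness follows because the utility contribution from a group becomes negative for arbitrarily small $\theta^s$ and vanishes for arbitrarily large $\theta^s$, forcing the optimum into the interior; positivity of $\mathcal{P}^s_{\mathcal{C}}$ at the threshold follows from Assumption \ref{ass:fair_prob} together with positivity of the composite density (via the monotone ratio condition); and the gradient $\nabla h=(-\mathcal{P}^a_{\mathcal{C}}(\theta^a),\mathcal{P}^b_{\mathcal{C}}(\theta^b))$ is nonzero, yielding linear independence constraint qualification so the KKT multiplier exists.
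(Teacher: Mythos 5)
Your proposal is correct and follows essentially the same route as the paper: both reduce the problem to optimizing over two scalar thresholds (using the threshold-optimality result established just before the lemma) and derive Eqn.~\eqref{eq:opt_fair_policy} as the first-order condition, the only cosmetic difference being that you eliminate the fairness constraint with a Lagrange multiplier while the paper substitutes $\theta^a=\phi_{\mathcal{C}}(\theta^b)$ with $\phi_{\mathcal{C}}=(\mathbb{P}^a_{\mathcal{C}})^{-1}\circ\mathbb{P}^b_{\mathcal{C}}$ and differentiates in $\theta^b$ alone, which yields the identical equation since $d\phi_{\mathcal{C}}/d\theta^b=\mathcal{P}^b_{\mathcal{C}}(\theta^b)/\mathcal{P}^a_{\mathcal{C}}(\theta^a)$. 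Your added attention to interiority, positivity of $\mathcal{P}^s_{\mathcal{C}}$ at the optimum, and the constraint qualification goes slightly beyond what the paper makes explicit and is sound.
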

Here we have removed the subscript $t$ since the thresholds are not $t$-dependent; they only depend on current qualification rates. The solution to Eqn. \eqref{eq:opt_fair_policy} is the threshold pair $(\theta^{a*}_{\mathcal{C}},\theta^{b*}_{\mathcal{C}})$ that satisfies the fairness constraint $\int_{\theta^{a*}_{\mathcal{C}}}^{\infty}\mathcal{P}^a_{\mathcal{C}}(x)dx = \int_{\theta^{b*}_{\mathcal{C}}}^{\infty}\mathcal{P}^b_{\mathcal{C}}(x)dx$ in Eqn. \eqref{eq:fair_constraint_e} while maximizing the expected utility $\mathcal{U}(D,Y)$ at time $t$. Under  \texttt{DP} and \texttt{EqOpt} fairness, Eqn. \eqref{eq:opt_fair_policy} can be reduced to
\begin{eqnarray*}
p_a  
{\gamma}^a(\theta^{a*}_{\texttt{DP}})
+p_b  
{\gamma}^b(\theta^{b*}_{\texttt{DP}})
=\frac{u_{-}}{u_{+}+u_{-}}; ~~~~
\frac{p_a  \alpha^a}{{\gamma}^a(\theta^{a*}_{\texttt{EqOpt}})} + \frac{p_b  \alpha^b}{{\gamma}^b(\theta^{b*}_{\texttt{EqOpt}})}
 = \frac{p_a  \alpha^a}{
 \frac{u_-}{u_++u_-}
 } + \frac{p_b  \alpha^b}{
  \frac{u_-}{u_++u_-}
 }.
\end{eqnarray*}
Lemma \ref{lemma:opt_fair_policy_eq} also indicates the relation between the unconstrained and fair optimal polices, e.g., a group's qualification profile evaluated at the unconstrained threshold is the same as the weighted combination of two groups' qualification profiles evaluated at their corresponding fair thresholds under \texttt{DP}. 
\subsection{Evolution and equilibrium analysis} \label{sec:evolutions_equilibrium}
We next examine what happens as the institute repeatedly makes decisions based on the optimal (fair) policies derived in Sec. \ref{sec:opt_policy}, while individuals react by taking actions to affect their future qualifications.  We say the qualification rate of $\mathcal{G}_s$ is at an {\em equilibrium} if $\alpha^s_{t+1} = \alpha^s_t,\forall t \geq t_o$ for some $t_o$, or equivalently, if $\lim_{t\to \infty} \alpha^s_t = \alpha^s$ is well-defined for some $\alpha^s\in [0,1]$. Analyzing equilibrium helps us understand the property of the population in the long-run. 
We begin by characterizing the dynamics of qualification rates $\alpha_t^s$ under policy $\pi^s_t$ as follows (see Appendix \ref{app:derivations} for the derivation):  
\begin{eqnarray} \label{eq:dynamics}
\alpha_{t+1}^s = g^{0s}\xr{(\alpha_{t}^a,\alpha_{t}^b)\cdot}(1-\alpha_{t}^s) + g^{1s}\xr{(\alpha_{t}^a,\alpha_{t}^b)\cdot}\alpha_{t}^s,~~ s\in \{a,b\}~, 
\end{eqnarray}
where $g^{ys}\xr{(\alpha_{t}^a,\alpha_{t}^b)} \coloneqq \mathbb{E}_{X_{t}|Y_{t}=y,S=s}\Big[(1-\pi_{t}^s(X_{t}))T_{y0}^s+\pi_{t}^s(X_{t})T_{y1}^s\Big]$ \xr{depends on qualification rates $\alpha_{t}^a,\alpha_{t}^b$ through the policy $\pi_{t}^s$}. When $\pi^s_t(x) = \textbf{1}(x\geq \theta^s_t)$, this reduces to $g^{ys}\xr{(\alpha_{t}^a,\alpha_{t}^b)} 
\coloneqq 
T_{y0}^s\int_{-\infty}^{\theta^s_{t}} G_y^s(x)dx + T_{y1}^s\int_{\theta^s_{t}}^{\infty} G_y^s(x)dx$
,  $y\in \{0,1\}$. \xr{Denote $g^{ys}(\alpha_{t}^a,\alpha_{t}^b)\coloneqq g^{ys}(\theta^s(\alpha_{t}^a,\alpha_{t}^b))$. 

Dynamics \eqref{eq:dynamics} says that the qualified people at each time consists of two parts: the qualified people in the previous time step remain being qualified, and those who were unqualified in the previous time step change to become qualified.}

Theorem \ref{lemma:exist_equilibrium} below shows that for any transition and any threshold policy that are continuous in qualification rates, the above dynamical system always has at least one equilibrium.

\begin{theorem}[Existence of equilibrium]\label{lemma:exist_equilibrium} Consider a dynamics \eqref{eq:dynamics} 
% under 
\rt{with} a threshold policy $\theta^s(\alpha^a,\alpha^b)$ that is continuous in $\alpha^a$ and $\alpha^b$. $\forall\, T_{yd}^s\in (0,1)$, there exists at least one  equilibrium $(\widehat{\alpha}^a,\widehat{\alpha}^b)$.
\end{theorem}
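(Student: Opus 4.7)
The plan is to recast the problem as a fixed-point problem on the compact convex set $[0,1]^2$ and apply Brouwer's fixed-point theorem. Define the map $F=(F^a,F^b):[0,1]^2\to\mathbb{R}^2$ by
\begin{equation*}
F^s(\alpha^a,\alpha^b) \;=\; g^{0s}(\alpha^a,\alpha^b)\bigl(1-\alpha^s\bigr) \;+\; g^{1s}(\alpha^a,\alpha^b)\,\alpha^s, \qquad s\in\{a,b\},
\end{equation*}
so that the dynamics \eqref{eq:dynamics} becomes $(\alpha^a_{t+1},\alpha^b_{t+1})=F(\alpha^a_t,\alpha^b_t)$. An equilibrium is precisely a fixed point of $F$. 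I would then verify the three standard hypotheses of Brouwer: (i) $[0,1]^2$ is compact and convex, which is immediate; (ii) $F$ maps $[0,1]^2$ into itself; (iii) $F$ is continuous.

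For (ii), I would note that for any threshold $\theta^s$, one has
\begin{equation*}
g^{ys}(\alpha^a,\alpha^b) \;=\; T^s_{y0}\!\int_{-\infty}^{\theta^s}\!G^s_y(x)\,dx \;+\; T^s_{y1}\!\int_{\theta^s}^{\infty}\!G^s_y(x)\,dx
\end{equation*}
is a convex combination of $T^s_{y0}$ and $T^s_{y1}$, both of which lie in $(0,1)$, so $g^{ys}\in(0,1)$. Hence $F^s(\alpha^a,\alpha^b)$, being a convex combination of $g^{0s}$ and $g^{1s}$ with weights $(1-\alpha^s)$ and $\alpha^s$, lies in $[0,1]$.

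For (iii), the main technical point is that $g^{ys}(\alpha^a,\alpha^b)=g^{ys}(\theta^s(\alpha^a,\alpha^b))$ is a composition. By the hypothesis of the theorem, $\theta^s$ is continuous in $(\alpha^a,\alpha^b)$, and by Assumption \ref{ass:mono-inference} the CDF $\int_{-\infty}^{x} G^s_y(z)\,dz$ is continuous in $x$; therefore $g^{ys}$ is continuous in $\theta^s$ and hence in $(\alpha^a,\alpha^b)$. Multiplication and addition preserve continuity, so $F$ is continuous on $[0,1]^2$. Applying Brouwer's fixed-point theorem to $F$ then yields $(\widehat{\alpha}^a,\widehat{\alpha}^b)\in[0,1]^2$ with $F(\widehat{\alpha}^a,\widehat{\alpha}^b)=(\widehat{\alpha}^a,\widehat{\alpha}^b)$, which is the desired equilibrium.

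The only delicate step is the continuity check in (iii); everything else is routine. The subtlety lies in making sure that even if $\theta^s(\alpha^a,\alpha^b)$ is $\pm\infty$ on parts of the boundary (e.g., when the fairness constraint forces the institute to accept all or none of a group), the integrals defining $g^{ys}$ still depend continuously on $(\alpha^a,\alpha^b)$ — this is handled by viewing $\theta^s$ as a continuous map into the extended real line and noting that the CDF of $G^s_y$ extends continuously to $\{-\infty,+\infty\}$ with values $0$ and $1$. Once this is spelled out, Brouwer delivers the equilibrium directly, with no further structural assumptions on the transitions beyond $T^s_{yd}\in(0,1)$.
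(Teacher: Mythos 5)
Your proof is correct, and it takes a genuinely different (and cleaner) route than the paper's primary argument. The paper works with the balanced equations $\frac{1}{\alpha^s}-1 = h^s(\theta^s(\alpha^a,\alpha^b))$ where $h^s = \frac{1-g^{1s}}{g^{0s}}$: it first shows that for each fixed $\alpha^{-s}$ the strictly decreasing function $l^s(\alpha^s)=\frac{1}{\alpha^s}-1$, which sweeps from $+\infty$ to $0$, must cross the function $h^s$, which is trapped between two finite positive bounds determined by the transitions; it then argues that the resulting solution sets form continuous curves in the unit square, one family spanning the full range of $\alpha^a$ and the other the full range of $\alpha^b$, so they must intersect. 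Your approach instead treats the one-step update map $F$ directly as a self-map of the compact convex set $[0,1]^2$ and invokes Brouwer — which the paper itself remarks, in a closing sentence of its proof, is an alternative way to conclude. What your route buys is brevity and robustness: you avoid the somewhat delicate ``the solutions lie on continuous curves that must cross'' step (which requires care when the balanced set $\Psi^s(\alpha^{-s})$ is not a singleton), and your observation about extending $\theta^s$ to the extended real line cleanly handles boundary degeneracies. What the paper's route buys is the machinery of the balanced functions $\psi^s$ and their monotonicity, which is reused verbatim in the uniqueness proof (Theorem \ref{thm:unique_equilibrium}) and in the comparison theorems of Section \ref{sec:impacts}; a pure Brouwer argument yields existence but none of that structure. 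Both arguments rely on the same two facts you isolate: $g^{ys}$ is a convex combination of $T^s_{y0},T^s_{y1}\in(0,1)$, and continuity of the composition $g^{ys}\circ\theta^s$ under Assumption \ref{ass:mono-inference} and the continuity hypothesis on $\theta^s$.
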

While an equilibrium exists under any set of transitions, its specific property \xr{(e.g., quantity, value, etc.)} highly depends on 
% the transitions $T_{yd}^s, y={0,1}, d={0,1}$.  
\rt{transition probabilities which specify different \xr{user dynamics.}%scenarios.
}
We focus on two scenarios given in the condition below.

\begin{con}\label{con:transition}
$\forall s\in \{a,b\}$,
\begin{enumerate*}[label=(\Alph*),itemjoin={; \quad}]
    \item $T^s_{01} \leq T^s_{00}$ and  $T^s_{11} \leq T^s_{10}$ \label{con:transition_I}
     \item $T^s_{01} \geq T^s_{00}$ and  $T^s_{11} \geq T^s_{10}$.\label{con:transition_II}
\end{enumerate*}
\end{con}
As mentioned, transitions %\rt{probabilities}
\xr{$T_{yd}^s$} 
characterize the ability of individuals from $\mathcal{G}_s$ to maintain/improve their future qualifications, \xr{this value summarizes individual's behaviors}. On one hand, an accepted individual may feel less motivated to remain qualified (if it was) or become qualified (if it was not). On the other hand, the accepted individual may have access to better resources or feel more inspired to remain or become qualified. These competing factors (referred to later as the ``\rt{lack of }motivation'' effect and the ``leg-up'' effect, respectively) may work simultaneously, and the net effect can be context dependent. 
Condition \ref{con:transition}\ref{con:transition_I} (resp. Condition \ref{con:transition}\ref{con:transition_II}) suggests that the first (resp.~second) effect is dominant for both qualified and unqualified individuals. There are two other combinations: (C) $T^s_{01} \geq T^s_{00}$ and  $T^s_{11} \leq T^s_{10}$; (D) $T^s_{01} \leq T^s_{00}$ and  $T^s_{11} \geq T^s_{10}$, \xr{under which the qualified and unqualified are dominant by different effects}. These cases incur more uncertainty; slight changes in feature distributions or transitions may result in opposite conclusions. More discussions are in Appendix \ref{app:discussion}.

Given the existence of an equilibrium, Theorem \ref{thm:unique_equilibrium} further introduces sufficient conditions for it to be unique. Based on the unique equilibrium, we can evaluate and compare policies (Sec. \ref{sec:impacts}), and design effective interventions to promote long-term  equality and/or the overall qualifications (Sec. \ref{sec:interventions}).

\begin{theorem}[Uniqueness of  equilibrium]\label{thm:unique_equilibrium}
Consider a \rt{decision-making system with dynamics}
% dynamics system 
\eqref{eq:dynamics} 
% under 
\rt{and} either unconstrained or fair optimal threshold policy. Let $ h^s(\theta^s(\alpha^a,\alpha^b)) \coloneqq \frac{1-g^{1s}(\theta^s(\alpha^a,\alpha^b))}{g^{0s}(\theta^s(\alpha^a,\alpha^b))}, s\in\{a,b\}$. Under Assumptions \ref{ass:mono-inference} and \ref{ass:fair_prob}, 
a sufficient condition for \eqref{eq:dynamics} to have a unique equilibrium is as follows, $\forall s\in\{a,b\}$: 
\begin{enumerate}[noitemsep,topsep=0pt,leftmargin=*]
\item Under Condition \ref{con:transition}\ref{con:transition_I}, $\big|\frac{\partial h^s(\theta^s(\alpha^a,\alpha^b))}{\partial \alpha^{- s}}\big| < 1$, $\forall \alpha^s \in [0,1]$, where $-s\coloneqq \{a,b\}\setminus s$;
\item Under Condition \ref{con:transition}\ref{con:transition_II}, $\big|\frac{\partial h^s(\theta^s(\alpha^a,\alpha^b))}{\partial \alpha^{-s}}\big| < 1$ and $\big|\frac{\partial h^s(\theta^s(\alpha^a,\alpha^b))}{\partial \alpha^{s}}\big| < 1, \forall \alpha^a,\alpha^b \in [0,1]$.
\end{enumerate}
\end{theorem}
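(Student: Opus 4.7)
\emph{Proof proposal.} The plan is to recast equilibria as simultaneous solutions of a two-equation algebraic system in $(\alpha^a,\alpha^b)\in(0,1)^2$ and then to rule out multiple solutions by showing that a naturally associated $2\times 2$ matrix has strictly positive determinant. Imposing $\alpha^s_{t+1}=\alpha^s_t=\alpha^s$ in \eqref{eq:dynamics} and solving linearly for $\alpha^s$ yields the equivalent equilibrium condition
\[
H^s(\alpha^a,\alpha^b) := h^s\bigl(\theta^s(\alpha^a,\alpha^b)\bigr) = \frac{1-\alpha^s}{\alpha^s},\qquad s\in\{a,b\},
\]
and $T^s_{yd}\in(0,1)$ guarantees $g^{0s},\,1-g^{1s}\in(0,1)$, so every equilibrium lies strictly inside the open unit square. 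A short chain-rule calculation using $dg^{ys}/d\theta=(T^s_{y0}-T^s_{y1})G^s_y(\theta)$ gives $dh^s/d\theta\le 0$ under Condition~\ref{con:transition}\ref{con:transition_I} and $\ge 0$ under Condition~\ref{con:transition}\ref{con:transition_II}; composing with $\theta^s$, which is non-increasing in both $\alpha^a$ and $\alpha^b$ by the remark following Lemma~\ref{lemma:opt_fair_policy_eq}, we conclude $\partial H^s/\partial\alpha^{s'}\ge 0$ under~(A) and $\le 0$ under~(B) for all $s,s'\in\{a,b\}$.

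Next, suppose for contradiction that two distinct equilibria $(\alpha^a_i,\alpha^b_i)$, $i=1,2$, exist, and set $\delta^s:=\alpha^s_2-\alpha^s_1$. Subtracting the two copies of the equilibrium identity side by side and applying the integral mean-value theorem to $H^s$ and to $\alpha\mapsto(1-\alpha)/\alpha$ separately gives the homogeneous linear system $M(\delta^a,\delta^b)^\top=0$ with
\[
M = \begin{pmatrix} \overline{\partial H^a/\partial\alpha^a} + 1/(\alpha^a_1\alpha^a_2) & \overline{\partial H^a/\partial\alpha^b} \\[2pt] \overline{\partial H^b/\partial\alpha^a} & \overline{\partial H^b/\partial\alpha^b} + 1/(\alpha^b_1\alpha^b_2) \end{pmatrix},
\]
where the overlines denote averages of the corresponding partials along the segment joining the two candidates. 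Uniqueness reduces to showing $\det M>0$ in both scenarios, since then the kernel of $M$ is trivial and $(\delta^a,\delta^b)=0$.

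Under~(A) the argument is essentially diagonal dominance with strict slack: both diagonal summands are non-negative, so each diagonal entry of $M$ exceeds $1/(\alpha^s_1\alpha^s_2)>1$, while each off-diagonal entry has modulus strictly below $1$ by hypothesis, giving
\[
\det M \;\ge\; \frac{1}{(\alpha^a_1\alpha^a_2)(\alpha^b_1\alpha^b_2)} - \bigl|\overline{\partial H^a/\partial\alpha^b}\bigr|\bigl|\overline{\partial H^b/\partial\alpha^a}\bigr| \;>\; 1-1 \;=\; 0.
\]
Under~(B) the diagonal summands are non-positive, so the diagonal entries become $1/(\alpha^s_1\alpha^s_2)-|\overline{\partial H^s/\partial\alpha^s}|$, and the \emph{second} hypothesis $|\partial H^s/\partial\alpha^s|<1\le 1/(\alpha^s_1\alpha^s_2)$ is exactly what keeps these entries strictly positive; the off-diagonal product is now a product of two non-negative magnitudes, and one verifies $\det M>0$ by combining the strict inequalities from both hypotheses with the strict bound $\alpha^s_i<1$ inherited from the open-interval constraint on equilibria.

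The main obstacle is case~(B). Unlike~(A), where positivity of $\det M$ follows immediately from independent bounds on diagonal and off-diagonal entries, in~(B) both the self- and cross-derivatives of $H^s$ point in the same (non-positive) direction and can in principle nearly cancel the $1/(\alpha^s_1\alpha^s_2)$ singularity that the right-hand side $(1-\alpha)/\alpha$ supplies. The proof therefore must exploit the strict inequalities in \emph{both} stated hypotheses jointly, together with the structural link $\partial H^s/\partial\alpha^a:\partial H^s/\partial\alpha^b = \partial\theta^s/\partial\alpha^a:\partial\theta^s/\partial\alpha^b$ that the chain rule imposes on the four averaged partials; it is precisely the added bound on the self-derivative that prevents the diagonal of $M$ from collapsing to zero, which is why the theorem lists two conditions for~(B) but only one for~(A).
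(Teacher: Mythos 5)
Your reduction to the balanced equations $h^s(\theta^s(\alpha^a,\alpha^b))=\tfrac{1}{\alpha^s}-1$ and your monotonicity bookkeeping (sign of $dh^s/d\theta$ composed with a non-increasing $\theta^s$) match the paper's setup, but from there you take a genuinely different route: a global-injectivity argument via the mean value theorem and a determinant, whereas the paper fixes $\alpha^{-s}$, shows the one-dimensional balanced equation has a unique root so that a ``balanced function'' $\psi^s(\alpha^{-s})$ is well defined, proves $\psi^a,\psi^b$ are monotone with slope of magnitude less than $1$, and concludes by a contraction-type argument that the two curves cross exactly once. Your case under Condition \ref{con:transition}\ref{con:transition_I} is sound: there the diagonal summands $\overline{\partial H^s/\partial\alpha^s}$ are non-negative, so each diagonal entry of $M$ strictly exceeds $1$ while the off-diagonal product is at most $1$, and $\det M>0$ follows.

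The gap is exactly where you flag it, in Condition \ref{con:transition}\ref{con:transition_II}, and it is not closed by what you wrote. Writing $A=1/(\alpha^a_1\alpha^a_2)$, $B=1/(\alpha^b_1\alpha^b_2)$ and $p,q,r,w\in[0,1)$ for the magnitudes of the four averaged partials (all non-positive in this case), you need $\det M=(A-p)(B-w)-qr>0$. The stated hypotheses only give $A,B>1$ and $p,q,r,w<1$, and these bounds are compatible with $\det M<0$: take $A=B=1.01$ and $p=q=r=w=0.99$, so $(A-p)(B-w)=4\times10^{-4}$ while $qr\approx 0.98$. Nothing in the hypotheses ties the size of $1/(\alpha^s_1\alpha^s_2)-1$ (which can be arbitrarily small when equilibria sit near $1$) to the off-diagonal product, and the ``structural link'' $\partial H^s/\partial\alpha^a:\partial H^s/\partial\alpha^b=\partial\theta^s/\partial\alpha^a:\partial\theta^s/\partial\alpha^b$ constrains only ratios within a row of $M$, not the relative size of the diagonal and off-diagonal products, so it does not rescue the estimate. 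For contrast, the paper spends the self-derivative bound differently: since $dl^s/d\alpha^s=-1/(\alpha^s)^2<-1$ while $|\partial h^s/\partial\alpha^s|<1$, the difference $l^s-h^s$ is strictly decreasing in $\alpha^s$, which yields a unique root for each fixed $\alpha^{-s}$ and hence well-defined monotone balanced functions; uniqueness of the intersection of the two resulting curves is then argued from the slope bounds rather than from a determinant. To repair your proof you would need either an additional quantitative relation between the partials and the equilibrium locations, or to switch to the curve-intersection decomposition at this step.
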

These sufficient conditions can further be satisfied if for the qualified ($y=1$) and the unqualified ($y=0$), transitions $T^s_{y1}$ and $T^s_{y0}$ are sufficiently close (see Corollary \ref{rmk:unique},  Appendix \ref{app:corollary1}), i.e., policies have limited influence on the qualification dynamics. It is worth noting that \xr{the conditions of Theorem \ref{thm:unique_equilibrium} only guarantee uniqueness of equilibrium but not stability, i.e., it is possible that the qualification rates oscillate and don't converge under this discrete-time dynamics (see examples on COMPAS data in Sec. \ref{sec:exp}).}
%\rt{under the conditions of Theorem \ref{thm:unique_equilibrium} there may  be oscillatory behavior of evolution due to the discrete steps of the decision-making process}.
%The \rt{unique equilibrium can be further achieved by making the dynamics \eqref{eq:dynamics} satisfy $L$-Lipschitz condition with $L<1$,} which is relatively stronger than the conditions in Theorem \ref{thm:unique_equilibrium} (see the comparison in Appendix \ref{app:discussion}). 
 \xr{The uniqueness can be guaranteed and further attained if the dynamics} \eqref{eq:dynamics} satisfies $L$-Lipschitz condition with $L<1$. \xr{However, Lipschitz condition} is relatively stronger than the condition in Theorem \ref{thm:unique_equilibrium} (see the comparison in Appendix \ref{app:discussion}). 

\begin{wrapfigure}[11]{r}{0.645\textwidth}
\centering
\vspace{-1.7em}
\includegraphics[trim=0cm 0.4cm 0cm 0,clip=true, width=0.22\textwidth]{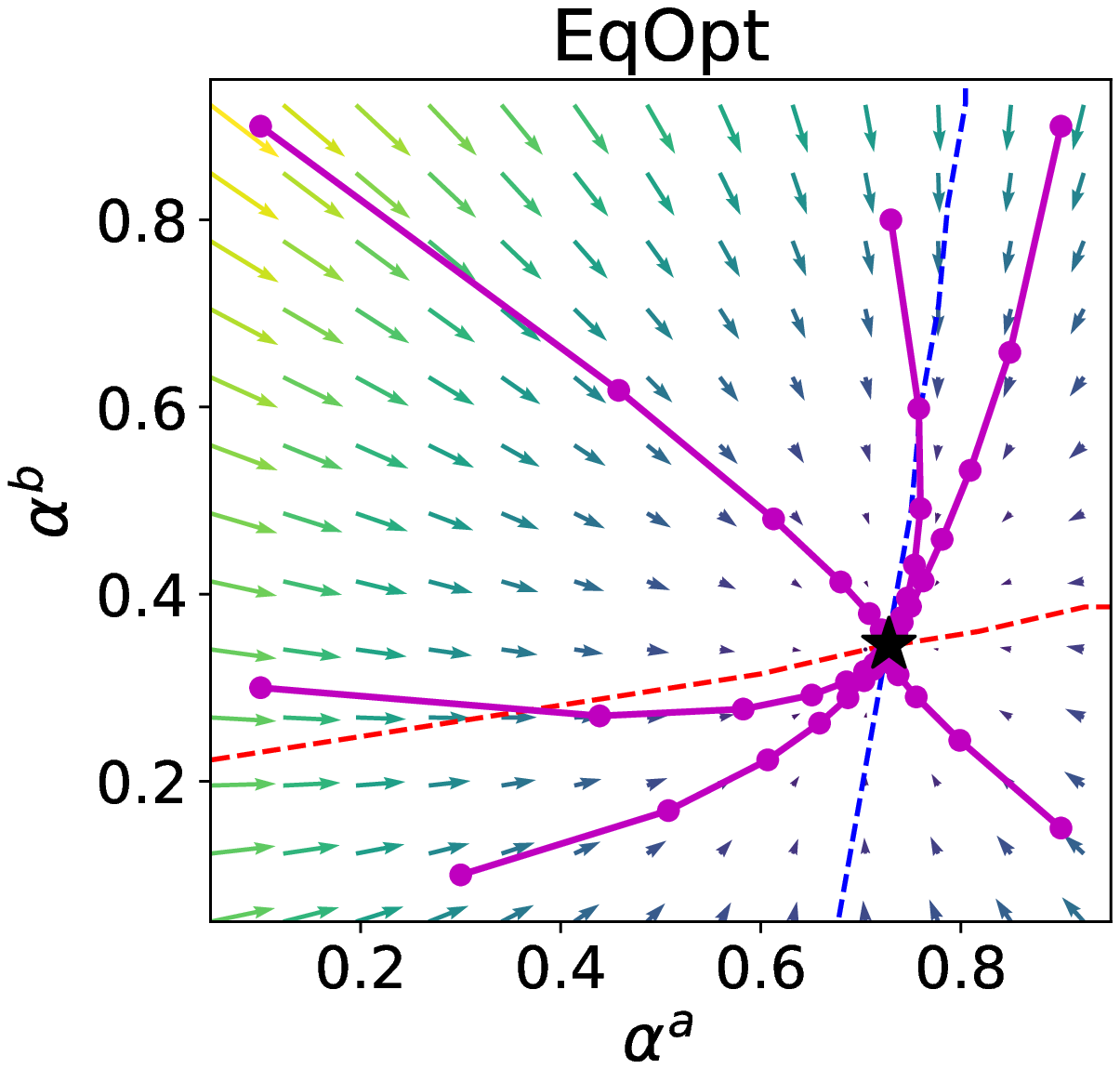}
\includegraphics[trim=0cm 0.4cm 0cm 0,clip=true, width=0.205\textwidth]{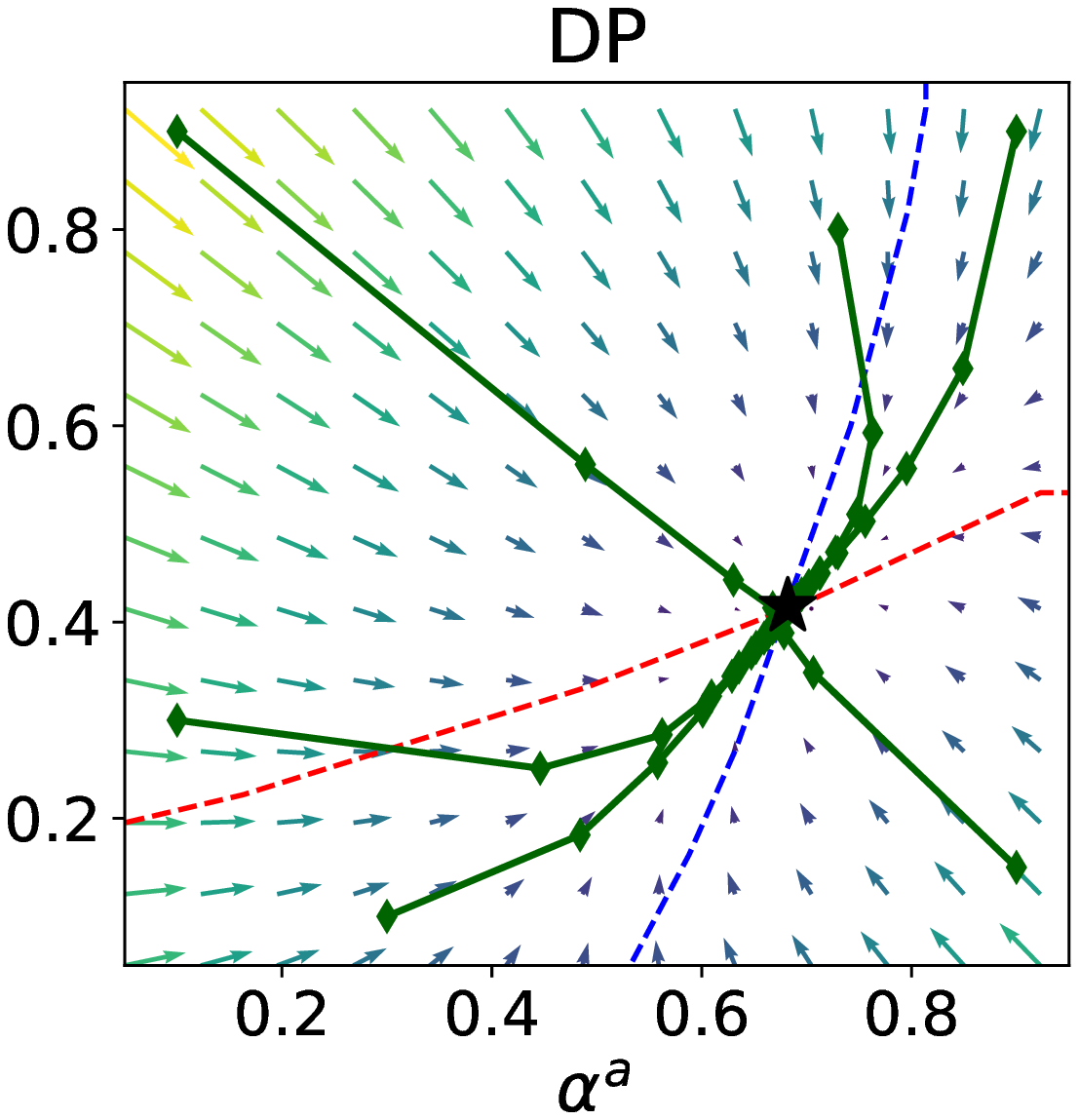}
\includegraphics[trim=0cm 0.4cm 0cm 0,clip=true, width=0.205\textwidth]{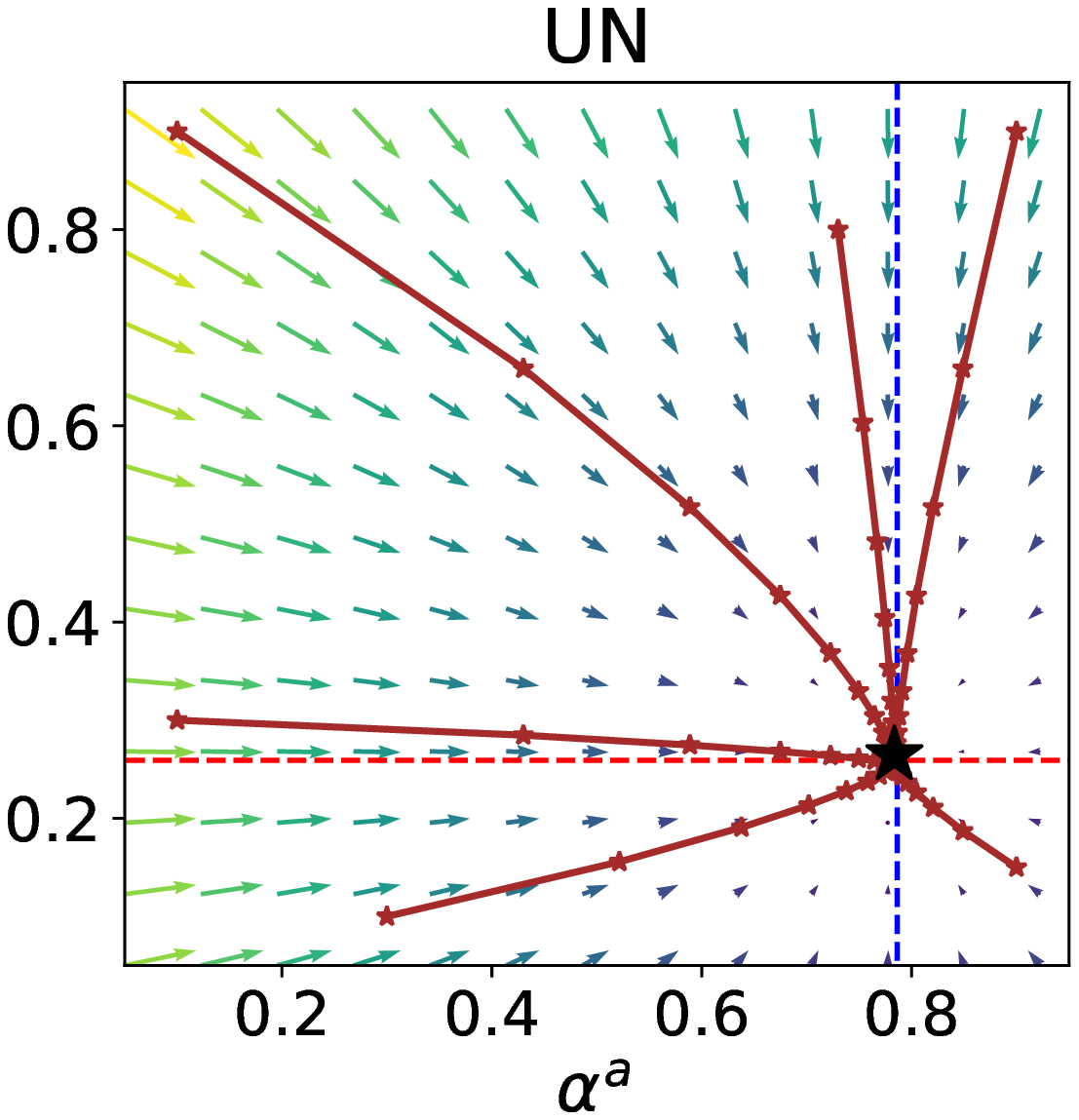}
\caption{Illustration of $\{(\alpha^a_t,\alpha^b_t)\}_t$
for a Gaussian case under \texttt{EqOpt}, \texttt{DP}, \texttt{UN} optimal policies: each plot shows 6 sample paths with each circle/diamond/star representing one pair of $(\alpha^a_t,\alpha^b_t)$. }
\label{fig:trajectory}
\end{wrapfigure}
Figure \ref{fig:trajectory} illustrates trajectories of qualification rates $(\alpha^a_t,\alpha^b_t)$ and the equilibrium for a Gaussian case under Condition \ref{con:transition}\ref{con:transition_II} (see details in Appendix \ref{app:exp}). Let $g^{ys}\coloneqq g^{ys}(\theta^s(\alpha^a,\alpha^b))$, the points ($\alpha^a$,$\alpha^b$) on the red, and blue dashed curves satisfy $\alpha^b = g^{0b}(1-\alpha^b)+ g^{1b}\alpha^b$ and $\alpha^a = g^{0a}(1-\alpha^a) + g^{1a}\alpha^a$, respectively. Their intersection (black star) is the equilibrium $(\widehat{\alpha}^a,\widehat{\alpha}^b)$. \rt{The sufficient conditions 
% on $\frac{\partial h^s(\theta^s(\alpha^a,\alpha^b))}{\partial \alpha^u}$ 
in Theorem \ref{thm:unique_equilibrium}}  guarantee these two curves have only one intersection. Moreover, observe that these two curves split the plane $\{(\alpha^a,\alpha^b):\alpha^a\in[0,1],\alpha^b\in[0,1]\}$ into four parts, which can be used for determining how $(\alpha^a_t,\alpha^b_t)$ will change at $t$. For example, if $(\alpha^a_t$,$\alpha^b_t)$ falls into the left side of the blue dashed curve, then $\alpha^a_{t+1} > \alpha^a_{t}$;  if $(\alpha^a_t$,$\alpha^b_t)$ falls into the lower side of the red dashed curve, then $\alpha^b_{t+1} > \alpha^b_{t}$.

\section{The Long-term Impact of Fairness Constraints}\label{sec:impacts} 
In this section, we analyze the long-term impact of imposing fairness constraints on the equality of group qualification. We will do so in the presence of {\em natural equality %\rt{and inequality} 
\xr{(and inequality)}}
\cite{socialequality2019} where equitable equilibria are attained naturally without imposing additional constraints (in our context, this means attaining  $\widehat{\alpha}^a_{\texttt{UN}} =\widehat{\alpha}^b_{\texttt{UN}}$ using  unconstrained polices).

Although there may exist multiple equilibria, in this section we will assume the conditions in Theorem \ref{thm:unique_equilibrium} hold \rt{under Assumption \ref{ass:mono-inference} and \ref{ass:fair_prob}}
and limit ourselves to the unique equilibrium cases under \texttt{DP} and \texttt{EqOpt}, thereby providing a theoretical foundation and an illustration of how their long-term impact can be compared. As shown below, these short-term fairness interventions may not necessarily promote long-term equity, and their impact can be sensitive to feature distributions and transitions. A small change in either can lead to contrarian results, suggesting the importance of  understanding the underlying population.

\textbf{Long-term impact on natural equality.}
When there is natural equality, an unconstrained optimal policy will result in two groups converging to the same qualification rate, thus rendering fairness constraints is unnecessary. The interesting question here is whether applying a fairness constraint can disrupt the equality. The theorem below shows that the \texttt{DP} and \texttt{EqOpt} fairness will do harm if the feature distributions are different.

\begin{theorem}\label{thm:general}
\label{thm:fair_policy_unfair}
For any feature distribution \rt{$G^s_y(x)$}
% $\{G^s_y(x)\}_{s,y}$, $\forall s\in \{a,b\}, \forall y = \{0,1\}$, 
and $\forall \alpha_{\texttt{UN}}\in (0,1)$, there exist transitions $\{T_{yd}^s\}_{y,d,s}$ satisfying either Condition \ref{con:transition}\ref{con:transition_I} or Condition \ref{con:transition}\ref{con:transition_II} such that $\widehat{\alpha}_{\texttt{UN}}^a=\widehat{\alpha}_{\texttt{UN}}^b = \alpha_{\texttt{UN}}$. In this case, if $G_y^a(x)\neq G_y^b(x)$ (resp. $G_y^a(x)= G_y^b(x)$), then imposing either $\mathcal{C} = \texttt{DP}$ or \texttt{EqOpt} fair optimal policies will violate (resp. maintain) equality, i.e., $ \widehat{\alpha}_{\mathcal{C}}^a \neq \widehat{\alpha}_{\mathcal{C}}^b$ (resp. $ \widehat{\alpha}_{\mathcal{C}}^a = \widehat{\alpha}_{\mathcal{C}}^b$). 
\end{theorem}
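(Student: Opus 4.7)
The plan is to split the theorem into three sub-claims: (A) existence of transitions that yield natural equality $\widehat{\alpha}^a_{\texttt{UN}}=\widehat{\alpha}^b_{\texttt{UN}}=\alpha_{\texttt{UN}}$; (B) preservation of equality under the fair policy when $G^a_y=G^b_y$; and (C) violation of equality when $G^a_y\neq G^b_y$. For part (A), I would exploit that Lemma~\ref{lemma:opt_fair_policy_eq} pins the unconstrained threshold of group $s$ via the per-group equation $\gamma^s(\theta^{s*}_{\texttt{UN}})=u_-/(u_++u_-)$, whose solution depends only on $G^s_y$ and $\alpha_{\texttt{UN}}$ and is independent of the transitions. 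Since the unconstrained policy treats the two groups separately, their dynamics~\eqref{eq:dynamics} decouple; requiring $\alpha_{\texttt{UN}}$ to be a fixed point of group $s$ then reduces to the single linear constraint $g^{0s}(\theta^{s*}_{\texttt{UN}})/(1-g^{1s}(\theta^{s*}_{\texttt{UN}}))=\alpha_{\texttt{UN}}/(1-\alpha_{\texttt{UN}})$ on $(T^s_{00},T^s_{01},T^s_{10},T^s_{11})\in(0,1)^4$. The solution set is a three-dimensional affine slice of the unit hypercube, and a direct construction (e.g., first fixing $T^s_{00},T^s_{10}$ with the required inequality pattern, then solving for $T^s_{01},T^s_{11}$) shows it meets the interior of either Condition~\ref{con:transition}\ref{con:transition_I} or Condition~\ref{con:transition}\ref{con:transition_II}.

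For part (B), I would specialize part (A) by additionally taking $T^a_{yd}=T^b_{yd}$ for every $(y,d)$, which is consistent because both groups must satisfy the same natural-equality equation when $G^a_y=G^b_y$. The decision problem is then strictly invariant under the relabeling $a\leftrightarrow b$, so both the fair constraint in~\eqref{eq:fair_constraint_e} and the optimality equation~\eqref{eq:opt_fair_policy} are symmetric in $(\theta^a,\alpha^a)\leftrightarrow(\theta^b,\alpha^b)$ and in particular force $\theta^{a*}_{\mathcal{C}}(\alpha,\alpha)=\theta^{b*}_{\mathcal{C}}(\alpha,\alpha)$. Consequently the diagonal $\{\alpha^a=\alpha^b\}$ is invariant under the dynamics~\eqref{eq:dynamics}, and Theorem~\ref{thm:unique_equilibrium}'s uniqueness guarantee places the fair equilibrium on this invariant diagonal, giving $\widehat{\alpha}^a_{\mathcal{C}}=\widehat{\alpha}^b_{\mathcal{C}}$.

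For part (C), I would use the three-dimensional freedom in part (A) to select transitions for which the fair equilibrium is off-diagonal, then argue by contradiction. Suppose $\widehat{\alpha}^a_{\mathcal{C}}=\widehat{\alpha}^b_{\mathcal{C}}=\alpha^*$; the triple $(\theta^a,\theta^b,\alpha^*)$ must then simultaneously satisfy four relations—the fair constraint in~\eqref{eq:fair_constraint_e}, the optimality equation~\eqref{eq:opt_fair_policy}, and the two fixed-point equations $\alpha^*=g^{0s}(\theta^s)(1-\alpha^*)+g^{1s}(\theta^s)\alpha^*$ for $s\in\{a,b\}$—that is, four equations in three unknowns. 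Because $G^a_y\neq G^b_y$ destroys the swap-symmetry exploited in part (B), these equations are generically independent; the cleanest route is an implicit-function-theorem perturbation off the symmetric reference $G^a_y=G^b_y$ of part (B), tracking how the fair equilibrium displaces off the diagonal as $G^b_y$ is perturbed while both groups' natural-equality constraints are re-solved to pick compensating transitions.

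The main obstacle is the final non-degeneracy step of part (C): making rigorous that the overdetermined system is inconsistent, rather than just invoking a generic dimension count. Concretely, one must verify that the Jacobian of the four relations with respect to $(\theta^a,\theta^b,\alpha^*)$ at the symmetric reference has full rank along the perturbation direction introduced by moving $G^b_y$ away from $G^a_y$. The strict monotone-likelihood-ratio hypothesis in Assumption~\ref{ass:mono-inference} and the monotonicity of $\mathbb{P}(X=x\mid S=s)/\mathcal{P}^s_{\mathcal{C}}(x)$ in Assumption~\ref{ass:fair_prob} give sign control on the relevant partial derivatives of $\gamma^s$ and of the CDF tails, which should be enough to show that the two components of the equilibrium displacement are unequal and hence that the fair equilibrium genuinely leaves the diagonal.
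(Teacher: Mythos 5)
Your parts (A) and (B) are essentially the paper's argument: the paper also characterizes natural equality by writing $\alpha_{\texttt{UN}}$ simultaneously as a convex combination of $T^s_{10},T^s_{11}$ (weights $\mathbb{G}^s_1(\theta^s_{\texttt{UN}})$) and of $T^s_{00},T^s_{01}$ (weights $\mathbb{G}^s_0(\theta^s_{\texttt{UN}})$), which is your affine slice, and part (B) follows because when $G^a_y=G^b_y$ and $\alpha^a=\alpha^b$ the fair thresholds coincide with the unconstrained ones. One caveat on (B): the paper does not restrict to $T^a_{yd}=T^b_{yd}$, and its argument does not need to --- it shows the two balanced curves both cross the diagonal at $\alpha_{\texttt{UN}}$ regardless of whether the groups' transitions agree. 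Your swap-symmetry argument only covers the symmetric sub-family, and also quietly needs $p_a=p_b$ (or the observation that the unconstrained optimum is already feasible) for the relabeling invariance of the objective in \eqref{eq:fair_constraint_e}.

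The genuine gap is part (C). Your plan (dimension count on an overdetermined system, then an implicit-function-theorem perturbation off the symmetric reference) would at best show the equilibrium leaves the diagonal for $G^b_y$ \emph{near} $G^a_y$ and for \emph{generic} data, whereas the theorem asserts $\widehat{\alpha}^a_{\mathcal{C}}\neq\widehat{\alpha}^b_{\mathcal{C}}$ for every $G^a_y\neq G^b_y$; you correctly identify the non-degeneracy verification as the obstacle but do not supply it, and no continuation argument is offered to pass from local to global. The paper avoids all of this with a direct monotonicity argument that you are missing. By Lemma \ref{lemma:opt_fair_policy_eq}, Eqn.~\eqref{eq:opt_fair_policy} is a zero of a weighted sum of two group-wise terms, each vanishing exactly at that group's unconstrained threshold and monotone in the threshold (Assumptions \ref{ass:mono-inference}--\ref{ass:fair_prob}). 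Since $G^a_y\neq G^b_y$ forces $\theta^s_{\mathcal{C}}(\alpha_{\texttt{UN}},\alpha_{\texttt{UN}})\neq\theta^s_{\texttt{UN}}(\alpha_{\texttt{UN}})$, the two terms must have opposite signs, i.e.\ one group's fair threshold lies strictly above its unconstrained threshold and the other's strictly below. Under Condition \ref{con:transition}\ref{con:transition_I} or \ref{con:transition}\ref{con:transition_II} the map $h^s(\theta)=\frac{1-g^{1s}(\theta)}{g^{0s}(\theta)}$ is monotone in $\theta$, so the diagonal crossing points $\widetilde{\alpha}^a_{\mathcal{C}},\widetilde{\alpha}^b_{\mathcal{C}}$ of the two balanced curves are pushed to \emph{opposite} sides of $\alpha_{\texttt{UN}}$; monotonicity of the balanced curves (from the proof of Theorem \ref{thm:unique_equilibrium}) then places their unique intersection off the diagonal. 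This is exact, quantitative (it even tells you which group becomes advantaged), and covers arbitrary distribution pairs, none of which your perturbative route delivers. To repair your proposal you would need to replace the generic-independence step with this bracketing argument, or otherwise prove the Jacobian non-degeneracy uniformly over all admissible $(G^a_y,G^b_y)$, which is essentially the same monotonicity fact in disguise.
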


Theorem \ref{thm:general} shows that $\forall \alpha_{\texttt{UN}}\in(0,1)$, there exists model parameters under which $\alpha_{\texttt{UN}}$ is the equilibrium and natural equality is attained. Also, natural equality is not disrupted by imposing either fairness constraint when \rt{feature distributions}
% $G^s_{y}(x)$ 
are the same across different groups (referred to as {\em demographic-invariant} below). However, imposing either constraint will lead to unequal outcomes if \rt{feature distributions}
% $G^s_{y}(x)$ 
are diverse across groups (referred to as {\em demographic-variant} below), which is more likely to happen in reality. Thus, in these natural equality cases, imposing fairness will often do harm.

\textbf{Long-term impact on natural inequality.}
Natural inequality, i.e., $\widehat{\alpha}^a_{\texttt{UN}} \neq \widehat{\alpha}^b_{\texttt{UN}}$, is more common than natural equality which only occurs under specific model parameters. This difference in qualification rates at equilibrium typically stems from the fact that either feature distributions or transitions \xr{or both} are different across different groups. Thus, below we study the impact of imposing fairness by considering these two sources of inequality separately, \xr{and we aim to examine whether fairness constraints can address the inequality caused by each}. Let \textit{disadvantaged group} be the group with a lower qualification rate at equilibrium. 

\emph{Demographic-invariant feature distribution with demographic-variant transition.}
In this case, we have
\rt{the same feature distributions but different transitions in different groups, \xr{i.e., $G_y^s=G_y^b, T^a_{yd}\neq T^b_{yd}$.}}
% feature distributions $G_y^a(x)=G_y^b(x)$ but different transitions.
A real-world example is  college admission based on ACT/SAT scores: given the same qualification state, score distributions may be the same regardless of the applicant's socio-economic status, but the economically advantaged may be able to afford more investments and effort to improve their score after a rejection. 
\begin{theorem}
\label{thm:comp_dp_eqopt}
Under Condition \ref{con:transition}\ref{con:transition_I},
\texttt{DP} and \texttt{EqOpt} fairness exacerbate inequality, i.e., $|\widehat{\alpha}^a_{\mathcal{C}}-\widehat{\alpha}^b_{\mathcal{C}}| \geq |\widehat{\alpha}^a_{\texttt{UN}}-\widehat{\alpha}^b_{\texttt{UN}}| $; under Condition \ref{con:transition}\ref{con:transition_II}, \texttt{DP} and \texttt{EqOpt} fairness mitigate
inequality, i.e., $|\widehat{\alpha}^a_{\mathcal{C}}-\widehat{\alpha}^b_{\mathcal{C}}| \leq |\widehat{\alpha}^a_{\texttt{UN}}-\widehat{\alpha}^b_{\texttt{UN}}|$. Moreover, the disadvantaged group remains  disadvantaged in both cases, i.e.,  $(\widehat{\alpha}^a_{\texttt{UN}}-\widehat{\alpha}^b_{\texttt{UN}})(\widehat{\alpha}^a_{\mathcal{C}}-\widehat{\alpha}^b_{\mathcal{C}}) \geq 0$.
\end{theorem}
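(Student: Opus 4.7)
The plan is to reduce the claim to scalar fixed-point equations in each group and then propagate threshold inequalities through them. Without loss of generality, assume group $b$ is disadvantaged at the unconstrained equilibrium, i.e.\ $\widehat{\alpha}^a_{\texttt{UN}} \geq \widehat{\alpha}^b_{\texttt{UN}}$. Because $G^a_y = G^b_y =: G_y$, the qualification profile $\gamma^s(x;\alpha)$ depends on the group only through $\alpha$; hence unconstrained thresholds are given by a single continuous, strictly decreasing function $\psi(\alpha)$ solving $\gamma(\psi(\alpha);\alpha) = c := u_-/(u_++u_-)$, and in particular $\theta^{a*}_{\texttt{UN}} \leq \theta^{b*}_{\texttt{UN}}$.

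From the fixed-point form of dynamics \eqref{eq:dynamics}, each group's equilibrium can be written as $\widehat{\alpha}^s = \phi^s(\theta^s)$ with $\phi^s(\theta) := g^{0s}(\theta)/(1 - g^{1s}(\theta) + g^{0s}(\theta))$. Using $g^{ys}(\theta) = T^s_{y1} + (T^s_{y0}-T^s_{y1}) F_y(\theta)$ with $F_y$ the CDF of $G_y$, a direct differentiation shows $\phi^s$ is non-decreasing in $\theta$ under Condition A and non-increasing under Condition B. Define $H^s(\alpha) := \phi^s(\psi(\alpha))$; this composition is non-increasing under A and non-decreasing under B. The unconstrained equilibrium $\widehat{\alpha}^s_{\texttt{UN}}$ is the unique fixed point of $H^s$, and together with continuity and the boundary values $H^s(0)\geq 0$, $H^s(1)\leq 1$, this forces the single-crossing property: $H^s(\alpha)-\alpha \geq 0$ for $\alpha \leq \widehat{\alpha}^s_{\texttt{UN}}$ and $\leq 0$ beyond.

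The crux is a threshold-inequality lemma for fair equilibria: if $\widehat{\alpha}^a_{\mathcal{C}} \geq \widehat{\alpha}^b_{\mathcal{C}}$, then $\theta^{a*}_{\mathcal{C}} \geq \psi(\widehat{\alpha}^a_{\mathcal{C}})$ and $\theta^{b*}_{\mathcal{C}} \leq \psi(\widehat{\alpha}^b_{\mathcal{C}})$. For \texttt{DP}, the PR-equality constraint combined with monotonicity of PR in $\alpha$ (increasing, since $1-F_1 \geq 1-F_0$ under MLR) and $\theta$ (decreasing) forces the higher-$\alpha$ group to have the larger threshold; plugging this into Lemma \ref{lemma:opt_fair_policy_eq}'s identity $p_a(\gamma^a(\theta^{a*}_{\mathcal{C}})-c) + p_b(\gamma^b(\theta^{b*}_{\mathcal{C}})-c) = 0$ and using the strict monotonicity of $\psi$, the only admissible sign pattern is $\gamma^a(\theta^{a*}_{\mathcal{C}}) \geq c \geq \gamma^b(\theta^{b*}_{\mathcal{C}})$, which translates to the threshold inequalities via monotonicity of $\gamma^s$ in $\theta$. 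For \texttt{EqOpt}, $G^a_1 = G^b_1$ forces a common threshold $\bar\theta$, and the reduced identity pins $G_0(\bar\theta)/G_1(\bar\theta)$ to the mediant $(p_a\alpha^a+p_b\alpha^b)(1-c)/(c[p_a(1-\alpha^a)+p_b(1-\alpha^b)])$, which lies between the two groups' unconstrained ratios $\alpha^s(1-c)/((1-\alpha^s)c)$ and thereby places $\bar\theta$ between $\psi(\widehat{\alpha}^a_{\mathcal{C}})$ and $\psi(\widehat{\alpha}^b_{\mathcal{C}})$.

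Substituting the threshold inequalities into $\phi^s$ and invoking the single-crossing property then gives the main conclusions. In the natural-ordering case $\widehat{\alpha}^a_{\mathcal{C}} \geq \widehat{\alpha}^b_{\mathcal{C}}$: under Condition A (so $\phi^s$ non-decreasing) I get $\widehat{\alpha}^a_{\mathcal{C}} \geq \widehat{\alpha}^a_{\texttt{UN}}$ and $\widehat{\alpha}^b_{\mathcal{C}} \leq \widehat{\alpha}^b_{\texttt{UN}}$, so the gap widens and $a$ remains advantaged; under Condition B both inequalities flip and the gap shrinks. The reversed ordering $\widehat{\alpha}^a_{\mathcal{C}} < \widehat{\alpha}^b_{\mathcal{C}}$ is ruled out under Condition B because the swapped version of the same chain would yield $\widehat{\alpha}^a_{\mathcal{C}} \geq \widehat{\alpha}^a_{\texttt{UN}} \geq \widehat{\alpha}^b_{\texttt{UN}} \geq \widehat{\alpha}^b_{\mathcal{C}}$, contradicting the swap; under Condition A, it is excluded via uniqueness of the fair equilibrium together with a continuous-deformation argument in the fairness strength. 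I expect the threshold-inequality lemma and the exclusion of the reversed case under Condition A to be the main obstacles: although the intuition (the fair constraint restores balance by raising the advantaged group's threshold and lowering the disadvantaged group's) is clear, pinning down the Lemma \ref{lemma:opt_fair_policy_eq} sign pattern against all alternative configurations requires carefully coupling the PR/TPR equality, the weighted-sum identity, and the strict monotonicities of $\psi$ and $\gamma^s$.
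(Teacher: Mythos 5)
Your route is genuinely different from the paper's. The paper never isolates a per-group threshold inequality; instead it works entirely with the two-dimensional balanced curves $\alpha^a=\psi^a_{\mathcal{C}}(\alpha^b)$ and $\alpha^b=\psi^b_{\mathcal{C}}(\alpha^a)$ from the proof of Theorem~\ref{thm:unique_equilibrium}, and its central observation is that because $G^a_y=G^b_y$, the fair and unconstrained thresholds coincide on the diagonal $\alpha^a=\alpha^b$, so each $\widehat{\alpha}^s_{\texttt{UN}}$ is the diagonal fixed point of the corresponding fair balanced function; the conclusions then follow from the monotonicity and slope bounds of those curves. Your decomposition into the scalar maps $\phi^s$, $H^s$ with the single-crossing property, together with the threshold-inequality lemma ($\theta^{a*}_{\mathcal{C}}\geq\psi(\widehat{\alpha}^a_{\mathcal{C}})$, $\theta^{b*}_{\mathcal{C}}\leq\psi(\widehat{\alpha}^b_{\mathcal{C}})$ when $\widehat{\alpha}^a_{\mathcal{C}}\geq\widehat{\alpha}^b_{\mathcal{C}}$) extracted from Eqn.~\eqref{eq:opt_fair_policy}, is sound where you carry it out: the \texttt{DP} sign-pattern argument and the \texttt{EqOpt} mediant computation both check out, the monotonicity of $\phi^s$ in $\theta$ is correct in both regimes, and the gap-widening/gap-shrinking conclusions follow once the ordering $\widehat{\alpha}^a_{\mathcal{C}}\geq\widehat{\alpha}^b_{\mathcal{C}}$ is in hand. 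Your version arguably makes the mechanism more transparent (fairness raises the advantaged group's threshold above its unconstrained level and lowers the disadvantaged group's, and the two transition regimes flip the effect of a threshold change on equilibrium qualification).

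The genuine gap is the exclusion of the reversed ordering $\widehat{\alpha}^a_{\mathcal{C}}<\widehat{\alpha}^b_{\mathcal{C}}$ under Condition~\ref{con:transition}\ref{con:transition_I}, which you defer to ``uniqueness of the fair equilibrium together with a continuous-deformation argument in the fairness strength'' without constructing the homotopy or proving continuity of the equilibrium along it. This cannot be waved away: under Condition~\ref{con:transition}\ref{con:transition_I} the swapped version of your chain yields only $\widehat{\alpha}^b_{\mathcal{C}}\geq\widehat{\alpha}^b_{\texttt{UN}}$ and $\widehat{\alpha}^a_{\mathcal{C}}\leq\widehat{\alpha}^a_{\texttt{UN}}$, which is perfectly consistent with a sign flip, so unlike Condition~\ref{con:transition}\ref{con:transition_II} there is no contradiction to be had from the scalar maps alone. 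The missing ingredient is quantitative control on how each group's fair balanced response moves with the \emph{other} group's qualification rate --- precisely the cross-derivative bounds $\big|\frac{d\psi^s_{\mathcal{C}}(\alpha^{-s})}{d\alpha^{-s}}\big|<1$ inherited from the hypotheses of Theorem~\ref{thm:unique_equilibrium}. With those bounds one gets, from the diagonal crossings, $\widehat{\alpha}^a_{\mathcal{C}}+\widehat{\alpha}^b_{\mathcal{C}}\geq 2\widehat{\alpha}^a_{\texttt{UN}}$ along $\mathcal{C}_1$ and $\widehat{\alpha}^a_{\mathcal{C}}+\widehat{\alpha}^b_{\mathcal{C}}\leq 2\widehat{\alpha}^b_{\texttt{UN}}$ along $\mathcal{C}_2$ in the reversed configuration, forcing $\widehat{\alpha}^a_{\texttt{UN}}\leq\widehat{\alpha}^b_{\texttt{UN}}$ and hence a contradiction. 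Your decoupled maps $H^s$ discard exactly this cross-coupling information, so to close the argument you either need to reintroduce it (e.g., by proving the two-curve slope estimate above) or actually build and justify the deformation argument; as written, the sign-preservation claim $(\widehat{\alpha}^a_{\texttt{UN}}-\widehat{\alpha}^b_{\texttt{UN}})(\widehat{\alpha}^a_{\mathcal{C}}-\widehat{\alpha}^b_{\mathcal{C}})\geq 0$ under Condition~\ref{con:transition}\ref{con:transition_I} is unproved.
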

\begin{wrapfigure}[6]{r}{0.5\textwidth}
\vspace{-3.2em}
\centering
\includegraphics[trim=0cm 0cm 0cm 0cm,clip=true, width=0.5\textwidth]{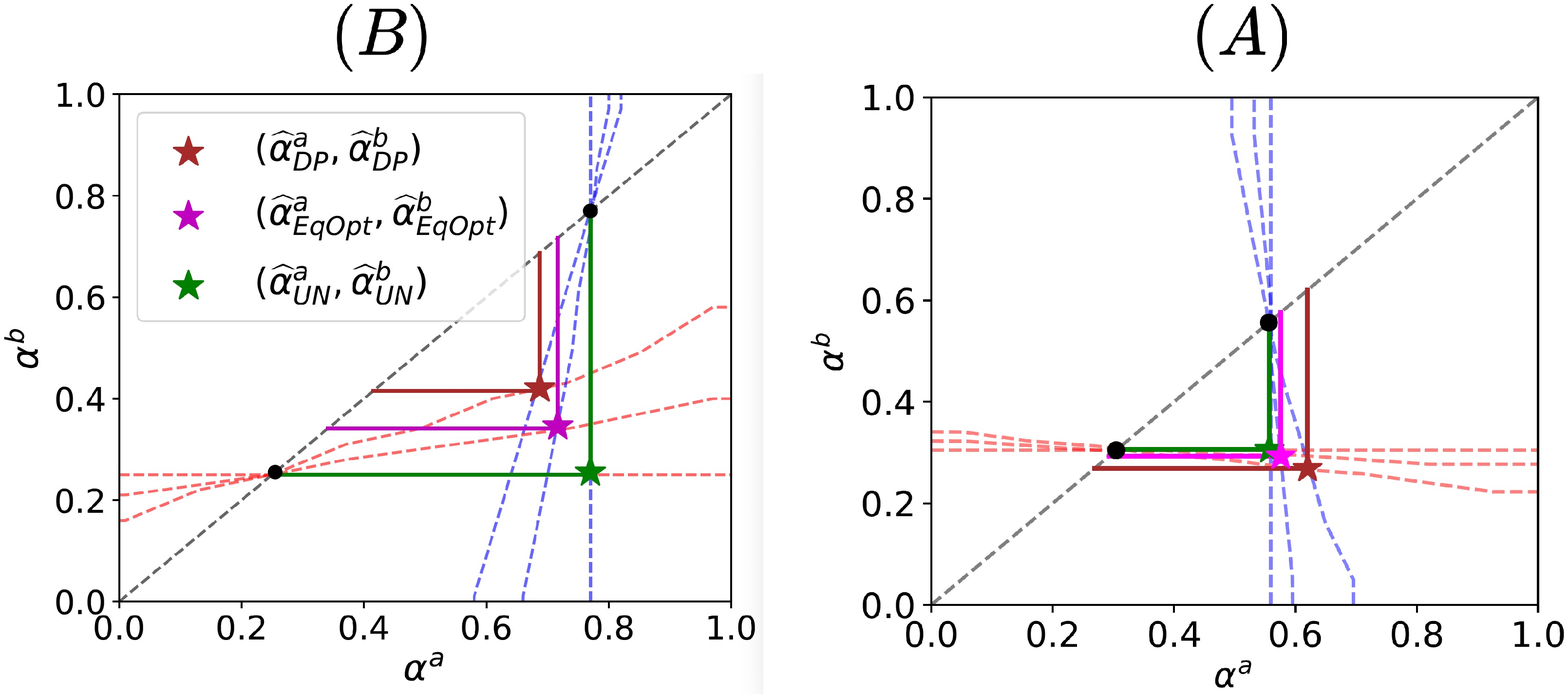}
\end{wrapfigure}
This theorem shows that imposing fairness only helps when the ``leg-up'' effect is more prominent than the ``\rt{lack of }motivation'' effect; alternatively, this suggests that when the ``\rt{lack of }motivation'' effect is dominant, imposing fairness should be accompanied by other support structure to dampen this effect (e.g., by helping those accepted to become or remain qualified).
Theorem \ref{thm:comp_dp_eqopt} is illustrated in the plot to the right, where transitions satisfy \xr{Condition \ref{con:transition}\ref{con:transition_I}-\ref{con:transition_II}} %(mitigation happens) 
and $G^a_y(x) = G^b_y(x)$ is Gaussian distributed.%The
\xr{Each} plot includes 3 pairs of red/blue dashed curves corresponding to 3 policies (\texttt{EqOpt}, \texttt{DP}, \texttt{UN}). Points  $(\alpha^a,\alpha^b)$ on these curves satisfy $\alpha^b = g^{0b}\xr{(\alpha^a,\alpha^b)\cdot}(1-\alpha^b)+ g^{1b}\xr{(\alpha^a,\alpha^b)\cdot}\alpha^b$ and $\alpha^a = g^{0a}\xr{(\alpha^a,\alpha^b)\cdot}(1-\alpha^a) + g^{1a}\xr{(\alpha^a,\alpha^b)\cdot}\alpha^a$, respectively. Each intersection (colored star) is an equilibrium $(\widehat{\alpha}^a_{\mathcal{C}},\widehat{\alpha}^b_{\mathcal{C}})$; the length of colored segments represents $|\widehat{\alpha}^a_{\mathcal{C}}-\widehat{\alpha}^b_{\mathcal{C}}|$. The black circle is the intersection of all three blue/red curves.

\emph{Demographic-variant feature distribution with demographic-invariant transition.} In this case,
we have 
\rt{the same transitions and different feature distributions in different groups, \xr{i.e., $G_y^s\neq G_y^b, T^a_{yd}= T^b_{yd}$.}}
% transitions $T_{yd}^a = T_{yd}^b = T_{yd}$, $\forall y \in \{0,1\}$, $\forall d \in \{0,1\}$, but different feature distributions. 
In the same example of college admission this is a case where the ACT/SAT scores are biased against a certain group but there is no difference in how different groups react to the decision. Here, we will focus on a class of feature distributions where those qualified have the same feature distribution regardless of group membership, while those unqualified from $\mathcal{G}_b$ are more likely to have lower features than those unqualified from $\mathcal{G}_a$. This is given in the condition below.

\begin{minipage}{0.75\textwidth}
\begin{con}\label{ass:inv_transition}
\xr{$G^s_y(x)$ is continuous in $x\in\mathbb{R}$; ${G}_1^a(x) = {G}_1^b(x), \forall x\in\mathbb{R}$; $G_0^a(x)$ and $G_0^b(x)$ satisfy strict monotone likelihood ratio property, i.e., $\frac{G_0^a(x)}{G_0^b(x)}$ is strict increasing in $x\in\mathbb{R}$.}
%\rt{For all $x\in\mathbb{R}$, }\rt{$G^s_y(x)$} is continuous,\rt{${G}_1^a(x) = {G}_1^b(x)$, and}\rt{$\int_{-\infty}^{x}{G}_0^b(z)dz\geq \int_{-\infty}^{x}{G}_0^a(z)dz.$}\rt{Moreover, there exists} a unique $\widehat{x}$ such that \rt{$G_0^b(\widehat{x}) = G_0^a(\widehat{x})$} and that \rt{for $x < \widehat{x}$, we have}$G_0^b(x) > G_0^a(x)$ , while for $x > \widehat{x}$ $G_0^b(x) < G_0^a(x)$.
\end{con}
%For bell-shaped distributions (e.g., Normal, Cauchy, etc.), Condition \ref{ass:inv_transition} can be satisfied if ${G}_0^b(x)$ is a mean-shifted version of ${G}_0^a(x)$. An example is shown on the right.
\end{minipage}
\begin{minipage}{0.22\textwidth}
\includegraphics[trim=0.5cm 0cm 1.0cm 0.6cm,clip=true, width=\textwidth]{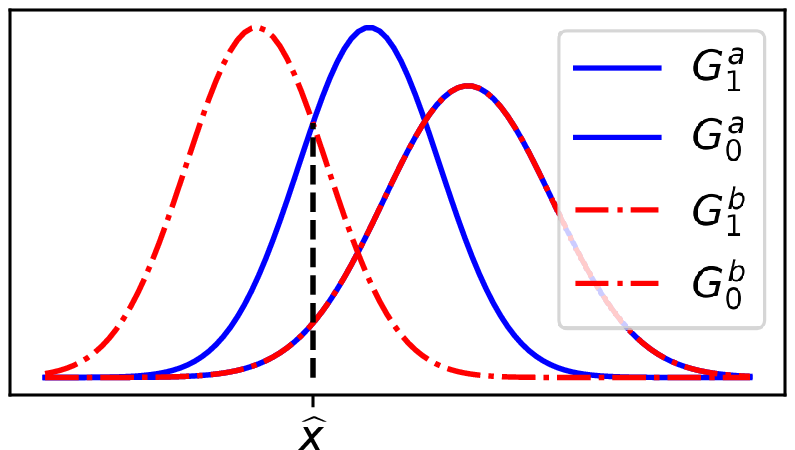}
\end{minipage}
\xr{Condition \ref{ass:inv_transition} also implies that $\int_{-\infty}^{x}{G}_0^b(z)dz\geq \int_{-\infty}^{x}{G}_0^a(z)dz, \forall x\in\mathbb{R}$. Let $\widehat{x}$ be defined such that $G_0^b(\widehat{x}) = G_0^a(\widehat{x})$ holds, which is unique. An example satisfying Condition \ref{ass:inv_transition} is shown on the right. }
\begin{theorem}\label{thm:transition}
Under Condition  \ref{con:transition}\ref{con:transition_II} and Condition \ref{ass:inv_transition}, if $ \frac{u_+}{u_-} \geq \frac{
G_0^s(\widehat{x})}{G_1^s(\widehat{x})}\frac{1-T_{10}}{T_{00}}$, we have
\begin{itemize}[noitemsep,topsep=0pt,leftmargin=*]
\item  $\widehat{\alpha}_{\texttt{UN}}^a > \widehat{\alpha}_{\texttt{UN}}^{b}$ and $\widehat{\alpha}_{\texttt{UN}}^a -  \widehat{\alpha}_{\texttt{UN}}^{b} > \widehat{\alpha}_{\texttt{EqOpt}}^a -  \widehat{\alpha}_{\texttt{EqOpt}}^{b} \geq
0$ hold, i.e., \texttt{EqOpt} fairness always mitigates inequality and the disadvantaged group $\mathcal{G}_b$ remains disadvantaged.
\item \texttt{DP} fairness may either (1) mitigate inequality, i.e., $\widehat{\alpha}_{\texttt{UN}}^a -  \widehat{\alpha}_{\texttt{UN}}^{b} > \widehat{\alpha}_{\texttt{DP}}^a -  \widehat{\alpha}_{\texttt{DP}}^{b} 
\geq 0$; or (2) flip the disadvantaged group from $\mathcal{G}_b$ to $\mathcal{G}_a$, i.e.,  $\widehat{\alpha}_{\texttt{DP}}^{b} \geq
\widehat{\alpha}_{\texttt{DP}}^a $.
\end{itemize}
\end{theorem}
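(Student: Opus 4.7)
The plan is to exploit the fact that Condition \ref{ass:inv_transition} gives $G^a_1 = G^b_1$ (with $G^a_0, G^b_0$ crossing once at $\widehat{x}$) while Condition \ref{con:transition}\ref{con:transition_II} and group-invariant transitions imply that the equilibrium qualification $\widehat{\alpha}^s$ is a strictly monotone non-decreasing function of $\mathcal{G}_s$'s equilibrium acceptance rate $\text{PR}^s = \alpha^s\int_{\theta^s}^{\infty}G_1(x)\,dx + (1-\alpha^s)\int_{\theta^s}^{\infty}G_0^s(x)\,dx$. (This monotonicity follows from $g^{ys}$ being non-decreasing in acceptance when $T_{y1}\ge T_{y0}$, and from rewriting the fixed-point equation \eqref{eq:dynamics} as $\widehat{\alpha}^s = g^{0s}/(1-g^{1s}+g^{0s})$.) Thus every assertion of the theorem reduces to a comparison of group-wise acceptance rates, which I can read off from the optimal threshold characterization in Lemma \ref{lemma:opt_fair_policy_eq}.

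\textbf{Natural inequality.} Using the unconstrained formula $G_1(\theta^s_{\texttt{UN}})/G_0^s(\theta^s_{\texttt{UN}}) = (u_-/u_+)(1-\widehat{\alpha}^s_{\texttt{UN}})/\widehat{\alpha}^s_{\texttt{UN}}$, I would first show that the hypothesis $u_+/u_- \ge (G_0^s(\widehat{x})/G_1^s(\widehat{x}))(1-T_{10})/T_{00}$ forces $\theta^s_{\texttt{UN}} \ge \widehat{x}$ at equilibrium. The verification proceeds by substituting $\theta^s = \widehat{x}$ into the threshold equation, then bounding $\widehat{\alpha}^s$ from the fixed-point $\widehat{\alpha}^s = g^{0s}(1-\widehat{\alpha}^s) + g^{1s}\widehat{\alpha}^s$ using $T_{y0}\le T_{y1}$; the stated inequality on $u_+/u_-$ is precisely what is needed for consistency. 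Once both $\theta^s_{\texttt{UN}} \ge \widehat{x}$, the single-crossing property $G_0^a/G_0^b\uparrow$ with $G_0^a(\widehat{x})=G_0^b(\widehat{x})$ yields $\bar{G}_0^a(\theta)>\bar{G}_0^b(\theta)$ for all $\theta\ge\widehat{x}$, so $\mathcal{G}_a$ has a strictly larger false positive rate, hence strictly larger PR, and by monotonicity $\widehat{\alpha}^a_{\texttt{UN}} > \widehat{\alpha}^b_{\texttt{UN}}$.

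\textbf{EqOpt analysis.} The key reduction is that since $G^a_1=G^b_1$, the TPR-equalizing constraint $\int_{\theta^a}^\infty G^a_1 = \int_{\theta^b}^\infty G^b_1$ collapses to $\theta^{a*}_{\texttt{EqOpt}}=\theta^{b*}_{\texttt{EqOpt}}=:\theta^*$. Plugging this common threshold into Lemma \ref{lemma:opt_fair_policy_eq} defines $\theta^*$ uniquely, and a monotonicity argument shows the bracketing $\theta^a_{\texttt{UN}}\le\theta^*\le\theta^b_{\texttt{UN}}$ (the UN thresholds depend on the group-specific $\widehat{\alpha}^s$ only through $(1-\widehat{\alpha}^s)/\widehat{\alpha}^s$, whereas $\theta^*$ is a $p_a, p_b$-weighted compromise). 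By the same $\widehat{x}$ argument applied at $\theta^*$, PR$^a>$PR$^b$, hence $\widehat{\alpha}^a_{\texttt{EqOpt}} \ge \widehat{\alpha}^b_{\texttt{EqOpt}}\ge 0$; moreover EqOpt raises $\mathcal{G}_b$'s acceptance and lowers $\mathcal{G}_a$'s relative to UN, and the monotone map from PR to $\widehat{\alpha}$ yields the strict inequality $\widehat{\alpha}^a_{\texttt{UN}}-\widehat{\alpha}^b_{\texttt{UN}} > \widehat{\alpha}^a_{\texttt{EqOpt}}-\widehat{\alpha}^b_{\texttt{EqOpt}}$.

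\textbf{DP analysis and main obstacle.} For DP the situation is qualitatively different: the constraint equalizes total PR across groups, so any residual difference in $\widehat{\alpha}^s$ stems purely from the composition of the accepted pool (proportions qualified versus unqualified). Applying Lemma \ref{lemma:opt_fair_policy_eq} with $\mathcal{P}^s_{\texttt{DP}}(x)=(1-\alpha^s)G_0^s(x)+\alpha^s G_1(x)$ characterizes $(\theta^a_{\texttt{DP}}, \theta^b_{\texttt{DP}})$; I would then compute each group's $g^{ys}$ explicitly and solve for $\widehat{\alpha}^s_{\texttt{DP}}$. Depending on the magnitude of the gap $G_0^a-G_0^b$ relative to the transition gap $T_{d1}-T_{d0}$, the equalized PR can either (i) moderate the qualification gap while leaving $\mathcal{G}_a$ advantaged, or (ii) overshoot and flip the disadvantaged group to $\mathcal{G}_a$. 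Exhibiting one explicit set of parameters (feature densities plus transitions) for each sub-case within the hypotheses of the theorem establishes the ``may either'' conclusion. The principal obstacle I foresee is the first step---rigorously verifying $\theta^s_{\texttt{UN}}\ge\widehat{x}$ from the $u_+/u_-$ hypothesis despite the threshold and $\widehat{\alpha}^s$ being coupled simultaneous unknowns; once that is settled, the remaining monotonicity comparisons for EqOpt are relatively routine, with the flipping construction for DP being the main additional technical task.
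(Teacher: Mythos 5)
Your overall strategy (locate the equilibrium thresholds relative to the crossing point $\widehat{x}$, exploit ${G}_1^a={G}_1^b$ to collapse \texttt{EqOpt} to a common threshold, and exhibit both outcomes for \texttt{DP}) is the right shape, but the first and most load-bearing step is backwards. The hypothesis $\frac{u_+}{u_-}\geq \frac{G_0^s(\widehat{x})}{G_1^s(\widehat{x})}\frac{1-T_{10}}{T_{00}}$ makes the benefit large relative to the cost, which pushes the optimal thresholds \emph{down}: defining $\overline{\alpha}$ by $\frac{u_+}{u_-}=\frac{G_0^s(\widehat{x})}{G_1^s(\widehat{x})}(\frac{1}{\overline{\alpha}}-1)$, one gets $\gamma^s(\widehat{x})>\frac{u_-}{u_++u_-}$ for all $\alpha\geq\overline{\alpha}$, hence $\theta^a_{\texttt{UN}}(\alpha)<\theta^b_{\texttt{UN}}(\alpha)<\widehat{x}$; the hypothesis then guarantees $\overline{\alpha}\leq\widehat{\alpha}^b_{\texttt{UN}}$ via the bound $\frac{1}{\widehat{\alpha}^b_{\texttt{UN}}}-1<\frac{1-T_{10}}{T_{00}}$, so the equilibria live in this sub-$\widehat{x}$ regime. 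Your claim that the hypothesis forces $\theta^s_{\texttt{UN}}\geq\widehat{x}$ is the opposite, and in that regime the argument collapses: for $x>\widehat{x}$ Condition \ref{ass:inv_transition} gives $\gamma^b(x)>\gamma^a(x)$, so $\theta^b_{\texttt{UN}}<\theta^a_{\texttt{UN}}$, group $\mathcal{G}_b$ accepts \emph{more} qualified individuals than $\mathcal{G}_a$ (their qualified feature distributions are identical), and $\widehat{\alpha}^a_{\texttt{UN}}>\widehat{\alpha}^b_{\texttt{UN}}$ no longer follows. Your own bracketing $\theta^a_{\texttt{UN}}\leq\theta^*\leq\theta^b_{\texttt{UN}}$ for \texttt{EqOpt} is only consistent with the sub-$\widehat{x}$ regime, so the proposal is internally inconsistent as written.

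Two further gaps. First, the reduction of every comparison to a single scalar acceptance rate $\text{PR}^s$ is not valid: the fixed point $\widehat{\alpha}^s=g^{0s}/(1-g^{1s}+g^{0s})$ depends on $\mathbb{G}_0^s(\theta^s)$ and $\mathbb{G}_1^s(\theta^s)$ \emph{separately}, not on their $\alpha$-weighted combination --- indeed \texttt{DP} equalizes PR yet generically yields $\widehat{\alpha}^a_{\texttt{DP}}\neq\widehat{\alpha}^b_{\texttt{DP}}$, which you notice for \texttt{DP} but which equally undermines the framework for \texttt{UN} and \texttt{EqOpt}. The paper instead orders $h^s=\frac{1-g^{1s}}{g^{0s}}$ using the simultaneous inequalities $\mathbb{G}_1^a(\theta^a)<\mathbb{G}_1^b(\theta^b)$ and $\mathbb{G}_0^a(\theta^a)<\mathbb{G}_0^b(\theta^b)$, both of which hold only below $\widehat{x}$. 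Second, the fair thresholds depend jointly on $(\alpha^a,\alpha^b)$ and the fair equilibrium lies off the diagonal $\alpha^a=\alpha^b$; all of your comparisons are effectively made on the diagonal. The paper bridges this by comparing the diagonal intersections $\widetilde{\alpha}^s_{\mathcal{C}}$ of the balanced curves with $\widehat{\alpha}^s_{\texttt{UN}}$ and then invoking the slope bounds $0\leq\frac{d\psi^s_{\mathcal{C}}}{d\alpha^{-s}}<1$ from Theorem \ref{thm:unique_equilibrium} to transfer the ordering to the true intersection $(\widehat{\alpha}^a_{\mathcal{C}},\widehat{\alpha}^b_{\mathcal{C}})$; some such step is needed and is absent from your plan.
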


Because $\mathcal{G}_a$ and $\mathcal{G}_b$ only differ in  $G^s_0(x)$, the condition in Thm \ref{thm:transition} ensures at least one group has enough unqualified people to be accepted and can be satisfied if benefit $u_+$ is sufficiently larger than cost $u_-$. We see that in this case the comparison is much more complex depending on the model parameters.  

\section{Effective interventions}
\label{sec:interventions}
\vspace{-0.5em}
As discussed, imposing static fairness constraints is not always a valid intervention in terms of its long-term impact.  In some cases it reinforces existing disparity; even when it could work, doing it right can be very hard due to its sensitivity to problem parameters. In this section, we present several alternative interventions that can be more effective in inducing more equitable outcomes or improving overall qualification rates in the long run. We shall assume that the sufficient conditions of Theorem \ref{thm:unique_equilibrium} hold 
\rt{under Assumption \ref{ass:mono-inference} and \ref{ass:fair_prob}}
so that the equilibrium is unique.% in this section. 

\textbf{Policy intervention.} In many instances, preserving static fairness at each time $t$ is important, for short-term violations may result in costly lawsuits \cite{target}. Proposition \ref{prop:intervention_policy} below shows that \xr{using \textit{sub-optimal} fair policies instead of the optimal ones} %sacrificing short-term utility while maintaining static fairness 
can improve overall qualification in the long run. 
\begin{proposition}\label{prop:intervention_policy}
Let $(\theta^a_{\mathcal{C}},\theta^b_{\mathcal{C}})$, $(\theta^{a'}_{\mathcal{C}},\theta^{b'}_{\mathcal{C}})$ be thresholds satisfying fairness constraint $\mathcal{C}$ under the optimal and an alternative policy, respectively. Let $(\widehat{\alpha}^{a}_{\mathcal{C}},\widehat{\alpha}^{b}_{\mathcal{C}}), (\widehat{\alpha}^{a'}_{\mathcal{C}},\widehat{\alpha}^{b'}_{\mathcal{C}})$ be the corresponding equilibrium. 
\begin{itemize}[noitemsep,topsep=0pt,leftmargin=*]
\item If $\theta^{s'}_{\mathcal{C}}(\alpha^a,\alpha^b) > \theta^s_{\mathcal{C}}(\alpha^a,\alpha^b)$, $\forall \alpha^s\in [0,1]$ under Condition \ref{con:transition}\ref{con:transition_I}, then $\widehat{\alpha}^{s'}_{\mathcal{C}}> \widehat{\alpha}^{s}_{\mathcal{C}}$, $\forall s\in\{a,b\}$;
\item  If $\theta^{s'}_{\mathcal{C}}(\alpha^a,\alpha^b) < \theta^s_{\mathcal{C}}(\alpha^a,\alpha^b)$, $\forall \alpha^s\in [0,1]$ under Condition \ref{con:transition}\ref{con:transition_II}, then $\widehat{\alpha}^{s'}_{\mathcal{C}}> \widehat{\alpha}^{s}_{\mathcal{C}}$, $\forall s\in\{a,b\}$.
\end{itemize}
\end{proposition}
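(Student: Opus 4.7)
The plan is to transfer a pointwise shift of the thresholds into a directed shift of the equilibrium via three steps: (i) signing the response of each transition coefficient $g^{ys}$ to a change in $\theta^s$, (ii) transferring that sign to the one-step map $F^s$, and (iii) lifting the pointwise ordering up to the fixed point via a homotopy argument supported by the uniqueness hypothesis.

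For (i), I would differentiate the definition right below \eqref{eq:dynamics} to obtain $\partial g^{ys}/\partial\theta^s = (T^{s}_{y0}-T^{s}_{y1})\,G^{s}_y(\theta^s)$, which is $\geq 0$ under Condition~\ref{con:transition}\ref{con:transition_I} and $\leq 0$ under Condition~\ref{con:transition}\ref{con:transition_II}. Consequently, in both bullets of the proposition (raising $\theta^s$ under (I), lowering $\theta^s$ under (II)), the alternative policy produces $g^{ys\prime}(\alpha^a,\alpha^b)\geq g^{ys}(\alpha^a,\alpha^b)$ at every $(\alpha^a,\alpha^b)$, for all $y\in\{0,1\}$ and $s\in\{a,b\}$, strictly wherever $\theta^{s\prime}\neq\theta^s$. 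For (ii), since $F^s(\alpha^a,\alpha^b)=g^{0s}(1-\alpha^s)+g^{1s}\alpha^s$ is linear and strictly increasing in each $g^{ys}$, step~(i) yields $F^{s\prime}\geq F^s$ pointwise with strict inequality somewhere; evaluating at the original equilibrium gives $F^{s\prime}(\widehat{\alpha}^a_{\mathcal{C}},\widehat{\alpha}^b_{\mathcal{C}})>\widehat{\alpha}^{s}_{\mathcal{C}}$ for both groups. Equivalently, in the $h^s$-notation of Theorem~\ref{thm:unique_equilibrium}, the alternative policy strictly lowers $h^{s}$ pointwise.

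For (iii), I would embed both policies in the homotopy $\theta^{s}_{\lambda}=(1-\lambda)\theta^{s}_{\mathcal{C}}+\lambda\theta^{s\prime}_{\mathcal{C}}$, $\lambda\in[0,1]$. By the uniqueness hypothesis of Theorem~\ref{thm:unique_equilibrium} and the continuity of $\theta^{s}_\lambda$ in $(\alpha^a,\alpha^b,\lambda)$, the equilibrium $(\widehat{\alpha}^a(\lambda),\widehat{\alpha}^b(\lambda))$ traces a continuous curve over $\lambda\in[0,1]$, and the Jacobian of $\mathrm{Id}-F_\lambda$ in $(\alpha^a,\alpha^b)$ is non-singular thanks to the bound $|\partial h^s/\partial\alpha^{-s}|<1$ (and additionally $|\partial h^s/\partial\alpha^s|<1$ under Condition~\ref{con:transition}\ref{con:transition_II}). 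Combining the sign $\partial F^{s}_\lambda/\partial\lambda>0$ from step~(i) with the implicit function theorem, I would verify that $d\widehat{\alpha}^s(\lambda)/d\lambda\geq 0$ for both $s$, with strict inequality on a set of positive Lebesgue measure in $\lambda$. Integrating from $\lambda=0$ to $\lambda=1$ then delivers $\widehat{\alpha}^{s\prime}_{\mathcal{C}}>\widehat{\alpha}^{s}_{\mathcal{C}}$.

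The main obstacle is step~(iii). Under Condition~\ref{con:transition}\ref{con:transition_I}, $F^s$ is anti-monotone in the other group's rate $\alpha^{-s}$ (because $\partial\theta^s/\partial\alpha^{-s}\leq 0$ is compounded with $T^{s}_{y0}\geq T^{s}_{y1}$), so the standard lattice-theoretic monotone fixed-point arguments (Tarski/Topkis) do not apply to the coupled two-group system directly. The uniqueness-type bound $|\partial h^s/\partial\alpha^{-s}|<1$ is precisely the diagonal-dominance condition that keeps the inverse of $\mathrm{Id}-\partial F/\partial\alpha$ sign-preserving; equivalently, it ensures that the two curves depicted in Fig.~\ref{fig:trajectory} shift in the same direction under the new policy so that their unique intersection moves north-east. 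Verifying this signed-inverse property (or, geometrically, the consistent simultaneous shift of both curves) is the non-trivial content of the argument.
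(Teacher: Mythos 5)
Your steps (i) and (ii) coincide with the paper's opening move: the paper observes directly from $g^{ys}(\theta)=T^s_{y0}\mathbb{G}^s_y(\theta)+T^s_{y1}\bigl(1-\mathbb{G}^s_y(\theta)\bigr)$ that raising $\theta^s$ under Condition \ref{con:transition}\ref{con:transition_I} (resp.\ lowering it under Condition \ref{con:transition}\ref{con:transition_II}) decreases $h^s=\frac{1-g^{1s}}{g^{0s}}$ pointwise and hence raises both balanced functions $\psi^a(\cdot)$ and $\psi^b(\cdot)$ pointwise. Where you and the paper part ways is the final step: the paper writes ``as a consequence'' the unique intersection of the two balanced curves moves up in both coordinates, whereas you propose a homotopy/implicit-function-theorem argument and then candidly state that the sign-preservation of $(\mathrm{Id}-\partial F/\partial\alpha)^{-1}$ is left unverified. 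That admission means the proposal is not a complete proof, and the step you flag is genuinely the crux.

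Concretely: under Condition \ref{con:transition}\ref{con:transition_II} the gap closes by an elementary argument that needs no implicit function theorem --- $\psi^a$ and $\psi^b$ are non-decreasing with slopes in $[0,1)$, so $\psi^a\circ\psi^b$ is non-decreasing with slope $<1$ and increases pointwise under the alternative policy, hence its unique fixed point $\widehat{\alpha}^{a}$ strictly increases, and $\widehat{\alpha}^{b\prime}=\psi^{b\prime}(\widehat{\alpha}^{a\prime})>\psi^{b}(\widehat{\alpha}^{a})=\widehat{\alpha}^{b}$ follows. Under Condition \ref{con:transition}\ref{con:transition_I}, however, your worry is substantiated rather than resolved by the stated hypotheses: both curves are non-increasing with slopes of absolute value less than $1$, and in the linear caricature $\alpha^a=c_1-m_1\alpha^b$, $\alpha^b=c_2-m_2\alpha^a$ with $m_1,m_2\in[0,1)$, shifting the curves up by $\delta_1,\delta_2>0$ displaces the intersection by $\Delta\alpha^a=(\delta_1-m_1\delta_2)/(1-m_1m_2)$, which is negative whenever $\delta_1<m_1\delta_2$, i.e.\ whenever the group-$b$ curve shifts much more than the group-$a$ curve. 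Nothing in ``$|\partial h^s/\partial\alpha^{-s}|<1$ plus a pointwise threshold ordering'' rules this configuration out, so the signed-inverse property you need does not follow from diagonal dominance alone; it would require either exploiting additional structure that couples the two groups' threshold shifts through the fairness constraint, or strengthening the hypotheses. The paper's own proof elides exactly this point, so you have reproduced its argument up to, and correctly located, its weakest link --- but you have not supplied the missing lemma, and for the Condition \ref{con:transition}\ref{con:transition_I} bullet it is not evident that it can be supplied as stated.
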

Note that the sacrifice is in instantaneous utility, not necessarily in total utility in the long run (see an example in proof of Proposition \ref{prop:intervention_policy}, Appendix \ref{app:proofs}). If static fairness need not be maintained at all times, then we can employ separate policies for each group, and Proposition \ref{prop:existance_eq_polices} below shows that under certain conditions on transitions, threshold policies leading to equitable equilibrium always exist. 
\begin{proposition}\label{prop:existance_eq_polices}
Let \xr{$\mathcal{I}_s\coloneqq = \big[\frac{1-\max\{T_{11}^s,T^s_{10}\}}{\max\{T_{01}^s,T_{00}^s\}},\frac{1-\min\{T_{11}^s,T^s_{10}\}}{\min\{T_{01}^s,T_{00}^s\}} \big]$, $s\in \{a,b\}$.}
%$\mathcal{I}_s\coloneqq \big[\frac{\min\{T_{01}^s,T_{00}^s\}}{1-\min\{T_{11}^s,T^s_{10}\}+\min\{T_{01}^s,T_{00}^s\} },\frac{\max\{T_{01}^s,T_{00}^s\}}{1-\max\{T_{11}^s,T^s_{10}\}+\max\{T_{01}^s,T_{00}^s\} } \big]$
Under Condition \ref{con:transition}\ref{con:transition_I} or \ref{con:transition}\ref{con:transition_II}, if $\mathcal{I}_a\cap \mathcal{I}_b \neq \emptyset$, then $\forall \widehat{\alpha}\in \mathcal{I}_a\cap \mathcal{I}_b$, there exist threshold policies $\theta^{s}(\alpha^s)$, $\forall \alpha^s\in [0,1]$, under which $\alpha^s_t\rightarrow \widehat{\alpha},\forall s\in \{a,b\}$, i.e., equitable equilibrium is attained; if $\mathcal{I}_a\cap \mathcal{I}_b = \emptyset$, then there is no threshold policy that can result in equitable equilibrium. 
\end{proposition}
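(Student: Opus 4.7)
The plan is to reduce the existence of a common equilibrium to the solvability of a scalar equation via the Intermediate Value Theorem. For any constant threshold $\theta^s$, dynamics \eqref{eq:dynamics} specializes to the linear recursion $\alpha^s_{t+1} = g^{0s}(\theta^s) + (g^{1s}(\theta^s) - g^{0s}(\theta^s))\alpha^s_t$, whose unique fixed point is $\widehat{\alpha}^s = g^{0s}(\theta^s)/(1 - g^{1s}(\theta^s) + g^{0s}(\theta^s))$, equivalently characterized by $h^s(\theta^s) = (1-\widehat{\alpha}^s)/\widehat{\alpha}^s$. Since $T^s_{yd} \in (0,1)$ forces $g^{0s}, g^{1s} \in (0,1)$, the contraction factor $|g^{1s} - g^{0s}|$ is strictly less than $1$, so $\alpha^s_t \to \widehat{\alpha}^s$ from any initial condition. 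Hence the task reduces to finding $(\theta^{a*}, \theta^{b*})$ with $h^a(\theta^{a*}) = h^b(\theta^{b*}) = (1-\widehat{\alpha})/\widehat{\alpha}$.

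Next I would identify the image of $h^s(\cdot)$ over $\theta \in \mathbb{R}$ with the interval $\mathcal{I}_s$. The quantity $g^{ys}(\theta) = T^s_{y0}\int_{-\infty}^{\theta} G^s_y(x)\,dx + T^s_{y1}\int_{\theta}^{\infty} G^s_y(x)\,dx$ is a continuous convex combination sweeping from $T^s_{y1}$ (as $\theta \to -\infty$) to $T^s_{y0}$ (as $\theta \to +\infty$). Under Condition \ref{con:transition}\ref{con:transition_I} one has $T^s_{y0} \geq T^s_{y1}$, so both $g^{0s}$ and $g^{1s}$ are increasing in $\theta$; since $\partial h^s/\partial g^{0s} < 0$ and $\partial h^s/\partial g^{1s} < 0$, $h^s$ is continuous and monotone decreasing with image $[(1-T^s_{10})/T^s_{00},\, (1-T^s_{11})/T^s_{01}]$. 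Under Condition \ref{con:transition}\ref{con:transition_II} the inequalities flip, both $g^{ys}$ are decreasing in $\theta$, and $h^s$ is monotone increasing with image $[(1-T^s_{11})/T^s_{01},\, (1-T^s_{10})/T^s_{00}]$. Matching the $\min$/$\max$ pattern in each case shows the image equals exactly $\mathcal{I}_s$ as defined in the proposition.

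For the forward direction, when $\widehat{\alpha}$ satisfies $(1-\widehat{\alpha})/\widehat{\alpha} \in \mathcal{I}_a \cap \mathcal{I}_b$, the Intermediate Value Theorem applied to each $h^s$ supplies $\theta^{a*}, \theta^{b*}$ achieving the common target value; the constant policies $\theta^s(\alpha^s) \equiv \theta^{s*}$ then drive $\alpha^s_t \to \widehat{\alpha}$ for both groups by the contraction above. For the converse, suppose $\mathcal{I}_a \cap \mathcal{I}_b = \emptyset$ and some threshold policies $(\theta^a(\cdot), \theta^b(\cdot))$ yielded a common limit $\widehat{\alpha}$; the fixed-point equation evaluated at that limit reads $h^a(\theta^a(\widehat{\alpha})) = h^b(\theta^b(\widehat{\alpha})) = (1-\widehat{\alpha})/\widehat{\alpha}$, producing a single real number that must lie in both $\mathcal{I}_a$ and $\mathcal{I}_b$, contradicting disjointness. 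Note that this converse argument does not need continuity of $\theta^s(\cdot)$, since only the fixed-point identity at $\widehat{\alpha}$ is invoked.

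The step I expect to be most delicate is the joint monotonicity in the middle paragraph, since $h^s$ is a ratio in which numerator and denominator both depend on $\theta$ through distinct transition pairs, and one has to track the four quantities $T^s_{y0}, T^s_{y1}$ through the $\min$/$\max$ bookkeeping so the endpoints of the image align with the stated formula for $\mathcal{I}_s$ in both Condition \ref{con:transition}\ref{con:transition_I} and \ref{con:transition}\ref{con:transition_II}. A minor secondary issue is endpoint attainment, since the extreme values of $g^{ys}$ are only approached asymptotically as $\theta \to \pm\infty$: on the interior of $\mathcal{I}_a \cap \mathcal{I}_b$ this is harmless, and on the boundary a limiting sequence of thresholds suffices because the contraction rate stays bounded away from $1$ throughout.
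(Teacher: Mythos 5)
Your proposal is correct and follows essentially the same route as the paper's proof: identify $\mathcal{I}_s$ as the range of $h^s(\theta)=\frac{1-g^{1s}(\theta)}{g^{0s}(\theta)}$ via the monotonicity of $g^{ys}$ under Condition \ref{con:transition}\ref{con:transition_I}/\ref{con:transition_II}, solve $h^s(\theta^{s})=\frac{1}{\widehat{\alpha}}-1$ by the intermediate value theorem, and observe that disjoint ranges make the two balanced equations unsatisfiable simultaneously. Your version is in fact slightly tighter in two places the paper glosses over --- the explicit contraction argument $|g^{1s}-g^{0s}|<1$ establishing global convergence for a constant threshold (the paper merely asserts that any suitable non-increasing $\theta^s(\alpha^s)$ converges), and the remark that the equilibrium condition forces $\frac{1-\widehat{\alpha}}{\widehat{\alpha}}\in\mathcal{I}_a\cap\mathcal{I}_b$ rather than $\widehat{\alpha}$ itself, which is the reading the paper's own proof implicitly adopts.
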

Proposition \ref{prop:existance_eq_polices} also indicates that when two groups' transitions are significantly different, manipulating policies cannot achieve equality. In this case, the following intervention can be considered.

\textbf{Transition Intervention.} Another intervention is to alter the value of transitions, e.g., by establishing support for both the accepted and rejected. Proposition \ref{prop:intervention_transition} shows that the qualification rate $\widehat{\alpha}^s$ at equilibrium can be improved by enhancing individuals' ability to maintain/improve qualification, which is consistent with the empirical findings in loan repayment \citep{paxton2000modeling,gotham1998race,homonoff2019does} and labor markets \citep{fehr2009behavioral}. 

\begin{proposition}\label{prop:intervention_transition}
$\forall s\in \{a,b\}$, increasing any transition probability $T^s_{yd}$, $d \in \{0,1\}$, $y \in \{0,1\}$ always increases the value of equilibrium qualification rates $\widehat{\alpha}^s$.
\end{proposition}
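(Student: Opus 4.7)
The plan is to reduce the two-dimensional equilibrium problem to a one-dimensional monotonicity statement at the unique equilibrium guaranteed by Theorem~\ref{thm:unique_equilibrium}.

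The first stage records the explicit dependence of the dynamics on $T^s_{yd}$. Writing
\[
g^{ys}(\alpha^a,\alpha^b) \;=\; T^s_{y0}(1-P^s_y) + T^s_{y1}P^s_y,\qquad P^s_y := \int_{\theta^s(\alpha^a,\alpha^b)}^{\infty}\! G^s_y(x)\,dx \in [0,1],
\]
makes clear that $g^{ys}$ is affine in each of $T^s_{y0}, T^s_{y1}$ with non-negative coefficients, so $\partial g^{ys}/\partial T^s_{yd}\ge 0$ pointwise in $(\alpha^a,\alpha^b)$, while $g^{y,-s}$ (and therefore $F^{-s}$) is insensitive to a perturbation of $T^s_{yd}$. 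Rearranging the scalar equilibrium equation $\widehat\alpha^s = g^{0s}(1-\widehat\alpha^s)+g^{1s}\widehat\alpha^s$ yields $\widehat\alpha^s = g^{0s}/(1+g^{0s}-g^{1s}) = 1/(1+h^s)$ with $h^s := (1-g^{1s})/g^{0s}$, and a direct partial-derivative computation shows that this expression is strictly increasing in both $g^{0s}$ and $g^{1s}$. Thus, if $(\widehat\alpha^a,\widehat\alpha^b)$ were frozen, the claim would be immediate; the real work is to control the indirect channel by which perturbing $T^s_{yd}$ moves the equilibrium and hence $\theta^s, P^s_y$.

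The second stage handles Condition~\ref{con:transition}\ref{con:transition_II} (leg-up dominant) by a monotone iteration. Perturb $T^s_{yd}\mapsto T^s_{yd}+\delta$, denote the resulting dynamics $\widetilde F$, and initialize $\Xi_0 := (\widehat\alpha^a,\widehat\alpha^b)$, $\Xi_{k+1} := \widetilde F(\Xi_k)$. Stage~1 gives $\widetilde F^s(\Xi_0)\ge F^s(\Xi_0)=\widehat\alpha^s$ and $\widetilde F^{-s}(\Xi_0)=\widehat\alpha^{-s}$, so $\Xi_1\ge\Xi_0$ componentwise. Under Condition~\ref{con:transition}\ref{con:transition_II}, $g^{ys}$ is non-increasing in $\theta^s$ while the optimal threshold is non-increasing in each $\alpha^r$, so $\widetilde F$ is componentwise non-decreasing in $(\alpha^a,\alpha^b)$. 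A routine induction then shows $\Xi_k$ is non-decreasing and bounded in $[0,1]^2$, hence convergent; continuity of $\widetilde F$ forces the limit to be a fixed point of $\widetilde F$, and uniqueness from Theorem~\ref{thm:unique_equilibrium} identifies this limit with the new equilibrium, yielding $\widehat\alpha^s_{\text{new}}\ge\widehat\alpha^s$.

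The third stage treats Condition~\ref{con:transition}\ref{con:transition_I} (lack-of-motivation dominant), where the feedback $\widetilde F^{-s}$ is anti-monotone in $\alpha^s$ and the iteration above fails. I would instead deform $T^s_{yd}\mapsto T^s_{yd}+\tau$ continuously for $\tau\in[0,\delta]$ and use the implicit function theorem on the pair of equilibrium equations, eliminating $d\widehat\alpha^{-s}/d\tau$ via the $-s$ equation (which has no direct $\tau$-term) to obtain a scalar relation of the form
\[
\Bigl[1 - A_s\Bigl(\tfrac{\partial h^s}{\partial\alpha^s} + \tfrac{\partial h^s}{\partial\alpha^{-s}}\cdot\tfrac{A_{-s}\,\partial h^{-s}/\partial\alpha^s}{1 - A_{-s}\,\partial h^{-s}/\partial\alpha^{-s}}\Bigr)\Bigr]\frac{d\widehat\alpha^s}{d\tau} \;=\; A_s\,\frac{\partial h^s}{\partial\tau},
\]
with $A_r = -(\widehat\alpha^r)^2 < 0$ and $\partial h^s/\partial\tau \le 0$ by Stage~1. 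The Lipschitz-type bound $|\partial h^s/\partial\alpha^{-s}|<1$ from Theorem~\ref{thm:unique_equilibrium}, combined with the signs of the partials prescribed by Condition~\ref{con:transition}\ref{con:transition_I}, is exactly what makes the bracketed Jacobian factor strictly positive, so the right-hand side and $d\widehat\alpha^s/d\tau$ share the same non-negative sign; integrating over $[0,\delta]$ yields the conclusion. The main obstacle is precisely this sign analysis: without the quantitative control afforded by Theorem~\ref{thm:unique_equilibrium}, the anti-monotone feedback through $\widehat\alpha^{-s}$ could in principle overturn the direct gain in $\widehat\alpha^s$, and it is exactly the uniqueness hypothesis that rules this pathology out.
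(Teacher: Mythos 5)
Your Stage 2 contains a genuine gap. You claim that under Condition \ref{con:transition}\ref{con:transition_II} the update map $\widetilde F$ is componentwise non-decreasing in $(\alpha^a,\alpha^b)$, arguing only through the dependence of $g^{ys}$ on the threshold. But $F^s(\alpha^a,\alpha^b)=g^{0s}\cdot(1-\alpha^s)+g^{1s}\cdot\alpha^s$ also depends on $\alpha^s$ explicitly through the mixture weights, contributing $g^{1s}-g^{0s}$ to $\partial F^s/\partial\alpha^s$, and this term can be strongly negative under Condition \ref{con:transition}\ref{con:transition_II}: for instance $T^s_{10}=0.1$, $T^s_{11}=0.15$, $T^s_{00}=0.85$, $T^s_{01}=0.9$ satisfies the condition yet gives $g^{1s}-g^{0s}\le -0.7$, while the threshold-induced terms are bounded by $|T^s_{y1}-T^s_{y0}|=0.05$ times the sensitivity of $\mathbb{G}^s_y(\theta^s)$ to $\alpha^s$. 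So the induction $\Xi_{k+1}\ge\Xi_k$ can fail at the second step, and the iterates need not converge at all; indeed the paper explicitly observes that the discrete-time dynamics can oscillate even when the equilibrium is unique, which is incompatible with $\widetilde F$ being an order-preserving map whose orbit started at the old equilibrium increases to the new one. Stage 3 is essentially sound in its sign analysis (the bracketed factor is bounded below by $1-(\widehat\alpha^s)^2>0$ once one uses $|\partial h^s/\partial\alpha^{-s}|<1$ for both groups), but it silently assumes differentiability of $\theta^s$, hence of $h^s$, in $(\alpha^a,\alpha^b,\tau)$, whereas the paper only establishes continuity and monotonicity of the optimal thresholds.

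The repair --- and the paper's own, much shorter route --- is to work with the balanced equation $1/\alpha^s-1=h^s(\theta^s(\alpha^a,\alpha^b))$ rather than with the forward dynamics. Increasing any $T^s_{yd}$ strictly decreases $h^s$ pointwise in $(\alpha^a,\alpha^b)$ (your Stage 1 already contains this computation), so for each fixed $\alpha^{-s}$ the unique intersection of the strictly decreasing map $l^s(\alpha)=1/\alpha-1$ with $h^s$ moves strictly to the right, i.e.\ the balanced function $\psi^s$ increases pointwise while $\psi^{-s}$ is unchanged. The monotonicity of the balanced curves established in the proof of Theorem \ref{thm:unique_equilibrium} (non-increasing with slopes of magnitude less than one under Condition \ref{con:transition}\ref{con:transition_I}, non-decreasing with slopes less than one under Condition \ref{con:transition}\ref{con:transition_II}) then forces the new intersection to dominate the old one in both coordinates. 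This treats both conditions uniformly and requires neither convergence of the iterated dynamics nor differentiability.
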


\section{Experiments}\label{sec:exp}
\begin{wrapfigure}[19]{r}{0.62\textwidth}
\vspace{-1.5 em}
\centering
\begin{subfigure}[b]{0.3\textwidth}
\centering
\includegraphics[trim=0.05cm 0cm 0.5cm 0.5cm,clip=true, width=\textwidth]{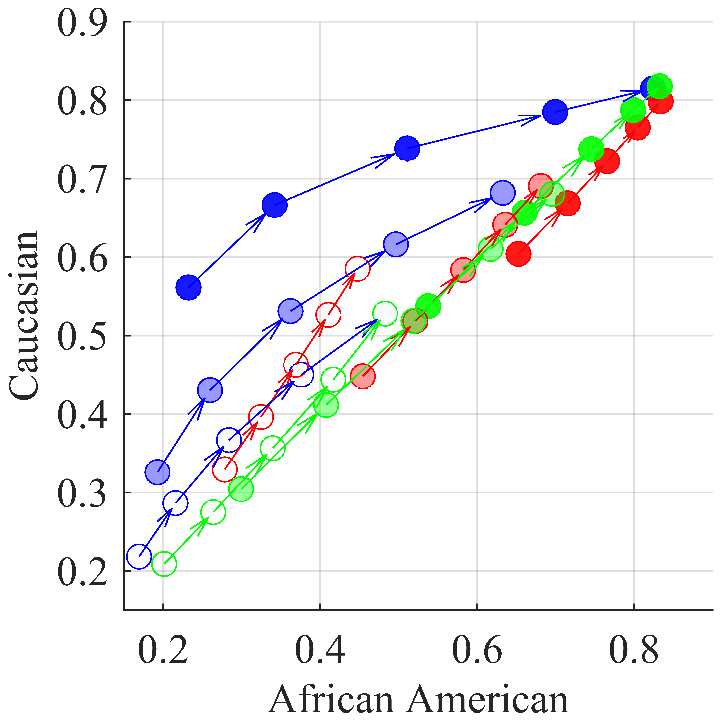}
\caption{D-invariant transitions}\label{fig:sm_t}
\end{subfigure}
\hfill
\begin{subfigure}[b]{0.3\textwidth}
\centering
\includegraphics[trim=0.05cm 0cm 0.5cm 0.5cm,clip=true, width=\textwidth]{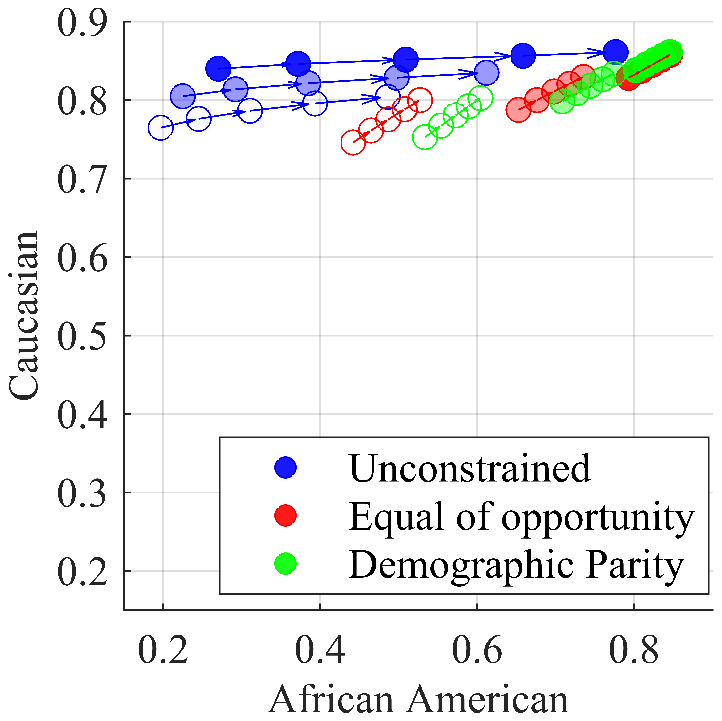}
\caption{
D-variant transitions}\label{fig:diff_t}
\end{subfigure}
\caption{Results on the FICO dataset: Points are the equilibria of repayment rates in  $\mathcal{G}_{AA}, \mathcal{G}_{C}$ under Condition \ref{con:transition}\ref{con:transition_II} with different transitions. Arrows indicate the direction of increasing $T_{01}^s$; a more transparent point represents the smaller value of $T_{10}^s$. In panel \subref{fig:sm_t}, $T^{AA}_{yd} = T^{C}_{yd}$, while in panel \subref{fig:diff_t}, $T^{AA}_{yd} < T^{C}_{yd}$.}
\label{fig:fico}
\end{wrapfigure} 
We conducted experiments on both Gaussian synthetic datasets and real-world datasets. We present synthetic data experiments in Appendix \ref{app:exp} and the results using real-world datasets here. These are static, one-shot datasets, which we use to create a simulated dynamic setting as detailed below.

\textbf{Loan repayment study.}
We use the  FICO score dataset \cite{reserve2007report} to study the long-term impact of fairness constraints \texttt{EqOpt} and \texttt{DP} and other interventions on loan repayment rates in the Caucasian group $\mathcal{G}_C$ and the African American group $\mathcal{G}_{AA}$. With the pre-processed data in \citep{hardt2016equality}, we simulate a dataset with loan repayment records and credits scores. We first compute the initial qualification (loan repayment) rates and estimate the feature distributions $G^s_y(x)$ with beta distributions based on the simulated data. Then, we compute the optimal \texttt{UN}, \texttt{EqOpt}, \texttt{DP} threshold according to Eqn. \eqref{eq:opt_fair_policy}. Consequently, with the dynamics \eqref{eq:dynamics}, we update the qualification rates in both groups. This process proceeds and qualification rates in both groups change over time.

Our results show consistent findings with studies in loan repayment literature \cite{gotham1998race,paxton2000modeling}. Specifically, \cite{paxton2000modeling} studied the loan repayment in group lending and pointed out that in practice effective training and leadership among the groups who were issued loans can increase their willingness to pay and improve the group repayment rate. Similar conclusion is also suggested by \cite{gotham1998race}. In our model, these interventions can be regarded as stimulating transitions (i.e., $T_{y1}$) to improve the future repayment rates. And the scenarios under such intervention would satisfy Condition \ref{con:transition}\ref{con:transition_II}. Fig. \ref{fig:fico} shows that under  Condition \ref{con:transition}\ref{con:transition_II}, increasing the transition $T_{01}^s$ always increases qualification rates, and \texttt{DP} in general can result in a more equitable equilibrium than \texttt{EqOpt}. Fig. \ref{fig:sm_t} shows that in Demographic-invariant (D-invariant) transition cases ($T^{AA}_{yd} = T^{C}_{yd}$):
(1) $\mathcal{G}_{AA}$ always remains as disadvantaged group;
(2) when $T_{10}$ is small, the inequality under \texttt{UN} optimal policies is small and the intervention on $T_{01}$ only has minor effects on equality; when $T_{10}$ is large (darker blue points), varying $T_{01}$ can affect disparity significantly;
(3) imposing \texttt{DP} attains equitable equilibria in general, which is robust to transitions and consistent with the conclusion in \citep{socialequality2019}; 
(4) when $T_{10}$ is small, imposing \texttt{EqOpt} exacerbates inequality as $T_{01}$ increases; while $T_{10}$ is sufficient large, equality can be attained and robust to transitions. In Fig. \ref{fig:diff_t}, it shows that in D-variant transition cases, by setting $T^{AA}_{yd} < T^{C}_{yd}$, the inequality between $\mathcal{G}_{AA}$ and $\mathcal{G}_{C}$ further gets reinforced.
In summary, the effectiveness of such intervention (increasing $T_{01}$) on promoting equality highly depends on the value of $T_{10}$ and policies. 

\begin{wraptable}[13]{r}{7.8cm}
\vspace{-1em}
\centering
\caption{osi$^*$/osi$_H$/osi$_L$ is the percentage that oscillation occurs among 125 set of different transitions under policy $\texttt{UN}^*$/$\texttt{UN}_{\theta_H}$/$\texttt{UN}_{\theta_L}$. Among transitions that lead to stable equilibrium, Col 2/Col 3 shows the percentage that $\texttt{UN}_{\theta_H}$/  $\texttt{UN}_{\theta_L}$ results in lower recidivism compared with $\texttt{UN}^*$.} 
\label{tbl:compas_update}
\begin{tabular}{llllll}
\hline
& $\widehat{\alpha}_{\theta_{H}} < \widehat{\alpha}^*$ & $\widehat{\alpha}_{\theta_{L}} < \widehat{\alpha}^*$ & osi$^*$& osi$_H$& osi$_L$ \\ \hline
$A$  & 0&	1&	0.29&	0.12&	0.36\\
$B$  & 0.99 &	0.01 &	0&	0&	0 \\
$C$  & 0.37 &	0.28 &	0&	0&	0\\
$D$  & 0.79&	0.63&	0.06&	0&	0.13\\\hline
\end{tabular}   
\end{wraptable}
\textbf{The COMPAS data.}
Our second set of experiments is conducted on a multivariate recidivism prediction dataset from Correctional Offender Management Profiling for Alternative Sanctions (COMPAS) \citep{angwin2016machine}. We again use this static and high-dimensional dataset to create a simulated \rt{decision-making}
% dynamic 
process as the FICO experiments. Specifically, from the raw data we calculate the initial qualification (recidivism) rate and train optimal classifier using a logistic regression model, based on which recidivism rate is updated according to Eqn. \eqref{eq:dynamics} under a given set of transitions. 
% In the context of recidivism prediction, people who receive incarceration decisions are less likely to re-offend, thus we consider the transitions satisfying Condition \ref{con:transition}\ref{con:transition_I}.
\rt{In the context of recidivism prediction, we consider all the \xr{possible types }%combination 
of transitions \xr{under an unconstrained policy}, i.e., \xr{transitions satisfying} conditions \ref{con:transition}\ref{con:transition_I}-(D).} The classifier decision here corresponds to incarceration based on predicted likelihood of recidivism: the higher the predicted recidivism, the more likely an incarceration decision. 
In subsequent time steps, the data is re-sampled from the raw data proportional to the updated recidivism rates. 
This process repeats and the group recidivism rates change over time. 
% Our results here primarily serve to highlight the complexity in such a dynamical system, and in particular, that an equilibrium may not exist.
\rt{Our results here primarily serve to highlight the complexity in such a decision-making system. In particular, \xr{we see that an equilibrium may not exist and under some transitions the qualification rate may oscillate. Specifically,} Table \ref{tbl:compas_update} shows that
Prop. 1 holds under Condition \ref{con:transition}\ref{con:transition_I}-\ref{con:transition_II}; \xr{there is} no oscillation under \xr{Condition} \ref{con:transition}\ref{con:transition_II}-(C); \xr{under Condition 1(C)-(D), there is more uncertainty }%and more uncertainty under 1(C)-(D), 
which is discussed in Appendix \ref{app:discussion}. \xr{More results on the oscillation can be found in Appendix \ref{app:exp}.}%Moreover, our experimental results show that an equilibrium may not exist (see more experiments for oscillation states in Appendix \ref{app:exp}).
}

\vspace{-5pt}
\section{Conclusion}
\vspace{-5pt}
In this paper, we studied the long-term impact of fairness constraints (e.g., \texttt{DP} and \texttt{EqOpt}) on group qualification rates. 
By casting the problem in a POMDP framework, we conducted equilibrium analysis. Specifically, we first identified sufficient conditions for the existence and uniqueness of equilibrium, under which we compared different fairness constraints regarding their long-term impact. Our findings show that the same fairness constraint can have opposite impact depending on the underlying problem scenarios, which highlights the importance of understanding real-world dynamics in decision making systems. 
Our experiments on real-world static datasets with simulated dynamics also show 
% the consistent findings with 
%\rt{the ability and potential possibility of our framework for}
\xr{that our framework can be used to facilitate} social science studies. Our analysis has focused on scenarios with a unique equilibrium; scenarios with multiple equilibria or oscillating states remain an interesting direction of future research. 

\newpage
\section*{Acknowledgement}
X. Zhang, Y. Liu and M. Liu have been supported by the NSF under grants CNS-1616575, CNS-1646019, CNS-1739517, IIS-2007951, and by the ARO under contract W911NF1810208. 
R. Tu would like to acknowledge the funding support of the Swedish e-Science Research Centre and the material suggestion regarding the social impact of polices given by Yating Zhang. Part of the work was done when R. Tu was a visiting student in Microsoft Research, Cambridge, and he would like to acknowledge Microsoft’s travel support. K. Zhang would like to acknowledge the support by the United States Air Force under Contract No. FA8650-17-C-7715. 

\section*{Broader Impact}
In this paper, we focus on the (un)fairness issue that arises in automated decision-making systems and aim to understand the long-term impact of algorithmic (fair) decisions on the well-being of different sub-groups in a population. Our partially observed sequential decision making framework is  applicable to a wide range of domains (e.g., lending, recruitment, admission, criminal justice, etc.). By conducting an equilibrium analysis and evaluating the long-term impact of different fairness criteria, our results provide a theoretical foundation that can help answer questions such as whether/when imposing short-term fairness constraints are effective in promoting long-term equality. 

First of all, our results can help policymakers (e.g., companies, banks, governments, etc.) in their decision making process by highlighting the potential pitfalls of commonly used static fairness criteria and providing guidance on how to design effective interventions that can avoid such unintended consequences and result in positive long-term societal
impacts. 

Secondly, our results may be useful to research in fields outside of the computer science community. The experiments on static real-world datasets have shown consistent findings with literature in social sciences \citep{mallett2018disproportionate,gotham1998race,paxton2000modeling}. Although these empirical results are obtained using simulated dynamics due to a lack of real datasets, they may provide  insights and theoretical supports for research in other fields.    

Lastly, while this work is limited to binary decisions, the main take-away can be applied in other applications such as computer vision, natural language processing, etc., using more complicated classifiers such as DNN. We hope that our work will encourage researchers in these domains to similarly consider  discrimination risks when developing techniques, and raise awareness that static fairness constraint may not suffice and long-term fairness cannot be designed in a vacuum without considering the human element. We thus emphasize the importance of performing real-time measurements and developing proper fair classifiers from dynamic datasets.   

Having mentioned the potential positive impact of our work, we also want to point out the limitations in our model and analysis. Firstly, in this work we use a set of transitions $T^s_{yd}$ to capture individuals' abilities to improve/maintain future qualifications, and our analysis and conclusions rely on this set of values.  In practice, however, these quantities can be extremely hard to measure due to the complexity of human behaviors and environmental factors.  In addition, as we have noted in the paper, in some cases the conclusion can be highly sensitive to minor changes in these transitions. 
Secondly, our theoretical results have focused on scenarios with a unique equilibrium, while in practice the situation can be much more complicated (multiple equilibria or no equilibria), as demonstrated by the oscillations we see in the COMPAS simulation study. Thus, it is worthwhile for future work to consider these more complex cases. Lastly, due to the lack of dynamic datasets, our experiments are performed over static real-world datasets with simulated dynamics. Thus, an accurate model of real-world dynamics is needed when deploying our method for practical decision making.

\newpage
\bibliography{ref}

\begin{thebibliography}{47}
\providecommand{\natexlab}[1]{#1}
\providecommand{\url}[1]{\texttt{#1}}
\expandafter\ifx\csname urlstyle\endcsname\relax
  \providecommand{\doi}[1]{doi: #1}\else
  \providecommand{\doi}{doi: \begingroup \urlstyle{rm}\Url}\fi

\bibitem[tar(2015)]{target}
Target corporation to pay \$2.8 million to resolve eeoc discrimination finding.
\newblock In \emph{U.S. Equal Employment Opportunity Commission}, 2015.
\newblock
  \url{https://www.eeoc.gov/newsroom/target-corporation-pay-28-million-resolve-eeoc-discrimination-finding}.

\bibitem[Aneja and Avenancio-Le{\'o}n(2019)]{aneja2019no}
A.~P. Aneja and C.~F. Avenancio-Le{\'o}n.
\newblock No credit for time served? incarceration and credit-driven crime
  cycles.
\newblock 2019.

\bibitem[Angwin et~al.(2016)Angwin, Larson, Mattu, and
  Kirchner]{angwin2016machine}
J.~Angwin, J.~Larson, S.~Mattu, and L.~Kirchner.
\newblock Machine bias: There’s software used across the country to predict
  future criminals.
\newblock \emph{And it’s biased against blacks. ProPublica}, 23, 2016.

\bibitem[Auer et~al.(2002)Auer, Cesa-Bianchi, and Fischer]{auer2002finite}
P.~Auer, N.~Cesa-Bianchi, and P.~Fischer.
\newblock Finite-time analysis of the multiarmed bandit problem.
\newblock \emph{Machine learning}, 47\penalty0 (2-3):\penalty0 235--256, 2002.

\bibitem[Barocas et~al.(2019)Barocas, Hardt, and
  Narayanan]{barocas-hardt-narayanan}
S.~Barocas, M.~Hardt, and A.~Narayanan.
\newblock \emph{Fairness and Machine Learning}.
\newblock fairmlbook.org, 2019.
\newblock \url{http://www.fairmlbook.org}.

\bibitem[Bechavod et~al.(2019)Bechavod, Ligett, Roth, Waggoner, and
  Wu]{bechavod2019equal}
Y.~Bechavod, K.~Ligett, A.~Roth, B.~Waggoner, and S.~Z. Wu.
\newblock Equal opportunity in online classification with partial feedback.
\newblock In \emph{Advances in Neural Information Processing Systems 32}, pages
  8972--8982. 2019.

\bibitem[Chaney et~al.(2018)Chaney, Stewart, and
  Engelhardt]{chaney2018algorithmic}
A.~J. Chaney, B.~M. Stewart, and B.~E. Engelhardt.
\newblock How algorithmic confounding in recommendation systems increases
  homogeneity and decreases utility.
\newblock In \emph{Proceedings of the 12th ACM Conference on Recommender
  Systems}, pages 224--232. ACM, 2018.

\bibitem[Chen et~al.(2019)Chen, Cuellar, Luo, Modi, Nemlekar, and
  Nikolaidis]{chen2019fair}
Y.~Chen, A.~Cuellar, H.~Luo, J.~Modi, H.~Nemlekar, and S.~Nikolaidis.
\newblock Fair contextual multi-armed bandits: Theory and experiments.
\newblock \emph{arXiv preprint arXiv:1912.08055}, 2019.

\bibitem[Corbett-Davies et~al.(2017)Corbett-Davies, Pierson, Feller, Goel, and
  Huq]{corbett2017algorithmic}
S.~Corbett-Davies, E.~Pierson, A.~Feller, S.~Goel, and A.~Huq.
\newblock Algorithmic decision making and the cost of fairness.
\newblock In \emph{Proceedings of the 23rd acm sigkdd international conference
  on knowledge discovery and data mining}, pages 797--806, 2017.

\bibitem[D'Amour et~al.(2020)D'Amour, Srinivasan, Atwood, Baljekar, Sculley,
  and Halpern]{d2020fairness}
A.~D'Amour, H.~Srinivasan, J.~Atwood, P.~Baljekar, D.~Sculley, and Y.~Halpern.
\newblock Fairness is not static: deeper understanding of long term fairness
  via simulation studies.
\newblock In \emph{Proceedings of the 2020 Conference on Fairness,
  Accountability, and Transparency}, pages 525--534, 2020.

\bibitem[Dimitrakakis et~al.(2019)Dimitrakakis, Liu, Parkes, and
  Radanovic]{dimitrakakis2019bayesian}
C.~Dimitrakakis, Y.~Liu, D.~Parkes, and G.~Radanovic.
\newblock Bayesian fairness.
\newblock In \emph{AAAI}, 2019.

\bibitem[Ensign et~al.(2018{\natexlab{a}})Ensign, Friedler, Neville,
  Scheidegger, and Venkatasubramanian]{ensign2018runaway}
D.~Ensign, S.~A. Friedler, S.~Neville, C.~Scheidegger, and
  S.~Venkatasubramanian.
\newblock Runaway feedback loops in predictive policing.
\newblock In \emph{Conference of Fairness, Accountability, and Transparency},
  2018{\natexlab{a}}.

\bibitem[Ensign et~al.(2018{\natexlab{b}})Ensign, Sorelle, Scott, Carlos, and
  Suresh]{ensign2018decision}
D.~Ensign, F.~Sorelle, N.~Scott, S.~Carlos, and V.~Suresh.
\newblock Decision making with limited feedback.
\newblock In \emph{Algorithmic Learning Theory}, pages 359--367,
  2018{\natexlab{b}}.

\bibitem[Fehr et~al.(2009)Fehr, Goette, and Zehnder]{fehr2009behavioral}
E.~Fehr, L.~Goette, and C.~Zehnder.
\newblock A behavioral account of the labor market: The role of fairness
  concerns.
\newblock \emph{Annu. Rev. Econ.}, 1\penalty0 (1):\penalty0 355--384, 2009.

\bibitem[Fuster et~al.(2018)Fuster, Goldsmith-Pinkham, Ramadorai, and
  Walther]{fuster2018predictably}
A.~Fuster, P.~Goldsmith-Pinkham, T.~Ramadorai, and A.~Walther.
\newblock Predictably unequal? the effects of machine learning on credit
  markets.
\newblock \emph{The Effects of Machine Learning on Credit Markets}, 2018.

\bibitem[Gillen et~al.(2018)Gillen, Jung, Kearns, and Roth]{gillen2018online}
S.~Gillen, C.~Jung, M.~Kearns, and A.~Roth.
\newblock Online learning with an unknown fairness metric.
\newblock In \emph{Advances in Neural Information Processing Systems}, pages
  2600--2609, 2018.

\bibitem[Gotham(1998)]{gotham1998race}
K.~F. Gotham.
\newblock Race, mortgage lending and loan rejections in a us city.
\newblock \emph{Sociological Focus}, 31\penalty0 (4):\penalty0 391--405, 1998.

\bibitem[Hardt et~al.(2016)Hardt, Price, and Srebro]{hardt2016equality}
M.~Hardt, E.~Price, and N.~Srebro.
\newblock Equality of opportunity in supervised learning.
\newblock In \emph{Advances in neural information processing systems}, pages
  3315--3323, 2016.

\bibitem[Hashimoto et~al.(2018)Hashimoto, Srivastava, Namkoong, and
  Liang]{pmlr-v80-hashimoto18a}
T.~Hashimoto, M.~Srivastava, H.~Namkoong, and P.~Liang.
\newblock Fairness without demographics in repeated loss minimization.
\newblock In J.~Dy and A.~Krause, editors, \emph{Proceedings of the 35th
  International Conference on Machine Learning}, volume~80 of \emph{Proceedings
  of Machine Learning Research}, pages 1929--1938. PMLR, 2018.

\bibitem[Heidari and Krause(2018)]{heidari2018preventing}
H.~Heidari and A.~Krause.
\newblock Preventing disparate treatment in sequential decision making.
\newblock In \emph{Proceedings of the 27th International Joint Conference on
  Artificial Intelligence}, pages 2248--2254, 2018.

\bibitem[Heidari et~al.(2019)Heidari, Nanda, and Gummadi]{heidari2019long}
H.~Heidari, V.~Nanda, and K.~Gummadi.
\newblock On the long-term impact of algorithmic decision policies: Effort
  unfairness and feature segregation through social learning.
\newblock In \emph{International Conference on Machine Learning}, pages
  2692--2701, 2019.

\bibitem[Homonoff et~al.()Homonoff, O'Brien, and Sussman]{homonoff2019does}
T.~Homonoff, R.~O'Brien, and A.~B. Sussman.
\newblock Does knowing your fico score change financial behavior? evidence from
  a field experiment with student loan borrowers.
\newblock \emph{Review of Economics and Statistics}, pages 1--45.

\bibitem[Hu et~al.(2019)Hu, Immorlica, and Vaughan]{hu2019disparate}
L.~Hu, N.~Immorlica, and J.~W. Vaughan.
\newblock The disparate effects of strategic manipulation.
\newblock In \emph{Proceedings of the Conference on Fairness, Accountability,
  and Transparency}, pages 259--268, 2019.

\bibitem[Jabbari et~al.(2017)Jabbari, Joseph, Kearns, Morgenstern, and
  Roth]{jabbari2017fairness}
S.~Jabbari, M.~Joseph, M.~Kearns, J.~Morgenstern, and A.~Roth.
\newblock Fairness in reinforcement learning.
\newblock In \emph{Proceedings of the 34th International Conference on Machine
  Learning-Volume 70}, pages 1617--1626. JMLR. org, 2017.

\bibitem[James(1978)]{james1978estimation}
I.~James.
\newblock Estimation of the mixing proportion in a mixture of two normal
  distributions from simple, rapid measurements.
\newblock \emph{Biometrics}, pages 265--275, 1978.

\bibitem[Joseph et~al.(2016)Joseph, Kearns, Morgenstern, and
  Roth]{joseph2016fairness}
M.~Joseph, M.~Kearns, J.~H. Morgenstern, and A.~Roth.
\newblock Fairness in learning: Classic and contextual bandits.
\newblock In \emph{Advances in Neural Information Processing Systems}, pages
  325--333, 2016.

\bibitem[Joseph et~al.(2018)Joseph, Kearns, Morgenstern, Neel, and
  Roth]{joseph2018meritocratic}
M.~Joseph, M.~Kearns, J.~Morgenstern, S.~Neel, and A.~Roth.
\newblock Meritocratic fairness for infinite and contextual bandits.
\newblock In \emph{Proceedings of the 2018 AAAI/ACM Conference on AI, Ethics,
  and Society}, pages 158--163. ACM, 2018.

\bibitem[Kallus and Zhou(2018)]{kallus2018residual}
N.~Kallus and A.~Zhou.
\newblock Residual unfairness in fair machine learning from prejudiced data.
\newblock In \emph{Proceedings of the 35th International Conference on Machine
  Learning}, 2018.

\bibitem[Kannan et~al.(2019)Kannan, Roth, and Ziani]{kannan2019downstream}
S.~Kannan, A.~Roth, and J.~Ziani.
\newblock Downstream effects of affirmative action.
\newblock In \emph{Proceedings of the Conference on Fairness, Accountability,
  and Transparency}, pages 240--248. ACM, 2019.

\bibitem[Khajehnejad et~al.(2019)Khajehnejad, Tabibian, Sch{\"o}lkopf, Singla,
  and Gomez-Rodriguez]{khajehnejad2019optimal}
M.~Khajehnejad, B.~Tabibian, B.~Sch{\"o}lkopf, A.~Singla, and
  M.~Gomez-Rodriguez.
\newblock Optimal decision making under strategic behavior.
\newblock \emph{arXiv preprint arXiv:1905.09239}, 2019.

\bibitem[Kilbertus et~al.(2019)Kilbertus, Gomez-Rodriguez, Sch{\"o}lkopf,
  Muandet, and Valera]{kilbertus2019improving}
N.~Kilbertus, M.~Gomez-Rodriguez, B.~Sch{\"o}lkopf, K.~Muandet, and I.~Valera.
\newblock Improving consequential decision making under imperfect predictions.
\newblock \emph{arXiv preprint arXiv:1902.02979}, 2019.

\bibitem[Li et~al.(2019)Li, Liu, and Ji]{li2019combinatorial}
F.~Li, J.~Liu, and B.~Ji.
\newblock Combinatorial sleeping bandits with fairness constraints.
\newblock In \emph{IEEE INFOCOM 2019-IEEE Conference on Computer
  Communications}, pages 1702--1710. IEEE, 2019.

\bibitem[Liu et~al.(2018)Liu, Dean, Rolf, Simchowitz, and
  Hardt]{pmlr-v80-liu18c}
L.~T. Liu, S.~Dean, E.~Rolf, M.~Simchowitz, and M.~Hardt.
\newblock Delayed impact of fair machine learning.
\newblock In J.~Dy and A.~Krause, editors, \emph{Proceedings of the 35th
  International Conference on Machine Learning}, volume~80 of \emph{Proceedings
  of Machine Learning Research}, pages 3150--3158. PMLR, 2018.

\bibitem[Liu et~al.(2020)Liu, Wilson, Haghtalab, Kalai, Borgs, and
  Chayes]{liu2019disparate}
L.~T. Liu, A.~Wilson, N.~Haghtalab, A.~T. Kalai, C.~Borgs, and J.~Chayes.
\newblock The disparate equilibria of algorithmic decision making when
  individuals invest rationally.
\newblock In \emph{Proceedings of the 2020 Conference on Fairness,
  Accountability, and Transparency}, pages 381--391, 2020.

\bibitem[Liu et~al.(2017)Liu, Radanovic, Dimitrakakis, Mandal, and
  Parkes]{liu2017calibrated}
Y.~Liu, G.~Radanovic, C.~Dimitrakakis, D.~Mandal, and D.~C. Parkes.
\newblock Calibrated fairness in bandits.
\newblock \emph{arXiv preprint arXiv:1707.01875}, 2017.

\bibitem[Mallett(2018)]{mallett2018disproportionate}
C.~A. Mallett.
\newblock Disproportionate minority contact in juvenile justice: Today’s, and
  yesterdays, problems.
\newblock \emph{Criminal justice studies}, 31\penalty0 (3):\penalty0 230--248,
  2018.

\bibitem[Mouzannar et~al.(2019)Mouzannar, Ohannessian, and
  Srebro]{socialequality2019}
H.~Mouzannar, M.~I. Ohannessian, and N.~Srebro.
\newblock From fair decision making to social equality.
\newblock In \emph{Proceedings of the Conference on Fairness, Accountability,
  and Transparency}, FAT* ’19, page 359–368, 2019.

\bibitem[Nabi et~al.(2019)Nabi, Malinsky, and Shpitser]{nabi2019learning}
R.~Nabi, D.~Malinsky, and I.~Shpitser.
\newblock Learning optimal fair policies.
\newblock \emph{Proceedings of machine learning research}, 97:\penalty0 4674,
  2019.

\bibitem[Patil et~al.(2019)Patil, Ghalme, Nair, and
  Narahari]{patil2019achieving}
V.~Patil, G.~Ghalme, V.~Nair, and Y.~Narahari.
\newblock Achieving fairness in the stochastic multi-armed bandit problem.
\newblock \emph{arXiv preprint arXiv:1907.10516}, 2019.

\bibitem[Patra and Sen(2016)]{patra2016estimation}
R.~K. Patra and B.~Sen.
\newblock Estimation of a two-component mixture model with applications to
  multiple testing.
\newblock \emph{Journal of the Royal Statistical Society: Series B (Statistical
  Methodology)}, 78\penalty0 (4):\penalty0 869--893, 2016.

\bibitem[Paxton et~al.(2000)Paxton, Graham, and Thraen]{paxton2000modeling}
J.~Paxton, D.~Graham, and C.~Thraen.
\newblock Modeling group loan repayment behavior: New insights from burkina
  faso.
\newblock \emph{Economic Development and cultural change}, 48\penalty0
  (3):\penalty0 639--655, 2000.

\bibitem[Reserve(2007)]{reserve2007report}
U.~F. Reserve.
\newblock Report to the congress on credit scoring and its effects on the
  availability and affordability of credit.
\newblock \emph{Board of Governors of the Federal Reserve System}, 2007.

\bibitem[Tang et~al.(2020)Tang, Ho, and Liu]{tang2020fair}
W.~Tang, C.-J. Ho, and Y.~Liu.
\newblock Fair bandit learning with delayed impact of actions.
\newblock \emph{arXiv preprint arXiv:2002.10316}, 2020.

\bibitem[Williams and Kolter(2019)]{williams2019dynamic}
J.~Williams and J.~Z. Kolter.
\newblock Dynamic modeling and equilibria in fair decision making.
\newblock \emph{arXiv preprint arXiv:1911.06837}, 2019.

\bibitem[Zhang and Liu(2020)]{zhang2020fairness}
X.~Zhang and M.~Liu.
\newblock Fairness in learning-based sequential decision algorithms: A survey.
\newblock \emph{arXiv preprint arXiv:2001.04861}, 2020.

\bibitem[Zhang et~al.(2019{\natexlab{a}})Zhang, Khalili, and Liu]{xueru1}
X.~Zhang, M.~M. Khalili, and M.~Liu.
\newblock Long-term impacts of fair machine learning.
\newblock \emph{Ergonomics in Design}, 2019{\natexlab{a}}.
\newblock \doi{10.1177/1064804619884160}.

\bibitem[Zhang et~al.(2019{\natexlab{b}})Zhang, Khalili, Tekin, and
  Liu]{zhang2019group}
X.~Zhang, M.~M. Khalili, C.~Tekin, and M.~Liu.
\newblock Group retention when using machine learning in sequential decision
  making: the interplay between user dynamics and fairness.
\newblock In \emph{Advances in Neural Information Processing Systems}, pages
  15243--15252, 2019{\natexlab{b}}.

\end{thebibliography}
\newpage
\appendix
\section{Notations}
\begin{center}
\begin{tabular}{  m{2.2em} | p{12.5cm}  } 
\hline
$\mathcal{G}_s$ & demographic group, $s\in \{a,b\}$  \\ 
\hline
$X_t$& feature at $t$, $X\in \mathbb{R}^d$  \\
\hline
$Y_t$& true qualification state at $t$, $Y_t \in \{0,1\}$  \\
 \hline
 $S$& sensitive attribute $S \in \{a,b\}$  \\
\hline 
$p_s$ & group proportion of $s$, i.e., 
$p_s = \mathbb{P}(S=s)$\\
\hline
$D_t$& institute's decision at $t$, $D_t\in \{0,1\}$ \\
\hline 
$\pi_t^s(x)$ & policy for $\mathcal{G}_s$ at $t$, i.e., $\pi^s_t(x) = 
\mathbb{P}(D_t = 1\mid X_t = x, S = s)$\\
\hline
$G_y^s(x)$ & feature distribution of unqualified ($y=0$) or qualified ($y=1$) people from 
$\mathcal{G}_s$
, i.e., 
$\mathbb{P}(X_t=x\mid Y_t=y,S=s)$ \\ 
\hline
$\mathbb{G}_y^s(x)$ & CDF of $G_y^s(x)$, i.e., $\mathbb{G}_y^s(x) = \int_{-\infty}^x {G}_y^s(z)dx$\\
\hline 
$\mathcal{P}^s_\mathcal{C}(x)$ & a probability distribution over $X_t$ that specifies the fairness metric $\mathcal{C}$
\\
\hline 
$\alpha_t^s$ & qualification rate of 
$\mathcal{G}_s$
at $t$, i.e., 
$\mathbb{P}(Y_t = 1\mid S = s)$\\
\hline 
$\gamma^s_t(x)$ & qualification profile of $\mathcal{G}_s$ at $t$, i.e., 
$\mathbb{P}(Y_t = 1 \mid X_t = x, S = s)$\\
\hline 
$T_{yd}^s$ & transition probability of $\mathcal{G}_s$,
i.e., 
$\mathbb{P}(Y_{t+1} = 1\mid Y_t = y, D_t = d, S = s)$\\
\hline 
$u_+$ & benefit the institute gains by accepting a qualified individual \\
\hline 
$u_-$ & cost incurred to the institute by accepting an unqualified individual \\
\hline
$\theta^s_{\mathcal{C}}$ & threshold in a threshold policy for 
$\mathcal{G}_s$ under constraint $\mathcal{C}$, i.e., $\pi^s_t(x) = \textbf{1}(x\geq \theta^s_{\mathcal{C}})$\\

\hline 
$\widehat{\alpha}^s_{\mathcal{C}}$ & qualification rate of 
$\mathcal{G}_s$
at the equilibrium under policy with constraint $\mathcal{C} \in \{\texttt{UN}, \texttt{DP}, \texttt{EqOpt}\}$\\
\end{tabular}
\end{center}
\section{Additional results on experiments} \label{app:exp}
\paragraph{Gaussian distributed synthetic data.}
We first verify the conclusions in Section \ref{sec:evolution} and \ref{sec:impacts} using the synthetic data, where $X_t\mid Y_t = y,S=s\sim \mathcal{N}(\mu^s_y,(\sigma^{s})^2)$. 

In Section \ref{sec:evolution}, Figure \ref{fig:trajectory} illustrates sample paths of $\{(\alpha^a_t,\alpha^b_t)\}_t$ under \texttt{EqOpt}, \texttt{DP}, \texttt{UN} optimal policies. The specific parameters are as follows: $[\mu^a_0,\mu^a_1,\mu^b_0,\mu^b_1] = [-5,5,-5,5]$, $[\sigma^a,\sigma^b] = [5,5]$, $\frac{u_+}{u_-} = 1$, $p_a = p_b = 0.5$, $[T_{00}^a, T_{01}^a,T_{10}^a,T_{11}^a] = [0.4,0.5,0.5,0.9]$, $[T_{00}^b, T_{01}^b,T_{10}^b,T_{11}^b] = [0.1,0.5,0.5,0.7]$. 

Table \ref{table:inv_dist} and \ref{table:inv_tran} illustrate the impacts of \texttt{EqOpt} and \texttt{DP} fairness on the equilibrium, where each column shows the value of $\widehat{\alpha}^a_{\mathcal{C}} - \widehat{\alpha}^b_{\mathcal{C}}$ when $\mathcal{C} = \texttt{UN}, \texttt{EqOpt}, \texttt{DP}$ under different sets of parameters. Specifically, in Table \ref{table:inv_dist}, $p_a = p_b = 0.5$, $\frac{u_+}{u_-} = 1$, $[\mu^s_0,\mu^s_1,\sigma^s] = [-5, 5, 5]$,$\forall s\in \{a,b\}$ and transitions satisfying either Condition \ref{con:transition}\ref{con:transition_I} or \ref{con:transition}\ref{con:transition_II} are randomly generated; in Table \ref{table:inv_tran}, transitions satisfying Condition \ref{con:transition}\ref{con:transition_II} and $G^s_y(x)$ that satisfy Condition \ref{ass:inv_transition} are randomly generated, $\frac{u_+}{u_-}$ also satisfies the condition in Theorem \ref{thm:transition}. These results are consistent with Theorem \ref{thm:comp_dp_eqopt} and \ref{thm:transition}.
\begin{table}[h!]
 \caption{$\widehat{\alpha}^a_{\mathcal{C}} - \widehat{\alpha}^b_{\mathcal{C}}$ when $\mathcal{C} = \texttt{UN}, \texttt{EqOpt}, \texttt{DP}$: $G^a_y(x) = G^b_y(x)$ and $T^a_{yd} \neq T^b_{yd}$.}
\centering
\resizebox{\textwidth}{!}{
\begin{tabular}{ |p{2.25cm}|p{1.cm} p{0.8cm} p{0.8cm} p{1.cm} p{1.cm} p{1.cm} p{0.8cm} p{1.cm} p{1.cm}| }
 \hline
 & \multicolumn{9}{c|}{Condition \ref{con:transition}\ref{con:transition_I}} \\
 \hline
\texttt{UN} ($\times 10^{-2}$)& -18.45 & 16.89& 19.82&-7.21& -16.34 & -26.56 & 16.66 & -6.03 & -38.63\\
 \hline
\texttt{EqOpt} ($\times 10^{-2}$) & -21.11   &19.13&   21.78& -7.62& -18.56 &-29.21 & 18.14 & -6.28& -41.52 \\
\hline
\texttt{DP} ($\times 10^{-2}$)&  -27.98  & 23.11   &25.65& -8.90& -23.11 & -33.22 & 21.09  & -6.66 & -43.35\\
\hline
 & \multicolumn{9}{c|}{Condition \ref{con:transition}\ref{con:transition_II}} \\
 \hline
\texttt{UN} ($\times 10^{-2}$)& -19.05 & 18.18 & -0.70& -58.80 & -40.91& 61.30& 12.82& -44.67&2.66 \\
 \hline
\texttt{EqOpt} ($\times 10^{-2}$) & -18.40 & 17.98 &-0.64& -57.62 & -34.50&48.66 &12.35 & -41.43&2.61 \\
\hline
\texttt{DP} ($\times 10^{-2}$)& -17.52  &   17.73  &-0.57 & -55.62 & -28.97&36.10 &11.69 & -37.97&2.57 \\
\hline
\end{tabular}
}
\label{table:inv_dist}
\end{table}

\begin{table}[h!]
 \caption{$\widehat{\alpha}^a_{\mathcal{C}} - \widehat{\alpha}^b_{\mathcal{C}}$ when $\mathcal{C} = \texttt{UN}, \texttt{EqOpt}, \texttt{DP}$: $G^a_y(x)\neq G^b_y(x)$ and $T^a_{yd} = T^b_{yd}$ under Condition \ref{con:transition}\ref{con:transition_II}.}
\centering
\resizebox{\textwidth}{!}{
\begin{tabular}{ |p{2.3cm}|p{0.8cm} p{0.8cm} p{0.8cm} p{0.8cm} p{0.8cm} p{0.8cm} p{0.8cm} p{0.8cm} p{0.8cm}| }
 \hline
\texttt{UN} ($\times 10^{-2}$)& 1.88 & 26.35& 2.12&0.38&5.64&12.35&11.70&0.20&4.12\\
 \hline
\texttt{EqOpt} ($\times 10^{-2}$) & 0.57   &17.43&   1.75& 0.32&5.05&7.81&7.21&0.18&1.68\\
\hline
\texttt{DP} ($\times 10^{-4}$)&  16.26  & 18.29   &-5.94& -0.93&-2.25&1.47&0.92&-1.68&-0.80\\
\hline
\end{tabular}
}
\label{table:inv_tran}
\end{table}
\paragraph{FICO score data.}
\begin{wrapfigure}[15]{r}{0.4\textwidth}
\centering
\includegraphics[trim=1cm 0cm 1cm 1.cm,clip=true,width=0.4\textwidth]{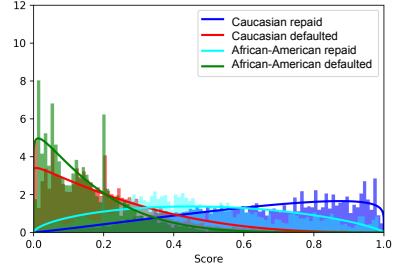}
\caption{The feature distributions: the scores are rescaled so that they are between 0 and 1.}
\label{fig:fico-beta}
\end{wrapfigure} 
% What is the FICO dataset
From the pre-processed FICO dataset, we got $\mathbb{P}(X=x \mid S=s)$ and $\mathbb{P}(Y=1\mid X=x, S=s)$. In this experiment, we consider two demographic groups, $12\%$ the African American $\mathcal{G}_{AA}$ and $88\%$ the Caucasian $\mathcal{G}_{C}$. According to the empirical feature distributions, we can first simulate the FICO dataset with credit scores $X$, repayment $Y$, and sensitive attribute $S$. We then compute the initial qualification (repayment) rates $({\alpha}^{AA}_0,{\alpha}^{C}_0)$, which is $0.34$ in $\mathcal{G}_{AA}$ and $ 0.76$ in $\mathcal{G}_{C}$; and fit Beta distributions to get the feature distribution $\mathbb{P}(X=x \mid S=s, Y=y)$, as shown in Fig. \ref{fig:fico-beta}. Since the feature distributions are the Beta distributions, we can compute optimal \texttt{UN}, \texttt{EqOpt}, \texttt{DP} thresholds directly using Eqn. \eqref{eq:opt_fair_policy} and update the repayment rates based on dynamics \eqref{eq:dynamics}. This process proceeds and $({\alpha}^{AA}_t,{\alpha}^{C}_t)$ changes over time.

We then consider the demographic-invariant (D-invariant) and demographic-variant (D-variant) transitions and examine the impact of the transition interventions. Specifically, in the context of loan repayment prediction and group lending \citep{paxton2000modeling}, the transitions would satisfy Condition \ref{con:transition}\ref{con:transition_II}. Fig. \ref{fig:fico_full} illustrates the equilibria $(\widehat{\alpha}^{AA},\widehat{\alpha}^C)$ under different sets of transitions. Their specific values are listed as follows, where the system has an equilibrium in all cases. 
\begin{eqnarray*}
\text{D-invariant:  } & T_{00}=0.1, T_{11}=0.9,& T_{10},T_{01}\in \{0.1,\,0.3,\,0.5,\,0.7,\,0.9\}\\
\text{D-variant:  } & T^{AA}_{00}=0.1, T^{AA}_{11}=0.9,&T^{AA}_{10},T^{AA}_{01}\in \{0.20,\,0.36,\,0.53,\,0.69,\,0.85\}\\
& T^{C}_{00}=0.4, T^{C}_{11}=0.9, & T^{C}_{10},T^{C}_{01}\in \{0.45,\,0.55,\,0.65,\,0.75,\,0.85\}
\end{eqnarray*}
%In the D-invariant case,$T_{00}=0.1$, $T_{11}=0.9$, and  $T_{10}=T_{01}=0.1,\,0.3,\,0.5,\,0.7,\,0.9$. In the D-variant case, $T^{AA}_{00}=0.1$, $T^{AA}_{11}=0.9$, $T^{AA}_{10}=T^{AA}_{01}=0.20,\,0.36,\,0.53,\,0.69,\,0.85$; $T^{C}_{00}=0.4$, $T^{C}_{11}=0.9$,  $T^{C}_{10}=T^{C}_{01}=0.45,\,0.55,\,0.65,\,0.75,\,0.85$.
%Under all the transitions, the system has an equilibrium. The results are shown in Fig. \ref{fig:fico_full}.

\begin{figure}
\centering
\begin{subfigure}[b]{0.31\textwidth}
\centering
\includegraphics[trim=0cm 0cm 0.5cm 0.5cm,clip=true, width=\textwidth]{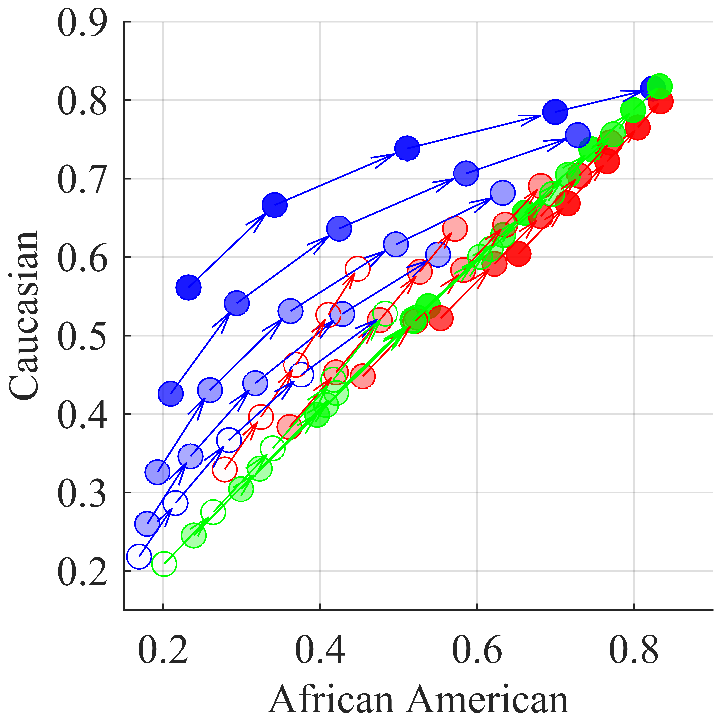}
\caption{D-invariant transitions}
\end{subfigure}
\hspace{3cm}
\begin{subfigure}[b]{0.31\textwidth}
\centering
\includegraphics[trim=0cm 0cm 0.5cm 0.5cm,clip=true, width=\textwidth]{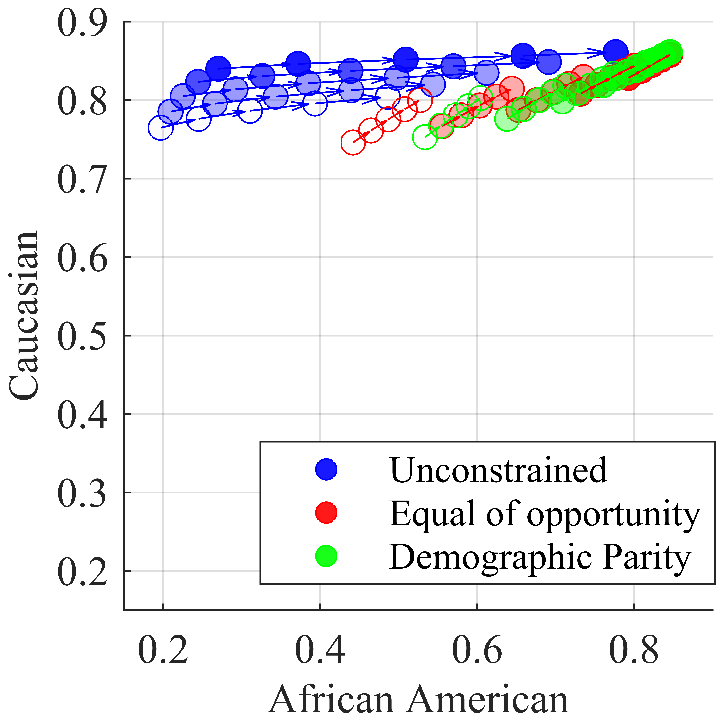}
\caption{%Different
D-variant transitions}
\end{subfigure}
\caption{Results on the FICO dataset: Points are the repayment rates of  $\mathcal{G}_{AA}, \mathcal{G}_{C}$ at the equilibria under Condition \ref{con:transition}\ref{con:transition_II} with different sets of transitions. Arrows indicate the direction of increasing $T_{01}^s$; a more transparent point represents the smaller value of $T_{10}^s$. In panel \subref{fig:sm_t}, $T^{AA}_{yd} = T^{C}_{yd}$, while in panel \subref{fig:diff_t}, $T^{AA}_{yd} < T^{C}_{yd}$.}
\label{fig:fico_full}
\end{figure} 

\paragraph{COMPAS data.}
\begin{figure}
\centering
\includegraphics[trim=3cm 2cm 3cm 3cm,clip=true, width=0.4\textwidth]{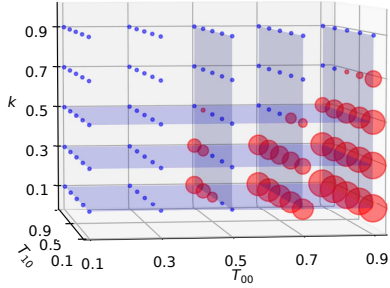}
\caption{The oscillation level of recidivism rates in the long run is represented by the size of red circles, of which the bigger one represents severer oscillation.
The blue dots represent the scenarios with a unique equilibrium. $T_{00}$ and $T_{10}$ axes represent their values respectively; $k$ axis represents the scalar $k$, where $T_{01}=k \times T_{00}$ and $T_{11}=k\times T_{10}$.}
\label{fig:compas_osi}
\end{figure} 

The COMPAS dataset is a high-dimensional dataset with mixed data types (e.g., continuous, binary, and categorical). The number of samples is $5278$. There are 10 features and two demographic groups: $60\%$ African American ($\mathcal{G}_{AA}$) and $40\%$ Caucasian ($\mathcal{G}_{C}$). The qualification rate in COMPAS is the recidivism rate. The initial recidivism rates are $52.3\%$ in $\mathcal{G}_{AA}$ and $39.1\%$ in $\mathcal{G}_{C}$. 

Due to the complexity of the feature distribution, the system can be either in the equilibrium state or oscillate between two recidivism rates in the long-run. Since the feature distribution is fixed and approximated from the COMPAS dataset, we investigate that under which transitions, the system is in an equilibrium state under unconstrained optimal policy. For this purpose, it is sufficient to study the demographic-invariant transitions $T^{AA}=T^{C}$ and consider the entire population without distinguishing two groups; moreover, in the context of recidivism prediction, the transitions would satisfy Condition \ref{con:transition} \ref{con:transition_I}. Therefore, we consider $T_{00}$ and $T_{10}$ taking the values $0.1$, $0.3$, $0.5$, $0.7$ and $0.9$. Figure \ref{fig:compas_osi} shows the results when $T_{01}= k \times T_{00}$ and $T_{11} = k \times T_{10}$. We find that when Corollary \ref{rmk:unique} is satisfied, e.g., when $k \geq 0.5$, most of the corresponding systems have a unique equilibrium (blue dot). Moreover, when $T_{00}\leq 0.5$, the system is also mostly in the unique equilibrium state. For the other transitions, the system oscillates between two states (red circle). We also show the results under all the combinations of $T_{01}$ and $T_{11}$ in Figure \ref{fig:compas_all}. 

Next, we study the impact of policy interventions in cases with equilibrium.
We randomly choose the transitions under which the system has an equilibrium and then apply the unconstrained policy with optimal threshold (classifier threshold $0.5$), a higher and a lower threshold (classifier thresholds $0.8$ and $0.2$ respectively) compared to the optimum respectively. The results are show in Table \ref{tbl:compas_full}.
\begin{table}[H]
%\begin{wraptable}[20]{r}{0.58\textwidth}
\centering
\caption{Recidivism rates in the long run. 
$\texttt{UN}^{*}$: unconstrained policy ($\texttt{UN}$) with the optimal threshold; 
$\texttt{UN}_{\theta_H}$: $\texttt{UN}$ with a higher threshold;
$\texttt{UN}_{\theta_L}$: $\texttt{UN}$ with a lower threshold.}
\label{tbl:compas_full}
\begin{tabular}{llll}
\hline
                    & $\texttt{UN}^*$ & $\texttt{UN}_{\theta_H}$ & $\texttt{UN}_{\theta_L}$ \\ \hline
$\widehat{\alpha}_1$  & 0.164         & 0.166                    & 0.147              \\ 
$\widehat{\alpha}_2$  & 0.343         & 0.356                    & 0.307              \\ 
$\widehat{\alpha}_3$  & 0.230         & 0.246                    & 0.162              \\ 
$\widehat{\alpha}_4$  & 0.306         & 0.3415                    & 0.156              \\ 
$\widehat{\alpha}_5$  & 0.162         & 0.166                    & 0.140\\ \hline
\end{tabular}   
%\end{wraptable}
\end{table}

\begin{figure}
\centering
\begin{subfigure}[b]{0.32\textwidth}
\includegraphics[trim=3.2cm 2.cm 3.1cm 2.8cm,clip=true,width=\textwidth]{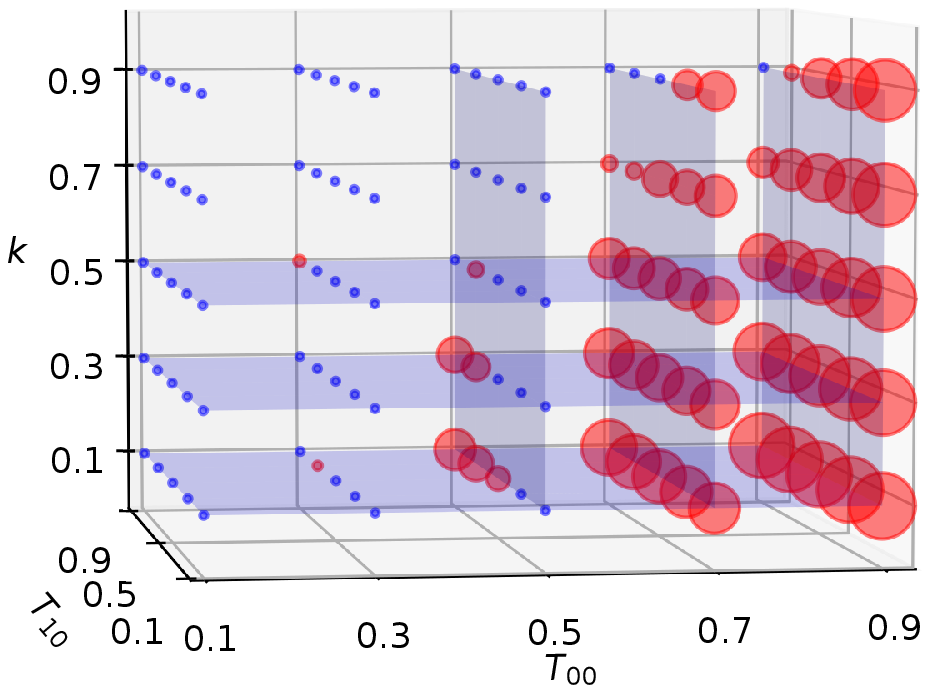}
\caption{$T_{01} = 0.1\times T_{00}$.}
\end{subfigure}
\begin{subfigure}[b]{0.32\textwidth}
\includegraphics[trim=3.2cm 2.cm 3.1cm 2.8cm,clip=true,width=\textwidth]{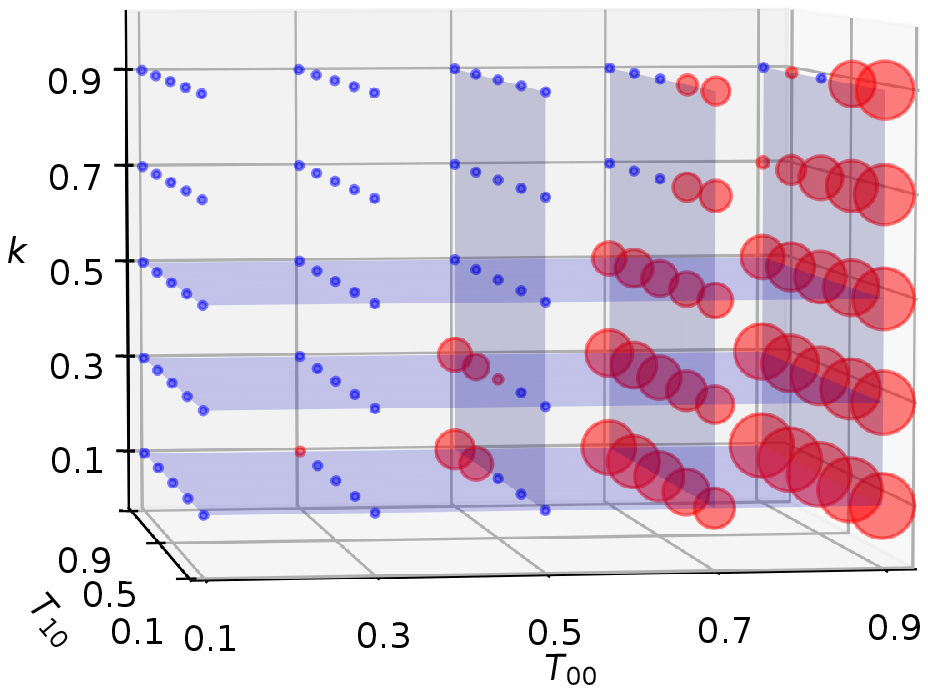}
\caption{$T_{01} = 0.3\times T_{00}$.}
\end{subfigure}
\begin{subfigure}[b]{0.32\textwidth}
\includegraphics[trim=3.2cm 2.cm 3.1cm 2.8cm,clip=true,width=\textwidth]{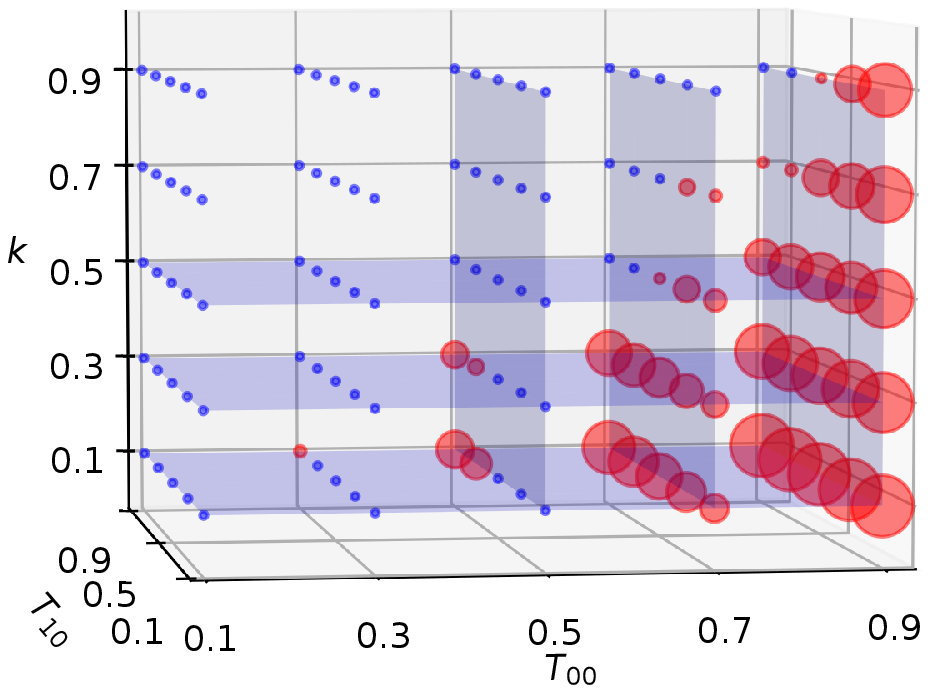}
\caption{$T_{01} = 0.5\times T_{00}$.}
\end{subfigure}

\begin{subfigure}[b]{0.32\textwidth}
\includegraphics[trim=3.2cm 2.cm 3.1cm 2.8cm,clip=true,width=\textwidth]{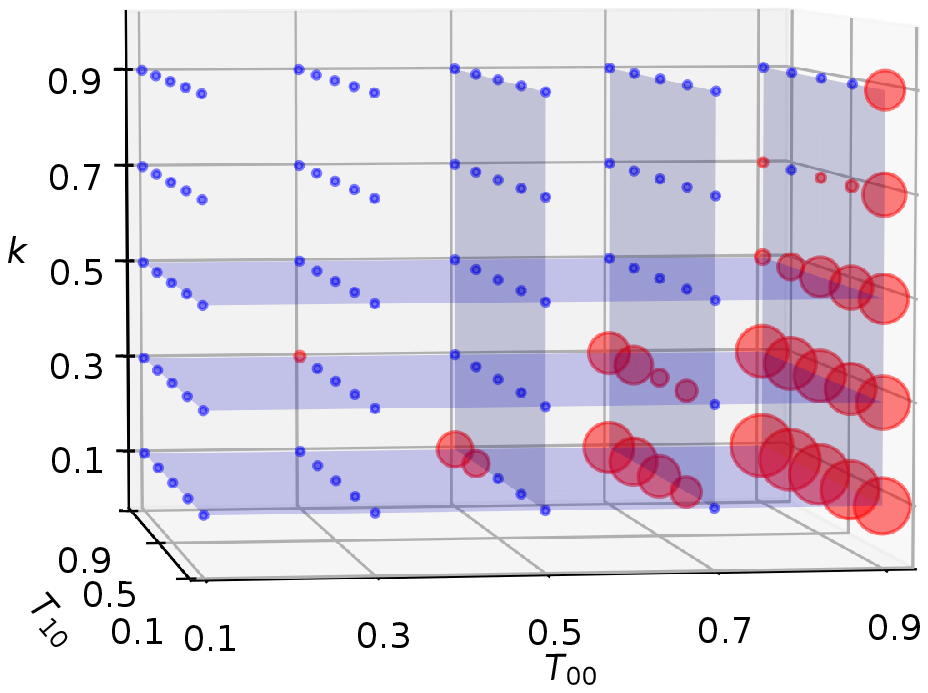}
\caption{$T_{01} = 0.7\times T_{00}$.}
\end{subfigure}
\begin{subfigure}[b]{0.32\textwidth}
\includegraphics[trim=3.2cm 2.cm 3.1cm 2.8cm,clip=true,width=\textwidth]{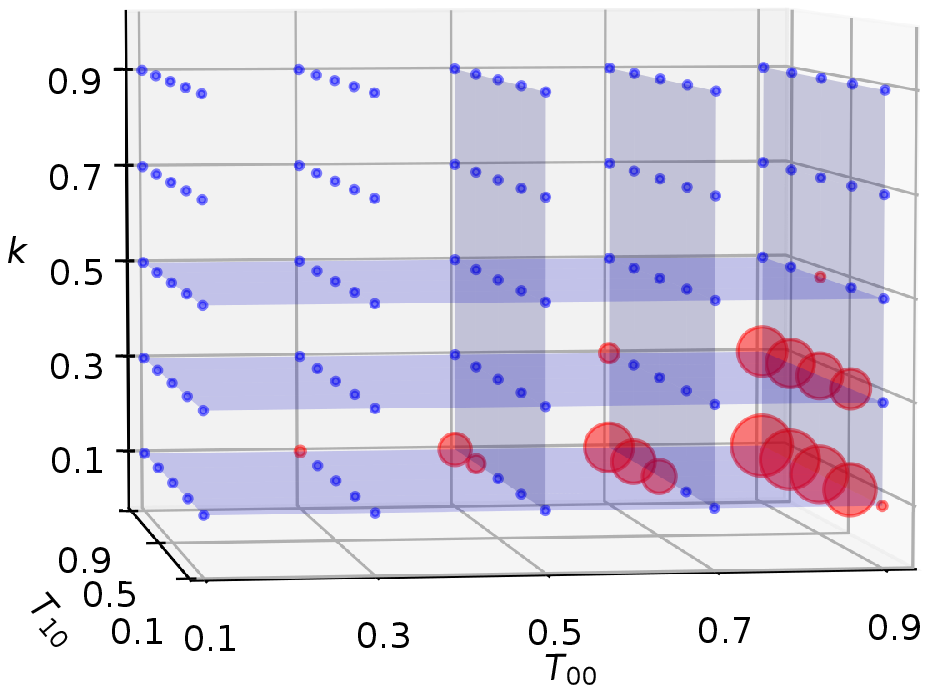}
\caption{$T_{01} = 0.9\times T_{00}$.}
\end{subfigure}
\caption{The oscillation level of recidivism rates under different transitions. In each panel, scalar $k$ denotes the ratio, of which $T_{11} = k\times T_{10}$.}
\label{fig:compas_all}
\end{figure}
\section{Generalization to high-dimensional feature space}\label{app:highDim}
All analysis and conclusions in this paper can be generalized to high-dimensional feature space $x\in \mathbb{R}^d$, where the qualification profile of $\mathcal{G}_s$ is defined as $\gamma^s_t(\mathbf{x})=\mathbb{P}(Y_t = 1 \mid \mathbf{X}_t =\mathbf{x}, S=s )\in[0,1],\;\mathbf{x}\in \mathbb{R}^d$. Different from one-dimensional case where decisions are made based on features, here decisions are made based on  $\gamma^s_t(\mathbf{x})$, i.e., 
high-dimensional features are mapped into a one-dimensional space first and decisions are made in this transformed space. 
The threshold policy in this case becomes $\pi^s_t(\mathbf{x}) = \mathbf{1}(\gamma^s_t(\mathbf{x}) \geq \theta^{s}_t)$ with threshold $\theta^{s}_t\in[0,1]$. Let $\gamma_t^{s^{-1}}(\theta)\subset \mathbb{R}^d$ be defined as the preimage of $\theta$ under qualification profile $\gamma_t^s$, then all analysis in one-dimensional settings can be adjusted using $\gamma_t^{s^{-1}}(\cdot)$. 
For example, Assumption \ref{ass:mono-inference} in high-dimensional case can be adjusted to the following: \textit{$\forall s\in\{a,b\}$, given any two thresholds $0 \leq \theta_j^s < \theta_k^s \leq 1$, we have $\gamma_t^{s^{-1}}(K^s) \subset \gamma_t^{s^{-1}}(J^s)$, where $J^s = \{\theta:  \theta \in [ \theta_j^s,1] \}$ and   $K^s = \{\theta:  \theta\in [ \theta_k^s,1] \}$;} in other words, if an individual can get accepted by a policy with the higher threshold, it must be accepted if a policy with a lower threshold was used. Note that this assumption is still mild and always hold if $G^s_y(\mathbf{x})$ belongs to exponential family.

Specifically, if $\forall s\in \{a,b\}, \forall y\in \{0,1\}$, distribution of $X|Y=y,S=s$ belongs to exponential family and can be written as $G_y^s(\mathbf{x}) \coloneqq B(\mathbf{x})\exp\big(\langle\eta(\omega_y^s),\xi(\mathbf{x})\rangle-A(\omega_y^s)\big)$ for some functions $B(\cdot),\eta(\cdot),\xi(\cdot),A(\cdot)$, where $\langle \mathbf{x},\mathbf{y} \rangle$ represents inner product of two vectors $\mathbf{x},\mathbf{y}$ and $\omega_y^s$ is the parameter. Then $\frac{G_0^s(\mathbf{x})}{G_1^s(\mathbf{x})} = \exp\big( -\langle \eta^s,\xi(\mathbf{x})\rangle + A^s\big)$ where $\eta^s \coloneqq \eta(\omega_1^s)-\eta(\omega_0^s)$ and $A^s \coloneqq A(\omega_1^s)-A(\omega_0^s)$. Then
\begin{eqnarray*}
\gamma_t^{s^{-1}}(J^s) = \{\mathbf{x}: \gamma_t^{s}(\mathbf{x}) \geq \theta^s_j \} =  \{\mathbf{x}:\langle \eta^s,\xi(\mathbf{x})\rangle \geq A^s + \log\Big(\frac{\frac{1}{\alpha_t^s}-1}{\frac{1}{\theta^s_j}-1}\Big) \}
\end{eqnarray*}
If $ \theta^s_j <  \theta^s_k$, then $\log\Big(\frac{\frac{1}{\alpha_t^s}-1}{\frac{1}{\theta^s_j}-1}\Big)< \log\Big(\frac{\frac{1}{\alpha_t^s}-1}{\frac{1}{\theta^s_k}-1}\Big)$. We have $\gamma_t^{s^{-1}}(K^s) \subset \gamma_t^{s^{-1}}(J^s)$.

\section{Discussions}\label{app:discussion}
\paragraph{Transitions under Condition 1(C) or 1(D).}
This paper mainly focus on transitions satisfying Condition \ref{con:transition}\ref{con:transition_I} and \ref{con:transition}\ref{con:transition_II}. As mentioned in Section \ref{sec:evolutions_equilibrium}, there are the other two combinations: (C) $T^s_{01} \geq T^s_{00}$ and  $T^s_{11} \leq T^s_{10}$; (D) $T^s_{01} \leq T^s_{00}$ and  $T^s_{11} \geq T^s_{10}$, in which there is more uncertainty when conducting equilibrium analysis. The slight changes in the feature distributions or the values of transitions may change conclusions significantly. 

Because the system has equilibrium if there is solution to balanced equations defined as Eqn. \eqref{eq:balanced_eqn_a} in Appendix \ref{app:proofs}, i.e., $\frac{1}{\alpha^s} - 1 = \frac{1-g^{1s}(\theta^s(\alpha^a,\alpha^b))}{g^{0s}(\theta^s(\alpha^a,\alpha^b))}$, $\forall s\in \{a,b\}$. Since
\begin{eqnarray*}
  \frac{1-g^{1s}(\theta^s(\alpha^a,\alpha^b))}{g^{0s}(\theta^s(\alpha^a,\alpha^b))} = \frac{1 - ( T_{10}^s\mathbb{G}_1^s(\theta^s(\alpha^a,\alpha^b)) + T_{11}^s\big(1-\mathbb{G}_1^s(\theta^s(\alpha^a,\alpha^b))\big))}{ T_{00}^s\mathbb{G}_0^s(\theta^s(\alpha^a,\alpha^b)) + T_{01}^s\big(1-\mathbb{G}_0^s(\theta^s(\alpha^a,\alpha^b))\big)}.
\end{eqnarray*}
Under optimal (fair) policies and Condition \ref{con:transition}\ref{con:transition_I} or \ref{con:transition}\ref{con:transition_II}, $\frac{1-g^{1s}(\theta^s(\alpha^a,\alpha^b))}{g^{0s}(\theta^s(\alpha^a,\alpha^b))}$ is guaranteed to be either decreasing or increasing in $\alpha^s$. This monotonicity is critical to determine the properties (e.g., uniqueness, quantity, value, etc.) of the consequent equilibrium $(\widehat{\alpha}^a_{\mathcal{C}},\widehat{\alpha}^b_{\mathcal{C}})$ so that impacts of different fairness can be compared. In contrast, under Condition 1(C) or 1(D),  $\frac{1-g^{1s}(\theta^s(\alpha^a,\alpha^b))}{g^{0s}(\theta^s(\alpha^a,\alpha^b))}$ is no longer monotonic, and its intersection with function $\frac{1}{\alpha^s} - 1$, i.e., equilibrium, is thus hard to characterize. As a consequence, the impacts of different fairness constraints cannot be compared in general.   

%We show this empirically using examples on Gaussian distributed synthetic data.   

\iffalse
Figure \ref{fig:cond_D} illustrate the equilibria under \texttt{UN}, \texttt{EqOpt}, \texttt{DP} policies when transitions satisfy Condition 1(D), where $[\mu^a_0,\mu^a_1,\mu^b_0,\mu^b_1] = [-5,5,-5,5]$, $[\sigma^a,\sigma^b] = [5,5]$, $\frac{u_+}{u_-} = 1$, $p_a = p_b = 0.5$, $[T_{00}^b,T_{01}^b,T_{10}^b,T_{11}^b] = [0.4,0.2,0.3,0.41]$, $[T_{01}^a,T_{10}^a,T_{11}^a] = [0.05,0.9,0.98]$. It shows that the number of equilibria varies significantly as the value of $T_{00}^a$ changes and the uniqueness of equilibrium can be violated easily. In contrast, under Condition \ref{con:transition}\ref{con:transition_I} or \ref{con:transition}\ref{con:transition_II}, empirical results on Gaussian distributed synthetic data show that the equilibrium is unique

\begin{figure}[!h]
\centering
\includegraphics[trim=0cm 0 0cm 0,clip=true, width=0.3\textwidth]{}
\caption{Equilibria under Condition 1(D)}
\label{fig:cond_D}
\end{figure}
\fi

\paragraph{Comparison between sufficient conditions in Theorem \ref{thm:unique_equilibrium} and Lipschitz condition.}

Let a pair of qualification rats of $\mathcal{G}_a,\mathcal{G}_b$ be noted as $\alpha \coloneqq (\alpha^a,\alpha^b)\in [0,1]\times [0,1]$, and let mapping $\Phi:[0,1]\times [0,1] \to [0,1]\times [0,1]$ be defined such that dynamical system \eqref{eq:dynamics} can be written as $\alpha_{t+1} = \Phi(\alpha_{t})$. Then this dynamical system has an equilibrium $\widehat{\alpha}$ if $\Phi(\widehat{\alpha})=\widehat{\alpha}$. According to Banach Fixed Point Theorem, such equilibrium exists and is unique if the mapping $\Phi$ satisfies $L$-Lipschitz condition with $L < 1$, i.e., $\Phi$ is a contraction mapping. Specifically, $d(\Phi(\alpha_0),\Phi(\alpha_1))\leq Ld(\alpha_0,\alpha_1)$ for some distance function $d$ and Lipschitz constant $L< 1$. 

While Lipschitz condition also ensures the uniqueness of equilibrium, the sufficient conditions given in Theorem \ref{thm:unique_equilibrium} are weaker. Use unconstrained optimal policies as an example, in this case dynamics of two groups can be decoupled because threshold $\theta^s(\alpha^a,\alpha^b)$ used in $\mathcal{G}_s$ is independent of qualification of the other group $\alpha^{-s}$. Therefore, sufficient condition $|\frac{\partial h^s(\theta^s(\alpha^a,\alpha^b))}{\partial \alpha^{-s}} | = 0<1$ under Condition \ref{con:transition}\ref{con:transition_I} always holds. In contrast, for dynamics of $\mathcal{G}_s$ after decoupling $\alpha^s_{t+1} = \Phi^s(\alpha^s_t) = g^{0s}(\theta^s(\alpha^s_t))(1-\alpha^s_t)+g^{1s}(\theta^s(\alpha^s_t))\alpha^s_t$, $\Phi^s$ is not necessarily a contraction mapping. 

Although sufficient conditions in Theorem \ref{thm:unique_equilibrium} are weaker, they do not guarantee the stability of the equilibrium. In contrast, Lipschitz condition with $L<1$ ensures the unique equilibrium is also stable, i.e., we have $(\alpha^a_t,\alpha^b_t) \to (\widehat{\alpha}^a,\widehat{\alpha}^b)$ given an arbitrary initial state $(\alpha^a_0,\alpha^b_0)$.  

\section{Derivations} \label{app:derivations}
\paragraph{Qualification profile of a group.}
\begin{eqnarray*} 
\gamma_t^s(x) &=& \mathbb{P}(Y_t = 1|X_t = x, S=s) 
=\frac{1}{\frac{\mathbb{P}(X_t = x,Y_t = 0, S=s)}{\mathbb{P}(X_t = x,Y_t = 1, S=s)} + 1}
\\
&=&\frac{1}{\frac{\mathbb{P}(X_t = x\mid Y_t = 0, S=s)\mathbb{P}(Y_t = 0\mid S=s)}{\mathbb{P}(X_t = x\mid Y_t = 1, S=s)\mathbb{P}(Y_t = 1\mid S=s)} + 1}\\
&=& \frac{1}{\frac{\mathbb{P}(X_t = x\mid Y_t = 0, S=s)}{\mathbb{P}(X_t = x\mid Y_t = 1, S=s)}(\frac{1}{\mathbb{P}(Y_t = 1\mid S=s)}-1) + 1}\\
&=&  \frac{1}{\frac{G^s_0(x)}{G^s_1(x)}(\frac{1}{\alpha_t^s}-1) + 1}.
\end{eqnarray*}

\paragraph{Utility of an institute.}
\begin{eqnarray*}
\mathcal{U}(D_t,Y_t) = \mathbb{E}[R_t(D_t,Y_t)] = \mathbb{P}(S = a)\mathbb{E}[R_t(D_t,Y_t)|S=a]  +
\mathbb{P}(S = b)\mathbb{E}[R_t(D_t,Y_t)|S=b]
\end{eqnarray*}

Under policy $\pi^s$, we have
\begin{eqnarray*}
&&\mathbb{E}[R_t(D_t,Y_t)|S=s]= \mathbb{P}(D_t=1,Y_t=1|S=s)u_+-\mathbb{P}(D_t=1,Y_t=0|S=s)u_- \\
&=& \int_{x}\Big(\mathbb{P}(D_t=1,Y_t=1,X_t=x|S=s)u_+-\mathbb{P}(D_t=1,Y_t=0,X_t=x|S=s)u_-\Big) dx \\
&=&  \int_x\mathbb{P}(X_t=x|S=s)\Big(\mathbb{P}(D_t=1\mid X_t=x,S=s)\mathbb{P}(Y_t=1\mid X_t=x,S=s)u_+\\
&& -\mathbb{P}(D_t=1\mid X_t=x,S=s)\mathbb{P}(Y_t=0\mid X_t=x,S=s)u_-\Big) dx\\
&=&  \int_x\mathbb{P}(X_t=x|S=s)\Big(\pi^s(x)\gamma_t^s(x)u_+ -\pi^s(x)(1-\gamma_t^s(x))u_-\Big) dx\\
&=& \mathbb{E}_{X_t|S=s}[\pi^s(X_t)(\gamma_t^s(X_t)(u_++u_-) -u_-) ].
\end{eqnarray*}

Therefore, 
\begin{eqnarray*}
\mathcal{U}(D_t,Y_t)  = p_a\mathbb{E}_{X_t|S=a}[\pi^a(X_t)(\gamma_t^a(X_t)(u_++u_-) -u_-) ] +
p_b\mathbb{E}_{X_t|S=b}[\pi^b(X_t)(\gamma_t^b(X_t)(u_++u_-) -u_-) ]
\end{eqnarray*}

\paragraph{Dynamics of qualification rate.}
\begin{eqnarray*}
\alpha^s_{t+1} &=& \mathbb{P}(Y_{t+1} = 1\mid S=s)=\int_{x} \sum_{y,a}\mathbb{P}(Y_{t+1} = 1, Y_{t}=y, D_{t}=d, X_{t}=x \mid S=s) d x\nonumber \\
&=& \int_{x} \sum_{y,a}
\mathbb{P}(Y_{t+1} = 1\mid Y_{t}=y, X_{t}=x, D_{t}=d,S=s)\mathbb{P}(D_{t}=d\mid X_{t}=x,S=s)\\&&\mathbb{P}( X_{t}=x \mid Y_{t}=y,S=s)
\mathbb{P}(Y_{t}=y\mid S=s) d x\nonumber \\
&=&\int_{x}\sum_{a}
\Big\{
\mathbb{P}(Y_{t+1} = 1\mid Y_{t}=0, X_{t}=x, D_{t}=d,S=s)\nonumber\\
&&
\mathbb{P}(D_{t}=d\mid X_{t}=x,S=s) 
\mathbb{P}( X_{t}=x \mid Y_{t}=0,S=s)\Big\}
\mathbb{P}(Y_{t}=0\mid S=s)d x\nonumber\\
&&+ \int_{x}\sum_{d}\Big\{\mathbb{P}(Y_{t+1} = 1\mid Y_{t}=1, X_{t}=x, D_{t}=d,S=s)\nonumber\\
&&\mathbb{P}(D_{t}=d\mid X_{t}=x,S=s) 
\mathbb{P}( X_{t}=x \mid Y_{t}=1,S=s)\Big\}
\mathbb{P}(Y_{t}=1\mid S=s)d x
\nonumber\\
&=& 
\mathbb{E}_{X_{t}|Y_{t}=0,S=s}\Big[(1-\pi^s_{{t}}(X_{t}))T_{00}^s+\pi^s_{{t}}(X_{t})T_{01}^s\Big](1-\alpha_{t}^s) \nonumber\\
&&+~
\mathbb{E}_{X_{t}|Y_{t}=1,S=s}\Big[(1-\pi^s_{{t}}(X_{t}))T_{10}^s+\pi_{{t}}^s(X_{t})T_{11}^s\Big]\alpha_{t}^s\nonumber\\
&=& g^{0s}(\alpha^a_t,\alpha^b_t)\cdot(1-\alpha_{t}^s) + g^{1s}(\alpha^a_t,\alpha^b_t)\cdot \alpha_{t}^s
\end{eqnarray*}
\section{Proofs} \label{app:proofs}
We define balanced equations and functions for the rest proofs.
The dynamics system \eqref{eq:dynamics} can reach equilibrium if $\alpha^s_t = \alpha^s_{t-1}$ holds. Therefore, the system has equilibrium if there exists solution to the \textit{balanced equations} defined as \eqref{eq:balanced_eqn_a}. 
\begin{eqnarray}\label{eq:balanced_eqn_a}
      \frac{1}{\alpha^a} - 1 =\frac{1-g^{1a}(\theta^a(\alpha^a,\alpha^b))}{g^{0a}(\theta^a(\alpha^a,\alpha^b))}; ~~~~ \frac{1}{\alpha^b} - 1  = \frac{1-g^{1b}(\theta^b(\alpha^a,\alpha^b))}{g^{0b}(\theta^b(\alpha^a,\alpha^b))}.
\end{eqnarray}
By removing subscript $t$ and writing threshold $\theta^s$ as a function of $\alpha^a,\alpha^b$, we have %the notation $g^{ys}_t$ in \eqref{eq:dynamics} is denoted as 
$g^{ys}(\theta^s(\alpha^a,\alpha^b)) =  T_{y0}^s\mathbb{G}_y^s(\theta^s(\alpha^a,\alpha^b)) + T_{y1}^s\big(1-\mathbb{G}_y^s(\theta^s(\alpha^a,\alpha^b))\big)$, denote CDF of ${G}_y^s(x)$ as $\mathbb{G}_y^s(\theta) = \int_{-\infty}^{\theta}{G}_y^s(x)dx$.% in \eqref{eq:balanced_eqn_a}. 

$\forall s \in \{a,b\}$, let $-s \coloneqq \{a,b\}\setminus s$. $\forall \alpha^{-s}\in [0,1]$, define \textit{balanced set} w.r.t. dynamics as $\Psi^s(\alpha^{-s}) \coloneqq \{\overline{\alpha}^s: \frac{1}{\overline{\alpha}^s} - 1 =\frac{1-g^{1s}(\theta^a(\overline{\alpha}^s,\alpha^{-s}))}{g^{0s}(\theta^s(\overline{\alpha}^s,\alpha^{-s}))}\}$. If the set size $|\Psi^s(\alpha^{-s})| = 1$ holds $ \forall \alpha^{-s}\in [0,1]$, we define \textit{balanced functions} w.r.t. dynamics as $\psi^s:[0,1] \to [0,1]$ with $\psi^s(\alpha^{-s}) \in \Psi^s(\alpha^{-s}), \forall \alpha^{-s}\in [0,1]$.

\paragraph{The proof \rt{that the threshold policies are optimal under our formulation}.}
% Theorem \ref{lemma:optimal_threshold_form}.}
\begin{proof}
In the following proof, we focus on optimal policy at $t$ and omit the subscript $t$. 

First consider unconstrained optimal policy, noted as $\pi^s_{\texttt{UN}}$, we have,
\begin{eqnarray*}
\pi^s_{\texttt{UN}} = \arg\max_{\pi^s}\mathbb{E}_{X\mid S=s}[\pi^s(X)(\gamma^s(X)(u_++u_-)-u_-)]
\end{eqnarray*}
Therefore, the optimal policy satisfies $\pi^s_{\texttt{UN}}(x) = \textbf{1}( \gamma^s(x) \geq \frac{u_-}{u_++u_-})$. Since $\gamma^s(x)$ is monotonically increasing in $x$ under Assumption \ref{ass:mono-inference}, $\pi^s_{\texttt{UN}}(x) = \textbf{1}( x \geq (\gamma^s)^{-1}(\frac{u_-}{u_++u_-}))$ is threshold policy where $(\gamma^s)^{-1}(\cdot)$ denotes the inverse function of $\gamma(\cdot)$.

Now consider optimal fair policy under some fairness constraint $\mathcal{C}$ satisfying Assumption \ref{ass:fair_prob}. Consider any pair of policies $(\pi^a,\pi^b)$ that satisfies fairness constraint $\mathcal{C}$, and define fairness constant $c  = \mathbb{E}_{X\sim \mathcal{P}_{\mathcal{C}}^a}[\pi^{a}(X)] = \mathbb{E}_{X\sim \mathcal{P}_{\mathcal{C}}^b}[\pi^{b}(X)]  \in [0,1]$. To show the optimal fair policy is threshold policy, we will show that there always exists a pair of threshold policies $(\pi_d^{a},\pi_d^{b})$ such that $ \mathbb{E}_{X\sim \mathcal{P}_{\mathcal{C}}^a}[\pi_d^{a}(X)] = \mathbb{E}_{X\sim \mathcal{P}_{\mathcal{C}}^b}[\pi_d^{b}(X)] =c$, i.e., the fairness constant is the same as $(\pi^a,\pi^b)$, 
% but whose utility is no less than the utility attained under $(\pi^a,\pi^b)$.
{and the utility of $(\pi_d^{a},\pi_d^{b})$ is no less than the utility attained under $(\pi^a,\pi^b)$.}

 $\forall s\in \{a,b\}$, let threshold policy $\pi^s_d$ be defined such that $\pi_d^{s}(x) = \textbf{1}(x\geq \theta_d^s)$ and $\mathbb{E}_{X\sim \mathcal{P}_{\mathcal{C}}^s}[\pi_d^s(X)] =c$ are satisfied. Such policy must exist and the threshold is given by $\theta_d^s = (\mathbb{P}_{\mathcal{C}}^{s})^{-1}(1-c)$, where $\mathbb{P}_{\mathcal{C}}^{s}(\theta^s) = \int_{-\infty}^{\theta^s}\mathcal{P}_{\mathcal{C}}^{s}(x)dx$ is CDF of $\mathcal{P}_{\mathcal{C}}^{s}$ and $(\mathbb{P}_{\mathcal{C}}^{s})^{-1}(\cdot)$ is the inverse of it. 
 
 Let $R_{\pi^{s}_d}(D,Y)$, $R_{\pi^{s}}(D,Y)$ denote the utility attained under policies $\pi^{s}_d$, $\pi^{s}$ respectively. Next we will show that $\forall s \in \{a,b\}$, $\mathbb{E}[R_{\pi^{s}_d}(D,Y)\mid S=s] \geq \mathbb{E}[R_{\pi^{s}}(D,Y)\mid S=s] $ holds, i.e.,
 $$\mathbb{E}_{X|S=s}[\pi^{s}_d(X)(\gamma^s(X)(u_++u_-) -u_-) ]\geq  \mathbb{E}_{X|S=s}[\pi^{s}(X)(\gamma^s(X)(u_++u_-) -u_-) ]$$
 
 Since $\pi^{s}_d(x) = \textbf{1}(x\geq \theta_d^s)$, we have the followings,
\begin{eqnarray*}
\resizebox{0.38\hsize}{!}{$\mathbb{E}_{X|S=s}[\pi^{s}_d(X)(\gamma^s(X)(u_++u_-) -u_-) ] $}
&=& \resizebox{0.48\hsize}{!}{$\int_{\theta_d^s}^{\infty}(\gamma^s(x)(u_++u_-) -u_-)\mathbb{P}(X=x\mid S=s) dx$}\\
\resizebox{0.38\hsize}{!}{$\mathbb{E}_{X|S=s}[\pi^{s}(X)(\gamma^s(X)(u_++u_-) -u_-) ]$} &=& \resizebox{0.48\hsize}{!}{$
\int_{\theta_d^s}^{\infty}(\gamma^s(x)(u_++u_-) -u_-)\mathbb{P}(X=x\mid S=s) dx$} \\
&+& \resizebox{0.52\hsize}{!}{$\int_{-\infty}^{\theta_d^s}\pi^{s}(x)(\gamma^s(x)(u_++u_-) -u_-)\mathbb{P}(X=x\mid S=s) dx$} \\
&-& \resizebox{0.54\hsize}{!}{$\int_{\theta_d^s}^{\infty}(1-\pi^{s}(x))(\gamma^s(x)(u_++u_-) -u_-)\mathbb{P}(X=x\mid S=s)dx$}
\end{eqnarray*}
 
Since $\mathbb{E}_{X\sim \mathcal{P}^s_{\mathcal{C}}}[\pi^{s}(X)] = c = \mathbb{E}_{X\sim \mathcal{P}^s_{\mathcal{C}}}[\pi^{s}_d(X)]$, we have
\begin{eqnarray}\label{eq:threshold_optimal}
\int_{\theta_d^s}^{\infty}(1-\pi^{s}(x))\mathcal{P}^s_{\mathcal{C}}(x)dx = \int_{-\infty}^{\theta_d^s}\pi^{s}(x)\mathcal{P}^s_{\mathcal{C}}(x)dx
\end{eqnarray}

Under Assumption \ref{ass:fair_prob}, 
\rt{ $\frac{\mathbb{P}(X=x\mid S=s)}{\mathcal{P}^s_{\mathcal{C}}(x)}$ is non-decreasing. Since $\gamma^s(x) = \alpha^s \frac{G^s_1(x)}{\mathbb{P}(X=x\mid S=s)}$ is non-decreasing and $1-\gamma^s(x) =(1-\alpha^s) \frac{G^s_0(x)}{\mathbb{P}(X=x\mid S=s)}$ is non-increasing, 
we have $\frac{G^s_1(x)}{\mathbb{P}(X=x\mid S=s)}$ is non-decreasing and $\frac{G^s_0(x)}{\mathbb{P}(X=x\mid S=s)}$ is non-increasing.
% $\frac{G^s_1(x)}{\mathcal{P}^s_{\mathcal{C}}(x)}$ is non-decreasing and $\frac{G^s_0(x)}{\mathcal{P}^s_{\mathcal{C}}(x)}$ is non-increasing.
Therefore, 
\begin{eqnarray*}
&&(\gamma^s(x)(u_++u_-) -u_-)\frac{\mathbb{P}(X=x\mid S=s)}{\mathcal{P}^s_{\mathcal{C}}(x)}\\&=& 
\alpha^s \frac{G^s_1(x)}{\mathcal{P}^s_{\mathcal{C}}(x)} u_+ -
(1-\alpha^s)\frac{G^s_0(x)}{\mathcal{P}^s_{\mathcal{C}}(x)}u_-\\
&=& 
\alpha^s 
\frac{G^s_1(x)}{\mathbb{P}(X=x\mid S=s)} 
\frac{\mathbb{P}(X=x\mid S=s)}{\mathcal{P}^s_{\mathcal{C}}(x)}
u_+ -
(1-\alpha^s)
\frac{G^s_0(x)}{\mathbb{P}(X=x\mid S=s)}
\frac{\mathbb{P}(X=x\mid S=s)}{\mathcal{P}^s_{\mathcal{C}}(x)}
u_-
\end{eqnarray*}
is non-decreasing in $x$.} Combine with Eqn. \eqref{eq:threshold_optimal}, we have the followings,
\begin{eqnarray*}
&&\int_{-\infty}^{\theta_d^s}\pi^{s}(x)(\gamma^s(x)(u_++u_-) -u_-)\mathbb{P}(X=x\mid S=s) dx \\ &=& \int_{-\infty}^{\theta_d^s}\pi^{s}(x)(\gamma^s(x)(u_++u_-) -u_-)\frac{\mathbb{P}(X=x\mid S=s)}{\mathcal{P}^s_{\mathcal{C}}(x)}\mathcal{P}^s_{\mathcal{C}}(x) dx \\
&\leq&
\int_{-\infty}^{\theta_d^s}\pi^{s}(x)
(\gamma^s(\theta_d^s)(u_++u_-) -u_-)\frac{\mathbb{P}(X=\theta_d^s\mid S=s)}{\mathcal{P}^s_{\mathcal{C}}(\theta_d^s)}
\mathcal{P}^s_{\mathcal{C}}(x) dx \\
&=&
\int_{\theta_d^s}^{\infty}(1-\pi^{s}(x))
(\gamma^s(\theta_d^s)(u_++u_-) -u_-)\frac{\mathbb{P}(X=\theta_d^s\mid S=s)}{\mathcal{P}^s_{\mathcal{C}}(\theta_d^s)}
\mathcal{P}^s_{\mathcal{C}}(x) dx\\
&\leq&\int_{\theta_d^s}^{\infty}(1-\pi^{s}(x))(\gamma^s(x)(u_++u_-) -u_-)\frac{\mathbb{P}(X=x\mid S=s)}{\mathcal{P}^s_{\mathcal{C}}(x)}\mathcal{P}^s_{\mathcal{C}}(x) dx\\
&=&\int_{\theta_d^s}^{\infty}(1-\pi^{s}(x))(\gamma^s(x)(u_++u_-) -u_-)\mathbb{P}(X=x\mid S=s) dx.
\end{eqnarray*}
 Therefore, the following holds $\forall s\in \{a,b\}$, $$\mathbb{E}_{X|S=s}[\pi^{s}_d(X)(\gamma^s(X)(u_++u_-) -u_-) ]\geq  \mathbb{E}_{X|S=s}[\pi^{s}(X)(\gamma^s(X)(u_++u_-) -u_-) ].$$

It shows that the utility attained under threshold policy $(\pi^a_d,\pi^b_d)$ is no less than the utility of $(\pi^a,\pi^b)$, which concludes that the optimal fair policy $(\pi^a_{\mathcal{C}},\pi^b_{\mathcal{C}})$ must be threshold policies.

Lemma \ref{lemma:policy_function_prop} below further shows that the optimal threshold policy $\theta^s(\alpha^a,\alpha^b)$ is continuous and non-increasing in $\alpha^a$ and $\alpha^b$.

%\paragraph{The proof of Lemma \ref{lemma:policy_function_prop}.}
\begin{lemma}\label{lemma:policy_function_prop}
Let $\Big(\theta^a(\alpha^a,\alpha^b),\theta^b(\alpha^a,\alpha^b)\Big)$ be a pair of solutions to Eqn. \eqref{eq:opt_fair_policy} under $\alpha^a,\alpha^b$. $\forall s\in \{a,b\}$, if $\frac{G^s_1(x)}{\mathcal{P}^s_{\mathcal{C}}(x)}$ and $\frac{G^s_0(x)}{\mathcal{P}^s_{\mathcal{C}}(x)}$ are continuous everywhere in $x$, then $\theta^s(\alpha^a,\alpha^b)$ is continuous in both $\alpha^a$ and $\alpha^b$. Moreover, under Assumption \ref{ass:fair_prob}, $\theta^s(\alpha^a,\alpha^b)$ is non-increasing in $\alpha^a$ and $\alpha^b$.
\end{lemma}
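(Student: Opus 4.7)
The plan is to parametrize the fair set of $(\theta^a,\theta^b)$ pairs by a single scalar $c\in[0,1]$ and then analyze the resulting one-dimensional program. The fairness constraint in \eqref{eq:fair_constraint_e} is equivalent to $\mathbb{P}^a_{\mathcal{C}}(\theta^a) = \mathbb{P}^b_{\mathcal{C}}(\theta^b) = 1-c$ for some $c\in[0,1]$, where $\mathbb{P}^s_{\mathcal{C}}(\theta)=\int_{-\infty}^{\theta}\mathcal{P}^s_{\mathcal{C}}(x)\,dx$; since $\mathcal{P}^s_{\mathcal{C}}$ is a continuous density, the map $\theta^s(c) = (\mathbb{P}^s_{\mathcal{C}})^{-1}(1-c)$ is well-defined, continuous, and strictly decreasing in $c$. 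The fair-constrained maximization then collapses to $\max_{c\in[0,1]}\phi(c;\alpha^a,\alpha^b)$ with
\begin{equation*}
\phi(c;\alpha^a,\alpha^b) = \sum_{s\in\{a,b\}} p_s \int_{\theta^s(c)}^{\infty}\bigl[\alpha^s u_+ G_1^s(x) - (1-\alpha^s) u_- G_0^s(x)\bigr]\,dx,
\end{equation*}
whose first-order condition $\partial_c \phi = 0$ is precisely \eqref{eq:opt_fair_policy}. I would then extract continuity of $\theta^s(\alpha^a,\alpha^b)$ from Berge's maximum theorem applied to this reduced program, and monotonicity from a revealed-preference argument on $\phi$ in the spirit of Topkis.

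\textbf{Continuity.} The hypothesis that $G^s_y/\mathcal{P}^s_{\mathcal{C}}$ is continuous in $x$, combined with continuity of $\mathcal{P}^s_{\mathcal{C}}$ itself, makes $G^s_y$ continuous and thus renders the integrand of $\phi$ continuous in $(x,\alpha^s)$; together with continuity of $c\mapsto\theta^s(c)$ this yields joint continuity of $\phi$ on the compact cube $[0,1]^3$. Because the feasible set $c\in[0,1]$ is independent of $(\alpha^a,\alpha^b)$, Berge's maximum theorem gives that the argmax $c^\ast(\alpha^a,\alpha^b)$ is upper hemicontinuous. The strict monotone likelihood ratio in Assumption \ref{ass:mono-inference} makes $\gamma^s$ strictly increasing, which together with the structure of \eqref{eq:opt_fair_policy} singles out a unique optimal $c^\ast$ along the fair curve; hence $c^\ast$ is single-valued and continuous, and composing with the continuous map $c\mapsto\theta^s(c)$ delivers continuity of $\theta^s(\alpha^a,\alpha^b)$.

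\textbf{Monotonicity, and the main obstacle.} Fix $\alpha^b$ and let $\alpha^a_0<\alpha^a_1$ with corresponding optima $(\theta^a_i,\theta^b_i)$ for $i=0,1$. Both pairs satisfy the fairness constraint and are therefore feasible at either value of $\alpha^a$; adding the two optimality inequalities and cancelling the $b$-term (which does not depend on $\alpha^a$) produces
\begin{equation*}
U^a(\theta^a_0;\alpha^a_0) - U^a(\theta^a_0;\alpha^a_1) \geq U^a(\theta^a_1;\alpha^a_0) - U^a(\theta^a_1;\alpha^a_1),
\end{equation*}
where $U^a(\theta;\alpha^a) = \int_\theta^\infty[\alpha^a u_+ G_1^a(x) - (1-\alpha^a) u_- G_0^a(x)]\,dx$. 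Since $U^a$ is affine in $\alpha^a$, the cross difference equals $-(\alpha^a_1-\alpha^a_0)\int_{\theta^a}^{\infty}[u_+ G_1^a(x) + u_- G_0^a(x)]\,dx$, which is strictly increasing in $\theta^a$ (the integrand is strictly positive), so the inequality forces $\theta^a_0\geq\theta^a_1$; the strictly increasing fair link $\theta^b = (\mathbb{P}^b_{\mathcal{C}})^{-1}(\mathbb{P}^a_{\mathcal{C}}(\theta^a))$ transports this to $\theta^b_0\geq\theta^b_1$, and the symmetric argument in $\alpha^b$ completes the claim. The hardest step is securing uniqueness of the fair optimum, which both Berge's theorem (to upgrade upper hemicontinuity to continuity) and the revealed-preference chain (to obtain pointwise rather than strong-set-order monotonicity) rely on; I would handle this by appealing to strict MLR together with Assumption \ref{ass:fair_prob}, which ensure that the sign of the integrand defining each $F_s$ changes exactly once along the fair curve and pins down a unique solution to \eqref{eq:opt_fair_policy}.
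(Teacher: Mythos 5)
Your route is genuinely different from the paper's: the paper works directly with the first-order condition \eqref{eq:opt_fair_policy}, getting continuity of the solution branches from an $\epsilon$--$\delta$ argument on the level sets of the continuous function $f^s(\theta^s,\alpha^a,\alpha^b)=\alpha^s\frac{G^s_1(\theta^s)}{\mathcal{P}^s_{\mathcal{C}}(\theta^s)}u_+ +(\alpha^s-1)\frac{G^s_0(\theta^s)}{\mathcal{P}^s_{\mathcal{C}}(\theta^s)}u_-$, and getting monotonicity by observing that the left side of \eqref{eq:opt_fair_policy} is strictly increasing in $\alpha^s$ for fixed thresholds while, under Assumption \ref{ass:fair_prob}, it is non-decreasing in the thresholds (via $G^s_1/\mathcal{P}^s_{\mathcal{C}}$ non-decreasing and $G^s_0/\mathcal{P}^s_{\mathcal{C}}$ non-increasing), so that keeping the equation at zero forces the thresholds down. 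Your continuity half (Berge plus a uniqueness argument for $c^\ast$) is a reasonable, arguably cleaner substitute, and your sketch for why $c^\ast$ is unique — the fair-curve derivative of the objective is a sum of terms each of which changes sign once, monotonically, under Assumption \ref{ass:fair_prob} and strict MLR — is essentially the same concavity fact the paper exploits.

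The monotonicity half, however, has a genuine gap. Your revealed-preference step asserts that the two optimal pairs ``satisfy the fairness constraint and are therefore feasible at either value of $\alpha^a$,'' and the cross-difference computation holds $\theta^a$ fixed while $\alpha^a$ varies. Both of these require the set of fair threshold pairs to be independent of $(\alpha^a,\alpha^b)$. That is true for \texttt{EqOpt}, where $\mathcal{P}^s_{\texttt{EqOpt}}=G^s_1$, but false for \texttt{DP} and for general $\mathcal{C}$ under Assumption \ref{ass:fair_prob}: $\mathcal{P}^s_{\texttt{DP}}(x)=(1-\alpha^s)G^s_0(x)+\alpha^s G^s_1(x)$ moves with $\alpha^s$, so the pair optimal at $\alpha^a_1$ is generally not fair at $\alpha^a_0$ and the two optimality inequalities cannot be added. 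The same problem infects the $c$-parametrization: $\theta^s(c)=(\mathbb{P}^s_{\mathcal{C}})^{-1}(1-c)$ itself depends on $\alpha^s$ for \texttt{DP}, and in fact for fixed $c$ it moves \emph{upward} as $\alpha^s$ increases (since MLR implies $\mathbb{G}^s_1\le\mathbb{G}^s_0$ pointwise), so a Topkis argument in $(c,\alpha^a)$ would still leave you to show that the shift in $c^\ast$ dominates this direct effect. A telling symptom is that your monotonicity argument never uses Assumption \ref{ass:fair_prob}, even though the lemma conditions the monotonicity conclusion on it and the paper's proof invokes it precisely to sign the threshold-dependence of the first-order condition. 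To repair the argument you would need to either restrict to $\alpha$-free reference distributions $\mathcal{P}^s_{\mathcal{C}}$ or carry out the comparative statics on the stationarity equation itself, as the paper does.
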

\begin{proof}
To prove that \textit{a sufficient condition under which $\theta^s(\alpha^a,\alpha^b)$ is continuous in $\alpha^a,\alpha^b\in[0,1]$ is that $\frac{G^s_1(x)}{\mathcal{P}^s_{\mathcal{C}}(x)}$ and $\frac{G^s_0(x)}{\mathcal{P}^s_{\mathcal{C}}(x)}$ are continuous everywhere in $x$}, we define a function $f^s(\theta^s, \alpha^a, \alpha^b)$:
\begin{eqnarray*}
f^s(\theta^s, \alpha^a, \alpha^b) &=& 
(\gamma^s(\theta^s)-\frac{u_-}{u_++u_-})
\frac{\mathbb{P}(X=\theta^s\mid S=s)}{\mathcal{P}^s_{\mathcal{C}}(\theta^s)}\\
&=&  [
\alpha^s u_+ G^s_1(\theta^s) + \alpha^s u_- G^s_0(\theta^s) - u_- G^s_0(\theta^s) ]\frac{1}{\mathcal{P}^s_{\mathcal{C}}(\theta^s)} \\
&=& [
\alpha^s \frac{G^s_1(\theta^s)}{\mathcal{P}^s_{\mathcal{C}}(\theta^s)} u_+ + 
(\alpha^s-1)\frac{G^s_0(\theta^s)}{\mathcal{P}^s_{\mathcal{C}}(\theta^s)}u_-].
\end{eqnarray*}
According to Equation \eqref{eq:opt_fair_policy}, we have $p_af^a(\theta^a, \alpha^a, \alpha^b) + p_bf^b(\theta^b, \alpha^a, \alpha^b) = 0$.

Given any $\alpha^a$ and $\alpha^b$,
and any constant $k$, let $\Tilde{\theta}_i^s$ be one solution to $f^s(\theta^s, \alpha^a, \alpha^b) = k$, where $i = 1,...,N$ and $N$ is the number of solutions.
Firstly, we show that $\Tilde{\theta}_i^s(\alpha^a, \alpha^b)$ %, where $i = 1,...,N$, 
is continuous in $\alpha^a$ and $\alpha^b$, for any $i \in\{ 1,...,N\}$.
Because $\frac{G^s_1(x)}{\mathcal{P}^s_{\mathcal{C}}(x)}$ and $\frac{G^s_0(x)}{\mathcal{P}^s_{\mathcal{C}}(x)}$ are continuous, $f^s(\theta^s, \alpha^a, \alpha^b)$ is continuous in  $\alpha^a$, $\alpha^b$, and $\theta^s$. 
Therefore, $\forall \epsilon > 0$, $\exists \delta >0$ such that for all $|\alpha^{a'} - \alpha^a|<\delta$ and $|\alpha^{b'} - \alpha^b|<\delta$ %such that
$\Longrightarrow$ $|\Tilde{\theta}^{s'}_i-\Tilde{\theta}^s_i|<\epsilon$. Thus, $\Tilde{\theta}_i^s(\alpha^a, \alpha^b)$ %, where $i = 1,...,N$, 
is continuous in $\alpha^a$ and $\alpha^b$%\in[0,1]
, $\forall i \in \{1,...,N\}$.

Next, we show that given $\alpha^a$ and $\alpha^b$, the solutions to $p_af^a(\theta^a, \alpha^a, \alpha^b) + p_bf^b(\theta^b, \alpha^a, \alpha^b) = 0$ under fairness constraint $\mathcal{C}$ are continuous in $\alpha^a$ and $\alpha^b\in[0,1]$.
Under fairness constraints in Equation \eqref{eq:fair_constraint_e}, %we have 
$\theta^a = \phi_\mathcal{C}(\theta^b)$ %where
holds for some continuous function $\phi_\mathcal{C}(\cdot)$. 
Consequently, we have $p_af^a(\phi_\mathcal{C}(\theta^b), \alpha^a, \alpha^b) + p_bf^b(\theta^b, \alpha^a, \alpha^b) = 0$. Because $f^s(\cdot,\cdot,\cdot)$ and $\phi_\mathcal{C}(\cdot)$ are continuous functions, %with the same reason,
with the same reasoning,
given $\alpha^a$ and $\alpha^b$, the solutions to $p_af^a(\phi_\mathcal{C}(\theta^b), \alpha^a, \alpha^b) + p_bf^b(\theta^b, \alpha^a, \alpha^b) = 0$ are continuous in $\alpha^a$ and $\alpha^b$. In other words, $\theta_i^s(\alpha^a, \alpha^b)$ is continuous. 

Under Assumption %\ref{ass:mono-inference} and Assumption 
\ref{ass:fair_prob}, %we know that 
$f^s(\theta^s, \alpha^a, \alpha^b)$ and %$\theta_i^s(\alpha^a, \alpha^b)$ 
$\theta^s(\alpha^a, \alpha^b)$
are continuous. 
We then prove that
if $\frac{G^s_1(x)}{\mathcal{P}^s_{\mathcal{C}}(x)}$ is non-decreasing and $\frac{G^s_0(x)}{\mathcal{P}^s_{\mathcal{C}}(x)}$ is non-increasing in $x$, then $\theta^s(\alpha^a,\alpha^b)$ is non-increasing in $\alpha^a$ and $\alpha^b$.

Let $(\phi_\mathcal{C}(\theta^b),\theta^b)$ be a pair that satisfies fairness constraint, where $\phi_\mathcal{C}(\cdot)$ is some continuous and strictly increasing function, then the optimal one is the pair that satisfies Equation \eqref{eq:opt_fair_policy} as follows:
\begin{eqnarray*}
\resizebox{0.9\hsize}{!}{$p_a (\gamma^a(\phi_\mathcal{C}(\theta^b))-\frac{u_-}{u_++u_-})\frac{\mathbb{P}(X=\phi_\mathcal{C}(\theta^b) \mid S=a)}{\mathcal{P}^a_{\mathcal{C}}(\phi_\mathcal{C}(\theta^b))} + p_b (\gamma^b(\theta^b)-\frac{u_-}{u_++u_-})\frac{\mathbb{P}(X=\theta^b \mid S=b)}{\mathcal{P}^b_{\mathcal{C}}(\theta^b)}$}
\\\resizebox{0.97\hsize}{!}{$= p_a\Big[
\alpha^a \frac{G^a_1(\phi_\mathcal{C}(\theta^b))}{\mathcal{P}^a_{\mathcal{C}}(\phi_\mathcal{C}(\theta^b))} u_+ + 
(\alpha^a-1)\frac{G^a_0(\phi_\mathcal{C}(\theta^b))}{\mathcal{P}^a_{\mathcal{C}}(\phi_\mathcal{C}(\theta^b))}u_-\Big] +  p_b\Big[
\alpha^b \frac{G^b_1(\theta^b)}{\mathcal{P}^b_{\mathcal{C}}(\theta^b)} u_+ + 
(\alpha^b-1)\frac{G^b_0(\theta^b)}{\mathcal{P}^b_{\mathcal{C}}(\theta^b)}u_-\Big] = 0.$}
\end{eqnarray*}
Note that $\forall s\in \{a,b\}$, LHS of above equation is strictly increasing in $\alpha^s$ because the coefficient of $\alpha^s$ is positive. Because $\frac{G^s_1(x)}{\mathcal{P}^s_{\mathcal{C}}(x)}$ is non-decreasing and $\frac{G^s_0(x)}{\mathcal{P}^s_{\mathcal{C}}(x)}$ is non-increasing in $x$, $\frac{G^s_1(x)}{\mathcal{P}^s_{\mathcal{C}}(x)}-\frac{G^s_0(x)}{\mathcal{P}^s_{\mathcal{C}}(x)}$ is non-decreasing in $x$. As $\alpha^s$ increases, both $\frac{G^a_1(\phi_\mathcal{C}(\theta^b))}{\mathcal{P}^a_{\mathcal{C}}(\phi_\mathcal{C}(\theta^b))}-\frac{G^a_0(\phi_\mathcal{C}(\theta^b))}{\mathcal{P}^a_{\mathcal{C}}(\phi_\mathcal{C}(\theta^b))}$ and $\frac{G^b_1(\theta^b)}{\mathcal{P}^b_{\mathcal{C}}(\theta^b)}-\frac{G^b_0(\theta^b)}{\mathcal{P}^b_{\mathcal{C}}(\theta^b)}$ must not increase so that the optimal fair equation can be maintained. It requires that both $\theta^b$ and $\theta^a = \phi_{\mathcal{C}}(\theta^b)$ must not increase. In other words,  $\forall s\in \{a,b\}$, $\theta^s(\alpha^a,\alpha^b)$ must be non-increasing in $\alpha^a$ and $\alpha^b$.
\end{proof}

\end{proof}
\paragraph{The proof of Lemma \ref{lemma:opt_fair_policy_eq}.}
\begin{proof}
In the following proof, we focus on optimal policy at $t$ and omit the subscript $t$. 

First consider unconstrained optimal policy. Under threshold policy,
\begin{eqnarray*}
\theta^{s*}_{\texttt{UN}} &=& \arg\max_{\theta^s}\mathbb{E}_{X\mid S=s}[\pi^s(X)(\gamma^s(X)(u_++u_-)-u_-)] \\&=& \arg\max_{\theta^s} \int_{\theta^s}^{\infty}(\gamma^s(x)(u_++u_-)-u_-)\mathbb{P}(X=x\mid S=s) dx
\end{eqnarray*}
 Since $\gamma^s(x)$ is monotonically increasing in $x$ under Assumption \ref{ass:mono-inference}, $\theta^{s*}_{\texttt{UN}}$ satisfies $\gamma^s(\theta^{s*}_{\texttt{UN}}) = \frac{u_-}{u_++u_-}$.

Now consider optimal policy under fairness constraint, to satisfy constraint $\mathcal{C}$,  $\int_{\theta^{a}}^{\infty}\mathcal{P}^a_{\mathcal{C}}(x)dx = \int_{\theta^{b}}^{\infty}\mathcal{P}^b_{\mathcal{C}}(x)dx$ should hold. Denote CDF $\mathbb{P}_{\mathcal{C}}^s(\theta^s) = \int_{-\infty}^{\theta^{s}}\mathcal{P}^s_{\mathcal{C}}(x)dx$, then for any pair $(\theta^a,\theta^b)$ that is fair, we have $
\theta^a 
= (\mathbb{P}^a_{\mathcal{C}})^{-1} \mathbb{P}_{\mathcal{C}}^b(\theta^b)
=\phi_{\mathcal{C}}(\theta^b)$ hold for some strictly increasing function $\phi_{\mathcal{C}}(\cdot)$. Denote $u = \mathbb{P}^b_{\mathcal{C}}(\theta^b)$ and $\theta^a = (\mathbb{P}_{\mathcal{C}}^a)^{-1}(u)$, the following holds, 
\begin{eqnarray*}
\resizebox{0.99\hsize}{!}{$\frac{d\phi_{\mathcal{C}}(\theta^b)}{d \theta^b}  
= \frac{d (\mathbb{P}_{\mathcal{C}}^a)^{-1}\mathbb{P}_{\mathcal{C}}^b(\theta^b)}{d\theta^b}
= \frac{d(\mathbb{P}_{\mathcal{C}}^a)^{-1}(u)}{du}
\frac{du}{d\theta^b}
= \frac{1}{(\mathbb{P}_{\mathcal{C}}^a)'((\mathbb{P}_{\mathcal{C}}^a)^{-1}(u))}\frac{du}{d\theta^b}
=\frac{(\mathbb{P}_{\mathcal{C}}^b)'( \theta^b)}{(\mathbb{P}_{\mathcal{C}}^a)'( \theta^a)} = \frac{\mathcal{P}_{\mathcal{C}}^b(\theta^b)}{\mathcal{P}_{\mathcal{C}}^a(\theta^a)}.$}
\end{eqnarray*}

Denote $f^s(x) \coloneqq ({\gamma}^s(x)(u_++u_-) -u_-)\mathbb{P}(X=x\mid S=s)$, then we have
\begin{eqnarray*}
\theta^{b*}_{\mathcal{C}}  = 
\arg\max_{\theta^{b} } {\mathcal{U}}(D,Y) 
= \arg\max_{\theta^{b} }
\left(
p_a
\int_{\phi_{\mathcal{C}}(\theta^{b} )}^{\infty}{f}^a(x)dx  + 
p_b
\int_{\theta^{b} }^{\infty} {f}^b(x)dx\right).
\end{eqnarray*}

Let $F(\theta^{b} ) \coloneqq p_a
\int_{\phi_{\mathcal{C}}(\theta^{b} )}^{\infty}{f}^a(x)dx  + 
p_b
\int_{\theta^{b} }^{\infty} {f}^b(x)dx$.
Because $\gamma^s(x)$ is monotonically increasing in $x$ under Assumption \ref{ass:mono-inference}, the optimal $\theta^{b*}_{\mathcal{C}} $ satisfies
\begin{eqnarray*}
\frac{dF(\theta^{b} )}{d\theta^{b} }\Big|_{\theta^{b} =\theta^{b*}_{\mathcal{C}} } 
&=& -p_a  f^a(\phi_{\mathcal{C}}(\theta^{b} )) \frac{d\phi_{\mathcal{C}}(\theta^{b} )}{d\theta^{b} } 
-p_b  f^b(\theta^{b} )
\Big|_{\theta^{b} =\theta^{b*}_{\mathcal{C}} }\\
&=& -p_a  
({\gamma}^a(\phi_{\mathcal{C}}(\theta^{b*}_{\mathcal{C}} ))(u_++u_-) -u_-)\mathbb{P}(X=\phi_{\mathcal{C}}(\theta^{b*}_{\mathcal{C}} )\mid S=a) \frac{\mathcal{P}_{\mathcal{C}}^b(\theta^{b*}_{\mathcal{C}})}{\mathcal{P}_{\mathcal{C}}^a(\phi_{\mathcal{C}}(\theta^{b*}_{\mathcal{C}} ))}\\
&&-p_b  
({\gamma}^b(\theta^{b*}_{\mathcal{C}} )(u_++u_-) -u_-)\mathbb{P}(X=\theta^{b*}_{\mathcal{C}} \mid S=b) \\
&=&0.
\end{eqnarray*}
Therefore,
\begin{eqnarray*}
\resizebox{0.99\hsize}{!}{$p_a  
({\gamma}^a(\theta^{a*}_{\mathcal{C}} )(u_++u_-) -u_-)
\frac{\mathbb{P}(X=\theta^{a*}_{\mathcal{C}} \mid S=a) }{\mathcal{P}_{\mathcal{C}}( \theta^{a*}_{\mathcal{C}} )}
+p_b  
({\gamma}^b(\theta^{b*}_{\mathcal{C}} )(u_++u_-) -u_-)
\frac{\mathbb{P}(X=\theta^{b*}_{\mathcal{C}} \mid S=b) }
{\mathcal{P}_{\mathcal{C}}( \theta^{b*}_{\mathcal{C}} )}
=0.$}
\end{eqnarray*}

\end{proof}
\paragraph{ The proof of Theorem \ref{lemma:exist_equilibrium}.}
\begin{proof}
$\forall s\in \{a,b\}$, define function $l^s(\alpha^s)\coloneqq \frac{1}{\alpha^s} - 1$ and $h^s(\theta^s(\alpha^a,\alpha^b)) \coloneqq \frac{1-g^{1s}(\theta^s(\alpha^a,\alpha^b))}{g^{0s}(\theta^s(\alpha^a,\alpha^b))}$,
\begin{eqnarray*}
 h^s(\theta^s(\alpha^a,\alpha^b)) = \frac{1 - ( T_{10}^s\mathbb{G}_1^s(\theta^s(\alpha^a,\alpha^b)) + T_{11}^s\big(1-\mathbb{G}_1^s(\theta^s(\alpha^a,\alpha^b))\big))}{ T_{00}^s\mathbb{G}_0^s(\theta^s(\alpha^a,\alpha^b)) + T_{01}^s\big(1-\mathbb{G}_0^s(\theta^s(\alpha^a,\alpha^b))\big)}.
\end{eqnarray*}

Firstly, we prove that given a fixed $\alpha^{-s}\in[0,1]$ there must exist at least one $\overline{\alpha}^{s}\in (0,1)$ such that $h^{s}(\theta^{s}(\alpha^{-s},\overline{\alpha}^{s})) = l^{s}(\overline{\alpha}^{s})$, $s\in \{a,b\}$, $-s = \{a,b\}\setminus s$.

Since $\mathbb{G}_y^s(x)$ is continuous in $x$, and $\theta^s(\alpha^a,\alpha^b)$ is continuous in $\alpha^a$ and $\alpha^b$, $\mathbb{G}_y^s(\theta^s(\alpha^a,\alpha^b))$ is continuous in $\alpha^a$ and $\alpha^b$. Therefore,  $h^s(\theta^s(\alpha^a,\alpha^b))$ is continuous in $\alpha^a$ and $\alpha^b$.
% Moreover, function $l^s(\alpha^s)\coloneqq \frac{1}{\alpha^s} - 1$ is continuous and strictly decreasing in $\alpha^s$, and $$\lim_{\alpha^s \to 0} l^s(\alpha^s)= + \infty; ~~\lim_{\alpha^s \to 1} l^s(\alpha^s)= 0. $$
% Because $g^{1s}(\theta^s(\alpha^a,\alpha^b))$ is the convex combination of $T_{11}^s$ and $T_{10}^s$, and $g^{0s}(\theta^s(\alpha^a,\alpha^b))$ is the convex combination of $T_{01}^s$ and $T_{00}^s$, the following holds $\forall \alpha^a\in [0,1], \alpha^b\in [0,1]$,
% \begin{eqnarray*}
% \min\{T_{10}^s,T_{11}^s \}\leq g^{1s}(\theta^s(\alpha^a,\alpha^b)) \leq \max\{T_{10}^s,T_{11}^s \}~;\\
% \min\{T_{00}^s,T_{01}^s \}\leq g^{0s}(\theta^s(\alpha^a,\alpha^b)) \leq \max\{T_{00}^s,T_{01}^s \}~.
% \end{eqnarray*}
% It implies that 
% \begin{eqnarray*}
% 0 < \frac{1-\max\{T_{10}^s,T_{11}^s \}}{\max\{T_{00}^s,T_{01}^s \}}\leq h^s(\theta^s(\alpha^a,\alpha^b)) \leq \frac{1-\min\{T_{10}^s,T_{11}^s \}}{\min\{T_{00}^s,T_{01}^s \}}< 1
% \end{eqnarray*}

%as well as  $0 < \frac{1-\max\{T_{10}^s,T_{11}^s \}}{\max\{T_{00}^s,T_{01}^s \}}\leq h^s(\theta^s(\alpha^a,\alpha^b)) \leq \frac{1-\min\{T_{10}^s,T_{11}^s \}}{\min\{T_{00}^s,T_{01}^s \}}< 1.$
Moreover, $g^{1s}(\theta^s(\alpha^a,\alpha^b))$ is the convex combination of $T_{11}^s$ and $T_{10}^s$, and $g^{0s}(\theta^s(\alpha^a,\alpha^b))$ is the convex combination of $T_{01}^s$ and $T_{00}^s$, the following holds $\forall \alpha^a\in [0,1], \alpha^b\in [0,1]$,
\begin{eqnarray*}
\min\{T_{10}^s,T_{11}^s \}\leq g^{1s}(\theta^s(\alpha^a,\alpha^b)) \leq \max\{T_{10}^s,T_{11}^s \}~;\\
\min\{T_{00}^s,T_{01}^s \}\leq g^{0s}(\theta^s(\alpha^a,\alpha^b)) \leq \max\{T_{00}^s,T_{01}^s \}~,
\end{eqnarray*}
which implies $ 0 < \frac{1-\max\{T_{10}^s,T_{11}^s \}}{\max\{T_{00}^s,T_{01}^s \}}\leq h^s(\theta^s(\alpha^a,\alpha^b)) \leq \frac{1-\min\{T_{10}^s,T_{11}^s \}}{\min\{T_{00}^s,T_{01}^s \}}< 
+\infty.$ 

Furthermore, $l^s(\alpha^s)\coloneqq \frac{1}{\alpha^s} - 1$ is continuous and strictly decreasing in $\alpha^s$, and $$\lim_{\alpha^s \to 0} l^s(\alpha^s)= + \infty; ~~\lim_{\alpha^s \to 1} l^s(\alpha^s)= 0,$$ 
Given a fixed $\alpha^a\in [0,1]$,
because $h^b(\theta^b(\alpha^a,\alpha^b))$ is continuous over $\alpha^b \in [0,1]$ and with value varying between $ \frac{1-\max\{T_{10}^b,T_{11}^b \}}{\max\{T_{00}^b,T_{01}^b \}}$ and $\frac{1-\min\{T_{10}^b,T_{11}^b \}}{\min\{T_{00}^b,T_{01}^b \}}$, and $l^b(\alpha^b)$ is continuous with value varying from $+\infty$ to 0, there must exist at least one $\overline{\alpha}^b\in (0,1)$ such that $h^b(\theta^b(\alpha^a,\overline{\alpha}^b)) = l^b(\overline{\alpha}^b)$.
Similarly, given a fixed $\alpha^b\in [0,1]$, there must exist at least one $\overline{\alpha}^a\in (0,1)$ such that $h^a(\theta^a(\overline{\alpha}^a,\alpha^b)) = l^a(\overline{\alpha}^a)$. 

Secondly, we prove that all the solutions $(\overline{\alpha}^a,\alpha^b)$ and $(\alpha^a,\overline{\alpha}^b)$ are on continuous curves in the 2D plane $\{(\alpha^a,\alpha^b): \alpha^a\in [0,1], \alpha^b\in [0,1]\}$. 

According to the continuity of $l^s(\cdot)$ and $h^s(\cdot)$, we have $\forall \alpha^a \in [0,1]$, $\lim_{\alpha^{a'}\to\alpha^a} l^{a}(\alpha^{a'})= l^{a}(\alpha^a)$; furthermore, $\forall \alpha^a \in [0,1]$ and $\forall \theta_i^a \in \{\theta^a: \; l^a(\alpha^a) = h^a(\theta^a)\}$, $\lim_{\theta^{a'}_i \to\theta^a_i} h^{a}(\theta^{a'}_i)= h^a(\theta^a_i)$. 
Thus, $\forall \epsilon >0$, $\exists \delta >0$, such that $\forall \alpha^a \in [0,1]$, $|\alpha^{a'}-\alpha^a| < \delta$ $\Longrightarrow$  $|\theta^{a'}_i - \theta^a_i |< \epsilon$. 
Consequently, $\forall \epsilon>0,$ $\exists \delta' >0$ and $\exists \delta >0$, such that $\forall \alpha^a \in [0,1]$, $|\alpha^{a'} - \alpha^a|<\delta$ $\Longrightarrow$  $|\theta^{a'}_i - \theta_i^a|<\delta'$ $\Longrightarrow$ $|\alpha^{b'}_i-\alpha^b_i|<\epsilon$, the last statement is because of the continuity of $\theta^{a}(\alpha^a,\alpha^b)$; 
in other words, $\forall \alpha^a \in [0,1]$, $\lim_{\alpha^{a'}\to\alpha^a} \alpha^{b'}_i = \alpha^b_i$, where $i=1,...,N$. Therefore, $(\overline{\alpha}^a,\alpha^b)$ is on a set of continuous curves with $\alpha^b$ varying from 0 to 1. Similarly, one can prove that $(\alpha^a,\overline{\alpha}^b)$ is also on a set of continuous curves with $\alpha^a$ varying from 0 to 1.

Finally, we show the existence of equilibrium $(\widehat{\alpha}^a,\;\widehat{\alpha}^b)$. 

Consider a 2D plane $\{(\alpha^a,\alpha^b): \alpha^a\in [0,1], \alpha^b\in [0,1]\}$, 
 and $\mathcal{C}_1 =\{(\overline{\alpha}^a,\alpha^b)\}$ and $\mathcal{C}_2 = \{(\alpha^a,\overline{\alpha}^b)\}$
% $\{(\alpha^a,\alpha^b): \alpha^a\in [0,1], \alpha^b\in [0,1]\}$, then $\mathcal{C}_1 = \{(\alpha^a,\alpha^b): \alpha^a = \psi^a(\alpha^b),\alpha^b\in [0,1]\}$ and $\mathcal{C}_2 = \{(\alpha^a,\alpha^b): \alpha^b = \psi^b(\alpha^a),\alpha^a\in [0,1]\}$ 
that are two sets of continuous curves in the plane defined earlier.
It is straightforward to see that
there is at least one curve among $\mathcal{C}_1$ whose $\alpha^b$ varies from 0 to 1 and at least one curve among $\mathcal{C}_2$ whose $\alpha^a$ varies from 0 to 1. These two continuous curves must have at least one intersection. Moreover,
%when each of $\mathcal{C}_1$ and $\mathcal{C}_2$ has only one curve, there must be at least one intersection of them because $\alpha^b$ on $\mathcal{C}_1$ varies from 0 to 1 while $\alpha^a$ on $\mathcal{C}_2$ varies from 0 to 1; furthermore, due to the continuity of $h^s(\theta^s(\alpha^a, \alpha^b))$ and $l^s(\alpha^s)$, these two sets of curves, $\mathcal{C}_1$ and $\mathcal{C}_2$, must have at least one intersection because $\alpha^b$ on $\mathcal{C}_1$ varies from 0 to 1 while $\alpha^a$ on $\mathcal{C}_2$ varies from 0 to 1. 
this intersection $(\widehat{\alpha}^a,\widehat{\alpha}^b)$ satisfies $h^b(\theta^b(\widehat{\alpha}^a,\widehat{\alpha}^b)) = l^b(\widehat{\alpha}^b)$ and $h^a(\theta^a(\widehat{\alpha}^a,\widehat{\alpha}^b)) = l^a(\widehat{\alpha}^a)$, is an equilibrium of system. 

Moreover, we also realized that the proof can also be done by using Brouwer's Fixed Point Theorem in topology.
\end{proof} 
\paragraph{The proof of Theorem \ref{thm:unique_equilibrium}.}
\begin{proof}
Following the proof of Theorem \ref{lemma:exist_equilibrium}, 
\begin{eqnarray*}
 h^s(\theta^s(\alpha^a,\alpha^b)) =\frac{1-g^{1s}(\theta^s(\alpha^a,\alpha^b))}{g^{0s}(\theta^s(\alpha^a,\alpha^b))}= \frac{1 - ( T_{10}^s\mathbb{G}_1^s(\theta^s(\alpha^a,\alpha^b)) + T_{11}^s\big(1-\mathbb{G}_1^s(\theta^s(\alpha^a,\alpha^b))\big))}{ T_{00}^s\mathbb{G}_0^s(\theta^s(\alpha^a,\alpha^b)) + T_{01}^s\big(1-\mathbb{G}_0^s(\theta^s(\alpha^a,\alpha^b))\big)}.
\end{eqnarray*}

Note that $\forall y\in \{0,1\}$, $ T_{y0}^s\mathbb{G}_y^s(\theta^s(\alpha^a,\alpha^b)) + T_{y1}^s\big(1-\mathbb{G}_y^s(\theta^s(\alpha^a,\alpha^b))\big)$ is the convex combination of $T_{y0}^s$ and $T_{y1}^s$ with CDF $\mathbb{G}_y^s(\theta^s(\alpha^a,\alpha^b))$ as the weight. Because $\mathbb{G}_y^s(\theta^s(\alpha^a,\alpha^b))$ is continuous and non-decreasing in $\theta^s(\alpha^a,\alpha^b)$, % under Assumption \ref{ass:fair_prob} according to Lemma \ref{lemma:policy_function_prop},
under Condition \ref{con:transition}\ref{con:transition_I},  $h^s(\theta^s(\alpha^a,\alpha^b))$ is non-decreasing in $\theta^s(\alpha^a,\alpha^b)$; while under Condition \ref{con:transition}\ref{con:transition_II},  $h^s(\theta^s(\alpha^a,\alpha^b))$ is non-increasing in $\theta^s(\alpha^a,\alpha^b)$.

Under unconstrained optimal policy or optimal fair policy with constraint $\mathcal{C}$ satisfying Assumption \ref{ass:mono-inference} and \ref{ass:fair_prob}, $\theta^s(\alpha^a,\alpha^b)$ is non-increasing in $\alpha^a,\alpha^b$. 
% \XR{Lemma 4?}
Therefore, under Condition \ref{con:transition}\ref{con:transition_I}, $h^s(\theta^s(\alpha^a,\alpha^b))$ is non-decreasing in $\alpha^a,\alpha^b$, while under Condition \ref{con:transition}\ref{con:transition_II},  $h^s(\theta^s(\alpha^a,\alpha^b))$ is non-increasing in $\alpha^a,\alpha^b$. Moreover, 
\begin{eqnarray*}
\text{Under Condition \ref{con:transition}\ref{con:transition_I}:    }~~~~~~ 0< \frac{1-T^s_{10}}{T^s_{00}} \leq h^s(\theta^s(\alpha^a,\alpha^b)) \leq \frac{1-T^s_{11}}{T^s_{01}} < +\infty\\
\text{Under Condition \ref{con:transition}\ref{con:transition_II}:    }~~~~~~ 0< \frac{1-T^s_{11}}{T^s_{01}} \leq h^s(\theta^s(\alpha^a,\alpha^b)) \leq \frac{1-T^s_{10}}{T^s_{00}} < +\infty
\end{eqnarray*}

First consider the case when Condition \ref{con:transition}\ref{con:transition_I} is satisfied.

Because function $l^s(\alpha^s)\coloneqq \frac{1}{\alpha^s} - 1$ is continuous and strictly decreasing from $+\infty$ to 0 over $\alpha^s\in [0,1]$, $\forall s\in \{a,b\}$. Thus, given any fixed $\alpha^{b}\in [0,1]$, strictly decreasing function $l^a(\alpha^a)$ and non-decreasing function $h^a(\theta^a(\alpha^a,\alpha^b))$ has exactly one intersection, i.e., $\exists$ only one $\overline{\alpha}^a$ such that  $h^a(\theta^a(\overline{\alpha}^a,\alpha^b)) = l^a(\overline{\alpha}^a)$. 
 $\forall \alpha^b$, the set $\Psi^a(\alpha^b) = \{\overline{\alpha}^a: h^a(\theta^a(\overline{\alpha}^a,\alpha^b)) = l^a(\overline{\alpha}^a)\}$ has only one element, and they constitute continuous function $\overline{\alpha}^a = \psi^a(\alpha^b)$ (balanced function). Similarly, $\forall \alpha^a$, set $\Psi^b(\alpha^a) = \{\overline{\alpha}^b: h^b(\theta^b(\alpha^a,\overline{\alpha}^b)) = l^b(\overline{\alpha}^b)\}$ also has only one element,  which forms continuous function $\overline{\alpha}^b = \psi^b(\alpha^a)$.

Because given any $\alpha^a$, $h^a(\theta^a(\alpha^a,\alpha^b))$ is non-decreasing in $\alpha^b$, as  $\alpha^b$ increases, the intersection with $l^a(\overline{\alpha}^a)$ is non-increasing. Therefore, $\psi^a(\alpha^b)$ is non-increasing in $\alpha^b$. Similarly,    $\psi^b(\alpha^a)$ is also non-increasing in $\alpha^a$. 

On the 2D plane  $\{(\alpha^a,\alpha^b): \alpha^a\in [0,1], \alpha^b\in [0,1]\}$, two curves $\mathcal{C}_1 = \{(\alpha^a,\alpha^b): \alpha^a = \psi^a(\alpha^b),\alpha^b\in [0,1]\}$ and $\mathcal{C}_2 = \{(\alpha^a,\alpha^b): \alpha^b = \psi^b(\alpha^a),\alpha^a\in [0,1]\}$ are both continuous and non-increasing. One sufficient condition to guarantee $\mathcal{C}_1$ and $\mathcal{C}_2$ have exact one intersection, is that $|\frac{d \psi^a(\alpha^b)}{d \alpha^b}| < 1, \forall \alpha^b\in[0,1]$ and  $|\frac{d \psi^b(\alpha^a)}{d \alpha^a}| < 1,\forall \alpha^a\in[0,1]$. In the followings, we show these sufficient conditions will hold if $|\frac{\partial h^a(\theta^a(\alpha^a,\alpha^b))}{\partial \alpha^{b}}| < 1$ and $|\frac{\partial h^b(\theta^b(\alpha^a,\alpha^b))}{\partial \alpha^{a}}| < 1, \forall \alpha^a,\alpha^b$.

Denote $u \coloneqq  h^a(\theta^a({\psi^a(\alpha^b)},\alpha^b))$, {because $l^a(\psi^a(\alpha^b)) =  h^a(\theta^a(\psi^a(\alpha^b),\alpha^b)),\forall \alpha^b$,}
\begin{eqnarray*}
\frac{d \psi^a(\alpha^b)}{d \alpha^b}
&=& \frac{d (l^a)^{-1}(u)}{d\alpha^b}
= \frac{d(l^a)^{-1}(u)}{d u}\frac{d u}{d\alpha^b}
=  \frac{1}{(l^a)'((l^a)^{-1}(u))}\frac{du}{d\alpha^b}
=  -((l^a)^{-1}(u))^2\frac{du}{d\alpha^b}.
\end{eqnarray*}
Because $(l^a)^{-1}(u) = {\psi^a(\alpha^b)}\in [0,1]$, $-((l^a)^{-1}(u))^2 \in [-1,0]$. Moreover, because of the condition $|\frac{d h^a(\theta^a(\alpha^a,\alpha^b))}{d \alpha^b}| < 1$, we have $$\Big|\frac{d \psi^a(\alpha^b)}{d \alpha^b}\Big|<1.$$

Similarly, we can show that $|\frac{d \psi^b(\alpha^a)}{d \alpha^a}| < 1$ holds $\forall \alpha^a$ if $|\frac{\partial h^b(\theta^b(\alpha^a,\alpha^b))}{\partial \alpha^{a}} |< 1$. Therefore, $\mathcal{C}_1$, $\mathcal{C}_2$ have only one intersection, the equilibrium $(\widehat{\alpha}^a,\widehat{\alpha}^b)$ is unique. 

Now consider the case when Condition \ref{con:transition}\ref{con:transition_II} is satisfied. 

Because $\frac{d l^s(\alpha^s)}{d \alpha^s} =- \frac{1}{(\alpha^s)^2} < -1, \forall \alpha^s \in (0,1)$, and $-1 %\leq -K^s_s
\leq \frac{\partial h^s(\theta^s(\alpha^a,\alpha^b))}{\partial\alpha^s} \leq 0$ for any fixed $\alpha^{-s}\in [0,1]$. Strictly decreasing function $l^s(\alpha^s)$ and non-increasing function $h^s(\theta^s(\alpha^a,\alpha^b))$ has exactly one intersection. Therefore, $\forall \alpha^b$, balanced set $\Psi^a(\alpha^b) = \{\overline{\alpha}^a: h^a(\theta^a(\overline{\alpha}^a,\alpha^b)) = l^a(\overline{\alpha}^a)\}$ has only one element, and they constitute continuous function $\overline{\alpha}^a = \psi^a(\alpha^b)$ (balanced function). Similarly, $\forall \alpha^b$, set $\Psi^a(\alpha^b) = \{\overline{\alpha}^a: h^a(\theta^a(\overline{\alpha}^a,\alpha^b)) = l^a(\overline{\alpha}^a)\}$ also has only one element, which forms continuous function $\overline{\alpha}^a = \psi^a(\alpha^b)$.  

Because given any $ \alpha^a$, $h^a(\theta^a(\alpha^a,\alpha^b))$ is non-increasing in $\alpha^b$. As $\alpha^b$ increases, the intersection with $l^a(\overline{\alpha}^a)$ is non-decreasing. Therefore, $\psi^a(\alpha^b)$ is non-decreasing in $\alpha^b$. Similarly,    $\psi^b(\alpha^a)$ is also non-decreasing in $\alpha^a$. 

On the 2D plane  $\{(\alpha^a,\alpha^b): \alpha^a\in [0,1], \alpha^b\in [0,1]\}$, two curves $\mathcal{C}_1 = \{(\alpha^a,\alpha^b): \alpha^a = \psi^a(\alpha^b),\alpha^b\in [0,1]\}$ and $\mathcal{C}_2 = \{(\alpha^a,\alpha^b): \alpha^b = \psi^b(\alpha^a),\alpha^a\in [0,1]\}$ are both continuous and non-decreasing. One sufficient condition to guarantee $\mathcal{C}_1$ and $\mathcal{C}_2$ have exact one intersection, is that $\frac{d \psi^a(\alpha^b)}{d \alpha^b} < 1, \forall \alpha^b\in[0,1]$ and  $\frac{d \psi^b(\alpha^a)}{d \alpha^a} < 1,\forall \alpha^a\in[0,1]$. {Using the same analysis as the case under Condition \ref{con:transition}\ref{con:transition_I}, %In the followings, 
we can show these sufficient conditions will hold if $\mid \frac{\partial h^a(\theta^a(\alpha^a,\alpha^b))}{\partial \alpha^{b}} \mid < 1$ and $\mid \frac{\partial h^b(\theta^b(\alpha^a,\alpha^b))}{\partial \alpha^{a}}\mid < 1, \forall \alpha^a,\alpha^b$.}

Therefore, $\mathcal{C}_1$, $\mathcal{C}_2$ have only one intersection, the equilibrium $(\widehat{\alpha}^a,\widehat{\alpha}^b)$ is unique. 
\end{proof}
\iffalse
\begin{corollary}\label{rmk:unique}
For any feature distribution $\{G^s_y(x)\}_{s,y}$, suppose that $\big |\frac{\partial \mathbb{G}_y^s(\theta^s(\alpha^a,\alpha^b))}{\partial \alpha^u} \big| \leq M_y$ holds for some constant $M_y\in [0,\infty)$, $\forall y \in \{0,1\}, \forall u\in \{a,b\}$, then under either Condition \ref{con:transition}\ref{con:transition_I} or \ref{con:transition}\ref{con:transition_II}, $\exists \epsilon^s_y>0$ such that for any transitions that satisfy $|T^s_{y1}- T^s_{y0}|< \epsilon^s_y$, $s\in \{a,b\}, y\in \{0,1\}$, which implies that  there exists unique \rt{stable} equilibrium in the corresponding system.
\end{corollary}
\fi

\paragraph{The proof of Corollary \ref{rmk:unique}.}\label{app:corollary1}

 \begin{corollary}\label{rmk:unique}
 For any feature distribution $\{G^s_y(x)\}_{s,y}$, suppose that $\big |\frac{\partial \mathbb{G}_y^s(\theta^s(\alpha^a,\alpha^b))}{\partial \alpha^u} \big| \leq M_y$ holds for some constant $M_y\in [0,\infty)$, $\forall y \in \{0,1\}, \forall u\in \{a,b\}$. Under either Condition \ref{con:transition}\ref{con:transition_I} or \ref{con:transition}\ref{con:transition_II}, $\exists \epsilon^s_y>0$ such that for any transitions that satisfy $|T^s_{y1}- T^s_{y0}|< \epsilon^s_y$, $s\in \{a,b\}, y\in \{0,1\}$, the corresponding dynamics system has a unique equilibrium.
\end{corollary}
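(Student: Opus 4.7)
The plan is to verify the sufficient condition in Theorem \ref{thm:unique_equilibrium} by showing that when the pairs $(T_{y0}^s, T_{y1}^s)$ are sufficiently close, the partial derivatives $\big|\frac{\partial h^s(\theta^s(\alpha^a,\alpha^b))}{\partial \alpha^u}\big|$ can be made arbitrarily small, hence strictly less than $1$. The starting point is to rewrite $h^s$ in the affine-in-$\mathbb{G}$ form
\begin{eqnarray*}
h^s(\theta^s)
= \frac{1-T_{11}^s - (T_{10}^s-T_{11}^s)\mathbb{G}_1^s(\theta^s)}{T_{01}^s + (T_{00}^s-T_{01}^s)\mathbb{G}_0^s(\theta^s)}
\;\eqqcolon\; \frac{N(\theta^s)}{D(\theta^s)},
\end{eqnarray*}
which makes the dependence on the differences $T_{10}^s-T_{11}^s$ and $T_{00}^s-T_{01}^s$ explicit.

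Next, I would apply the chain rule together with the identity $G_y^s(\theta^s)\frac{\partial \theta^s}{\partial \alpha^u} = \frac{\partial \mathbb{G}_y^s(\theta^s(\alpha^a,\alpha^b))}{\partial \alpha^u}$ to get
\begin{eqnarray*}
\frac{\partial h^s}{\partial \alpha^u}
= \frac{1}{D^2}\Big[-(T_{10}^s-T_{11}^s)\,D\,\tfrac{\partial \mathbb{G}_1^s}{\partial \alpha^u}
\;-\; N\,(T_{00}^s-T_{01}^s)\,\tfrac{\partial \mathbb{G}_0^s}{\partial \alpha^u}\Big].
\end{eqnarray*}
Bounding $N\le 1$, $D \ge \min\{T_{00}^s,T_{01}^s\}$ and using the hypothesis $\big|\frac{\partial \mathbb{G}_y^s}{\partial \alpha^u}\big|\le M_y$, I obtain
\begin{eqnarray*}
\Big|\frac{\partial h^s}{\partial \alpha^u}\Big|
\;\le\; \frac{|T_{10}^s-T_{11}^s|\,M_1}{\min\{T_{00}^s,T_{01}^s\}}
\;+\;\frac{|T_{00}^s-T_{01}^s|\,M_0}{\min\{T_{00}^s,T_{01}^s\}^2}.
\end{eqnarray*}
Since $T_{00}^s \in (0,1)$ is strictly positive, picking $\epsilon_0^s$ small compared to $T_{00}^s$ keeps the denominator uniformly bounded away from $0$; then shrinking $\epsilon_0^s,\epsilon_1^s$ further makes the whole right-hand side strictly less than $1$, uniformly in $(\alpha^a,\alpha^b)\in[0,1]^2$. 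Under Condition \ref{con:transition}\ref{con:transition_I}, this uniform bound on $\big|\frac{\partial h^s}{\partial \alpha^{-s}}\big|$ is exactly what Theorem \ref{thm:unique_equilibrium} requires; under Condition \ref{con:transition}\ref{con:transition_II}, the same argument applied with $u=s$ handles the additional requirement on $\big|\frac{\partial h^s}{\partial \alpha^{s}}\big|$.

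I expect the main obstacle to be purely bookkeeping: making the dependence of the $\epsilon_y^s$ on the fixed baseline values $T_{00}^s, T_{11}^s$ explicit and ensuring the denominator $\min\{T_{00}^s,T_{01}^s\}$ does not collapse as the transitions vary within the allowed $\epsilon$-neighborhood. A clean way to handle this is to first fix $T_{00}^s$ and $T_{11}^s$, then require $\epsilon_0^s < T_{00}^s/2$ (so that $\min\{T_{00}^s,T_{01}^s\}\ge T_{00}^s/2$), and finally choose $\epsilon_0^s,\epsilon_1^s$ small enough that the explicit upper bound derived above is below $1$. Invoking Theorem \ref{thm:unique_equilibrium} then yields the claimed uniqueness.
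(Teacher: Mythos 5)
Your proposal is correct and follows essentially the same route as the paper's proof: write $h^s$ in the affine-in-$\mathbb{G}$ form so the differences $T^s_{y1}-T^s_{y0}$ appear explicitly, differentiate by the quotient rule, bound the numerator terms by $M_y$ and the denominator below by $\min\{T^s_{00},T^s_{01}\}$, and shrink the differences until the resulting bound on $\big|\partial h^s/\partial\alpha^u\big|$ falls below $1$ so that Theorem \ref{thm:unique_equilibrium} applies. Your extra care in pinning the denominator away from zero (requiring $\epsilon^s_0 < T^s_{00}/2$) is a minor bookkeeping refinement of the same argument, not a different method.
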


\begin{proof}
Define notations $ \mathbb{G}_y^s \coloneqq \mathbb{G}_y^s(\theta^s(\alpha^a,\alpha^b))$, $\Delta T_0^s \coloneqq T_{01}^s-T_{00}^s $ and $\Delta T_1^s \coloneqq T_{11}^s-T_{10}^s$. 
\begin{eqnarray*}
h^s(\theta^s(\alpha^a,\alpha^b)) = \frac{ ( 1-T_{10}^s)\mathbb{G}_1^s+ (1-T_{11}^s)\big(1-\mathbb{G}_1^s\big)}{ T_{00}^s\mathbb{G}_0^s+ T_{01}^s\big(1-\mathbb{G}_0^s\big)}
%&=& \frac{ ( 1-T_{11}^s + \Delta T^s_1)\mathbb{G}_1^s+ (1-T_{11}^s)\big(1-\mathbb{G}_1^s\big)}{ T_{00}^s\mathbb{G}_0^s+ (T_{00}^s+\Delta T^s_0)\big(1-\mathbb{G}_0^s\big)}\\
= \frac{ ( 1-T_{11}^s) + \Delta T^s_1\mathbb{G}_1^s}{ T_{00}^s+ \Delta T^s_0\big(1-\mathbb{G}_0^s\big)}
\end{eqnarray*}
Take derivative w.r.t. $\alpha^u$, $\forall u \in \{a,b\}$,
\begin{eqnarray*}
\frac{\partial h^s(\theta^s(\alpha^a,\alpha^b))}{\partial \alpha^u} &=& \frac{\Delta T^s_1\frac{\partial \mathbb{G}_1^s}{\partial \alpha^u}(T_{00}^s+ \Delta T^s_0\big(1-\mathbb{G}_0^s\big)) + \Delta T^s_0\frac{\partial \mathbb{G}_0^s}{\partial \alpha^u}(( 1-T_{11}^s) + \Delta T^s_1\mathbb{G}_1^s)}{(T_{00}^s+ \Delta T^s_0\big(1-\mathbb{G}_0^s\big))^2}%\\
%&\leq& \frac{\Delta T^s_1M_1T_{00}^s + \Delta T^s_0M_0( 1-T_{10}^s)}{(T_{00}^s)^2}
\end{eqnarray*}

Consider case under Condition \ref{con:transition}\ref{con:transition_I}. Since $\Delta T_0^s < 0$, $\Delta T_1^s < 0$, 
$T_{00}^s+ \Delta T^s_0\big(1-\mathbb{G}_0^s\big)>0$,
and $( 1-T_{11}^s) + \Delta T^s_1\mathbb{G}_1^s > 0$
, we have $|\frac{\partial h^s(\theta^s(\alpha^a,\alpha^b))}{\partial \alpha^u}|\leq \mid \frac{\Delta T^s_1M_1T_{00}^s + \Delta T^s_0M_0( 1-T_{11}^s)}{(T_{01}^s)^2}\mid $.

Take $\epsilon_1^s  = \epsilon_0^s =  \frac{(T_{01}^s)^2}{M_1T_{00}^s + M_0( 1-T_{11}^s)}$, if $|\Delta T^s_1| < \epsilon_1^s $ and  $|\Delta T^s_0| < \epsilon_0^s $, then $|\frac{\partial h^s(\theta^s(\alpha^a,\alpha^b))}{\partial \alpha^u}| < 1$ holds. From Theorem \ref{thm:unique_equilibrium}, the equilibrium of dynamics \ref{eq:dynamics} is unique. 

Consider case under Condition \ref{con:transition}\ref{con:transition_II}.

Since $\Delta T_0^s > 0$ and $\Delta T_1^s > 0$, we have $|\frac{\partial h^s(\theta^s(\alpha^a,\alpha^b))}{\partial \alpha^u}|\leq \frac{\Delta T^s_1M_1T_{01}^s + \Delta T^s_0M_0( 1-T_{10}^s)}{(T_{00}^s)^2}$. 

Take $\epsilon_1^s  = \epsilon_0^s =  \frac{(T_{00}^s)^2}{M_1T_{01}^s + M_0( 1-T_{10}^s)}$, if $\Delta T^s_1 < \epsilon_1^s $ and  $\Delta T^s_0 < \epsilon_0^s $, then $|\frac{\partial h^s(\theta^s(\alpha^a,\alpha^b))}{\partial \alpha^u}| < 1$ holds. From Theorem \ref{thm:unique_equilibrium}, the equilibrium of dynamics \ref{eq:dynamics} is unique.
\end{proof}

\paragraph{The proof of Theorem \ref{thm:general}.}
\begin{proof}
$\forall s\in\{a,b\}$, an equilibrium $\widehat{\alpha}^s_{\texttt{UN}}$ satisfies: 
$$\frac{1-g^{1s}(\theta_{\texttt{UN}}^s(\widehat{\alpha}^s_{\texttt{UN}}))}{g^{0s}(\theta_{\texttt{UN}}^s(\widehat{\alpha}^s_{\texttt{UN}}))} = \frac{1- \big(T_{11}^s(1-\mathbb{G}_1^s(\theta_{\texttt{UN}}^s(\widehat{\alpha}^s_{\texttt{UN}}))) + T_{10}^s\mathbb{G}_1^s(\theta_{\texttt{UN}}^s(\widehat{\alpha}^s_{\texttt{UN}}))\big)}{ T_{01}^s(1-\mathbb{G}_0^s(\theta_{\texttt{UN}}^s(\widehat{\alpha}^s_{\texttt{UN}})))+T_{00}^s\mathbb{G}_0^s(\theta_{\texttt{UN}}^s(\widehat{\alpha}^s_{\texttt{UN}})) }= \frac{1}{\widehat{\alpha}^s_{\texttt{UN}}} - 1.$$

One solution to the above equation is:
\begin{eqnarray*}
\widehat{\alpha}^s_{\texttt{UN}} = T_{11}^s(1-\mathbb{G}_1^s(\theta_{\texttt{UN}}^s(\widehat{\alpha}^s_{\texttt{UN}}))) + T_{10}^s\mathbb{G}_1^s(\theta_{\texttt{UN}}^s(\widehat{\alpha}^s_{\texttt{UN}})) = T_{01}^s(1-\mathbb{G}_0^s(\theta_{\texttt{UN}}^s(\widehat{\alpha}^s_{\texttt{UN}})))+T_{00}^s\mathbb{G}_0^s(\theta_{\texttt{UN}}^s(\widehat{\alpha}^s_{\texttt{UN}})) 
\end{eqnarray*}

It shows that $\widehat{\alpha}^s_{\texttt{UN}}$ is a convex combination of $T_{00}^s$, $T_{01}^s$, and also a convex combination of $T_{10}^s$, $T_{11}^s$. 

$\forall {\alpha}_{\texttt{UN}}$ and $\mathbb{G}_0^s(x) $, $\mathbb{G}_1^s(x)$, there is a set of transitions with $T_{00}^s< {\alpha}_{\texttt{UN}} < T_{01}^s$ and  $T_{10}^s< {\alpha}_{\texttt{UN}} < T_{11}^s$ (satisfy Condition \ref{con:transition}\ref{con:transition_II}), or $T_{01}^s< {\alpha}_{\texttt{UN}} < T_{00}^s$ and  $T_{11}^s< {\alpha}_{\texttt{UN}} < T_{10}^s$ (satisfy Condition \ref{con:transition}\ref{con:transition_I}), such that the above equation holds with $\widehat{\alpha}^s_{\texttt{UN}} = \alpha_{\texttt{UN}}$, $\forall s\in \{a,b\}$, i.e., equitable equilibrium is attained. 
 
Next we show that if $G^a_y(x)\neq G^b_y(x)$, then $\widehat{\alpha}_{\mathcal{C}}^b\neq \widehat{\alpha}_{\mathcal{C}}^a$ under these sets of transitions. Under the conditions of Theorem \ref{thm:unique_equilibrium}, 
% By the proof of Theorem \ref{thm:unique_equilibrium},
$(\widehat{\alpha}^a_{\mathcal{C}},\widehat{\alpha}^b_{\mathcal{C}})$ is the intersection of two curves $\mathcal{C}_1 = \{(\alpha^a,\alpha^b): \alpha^a = \psi^a_{\mathcal{C}}(\alpha^b),\alpha^b\in [0,1]\}$ and $\mathcal{C}_2 = \{(\alpha^a,\alpha^b): \alpha^b = \psi^b_{\mathcal{C}}(\alpha^a),\alpha^a\in [0,1]\}$; furthermore, let $\widetilde{\alpha}^a_{\mathcal{C}}$, $\widetilde{\alpha}^b_{\mathcal{C}}$ be defined such that $\widetilde{\alpha}^a_{\mathcal{C}}= \psi^a_{\mathcal{C}}(\widetilde{\alpha}^a_{\mathcal{C}})$,  $\widetilde{\alpha}_{\mathcal{C}}^b= \psi^b_{\mathcal{C}}(\widetilde{\alpha}^b_{\mathcal{C}})$, which are the intersections of $\alpha^a = \psi^a_{\mathcal{C}}(\alpha^{b})$ and $\alpha^a=\alpha^{b}$, as well as $\alpha^b = \psi^b_{\mathcal{C}}(\alpha^{a})$ and $\alpha^a=\alpha^{b}$, respectively. Then in order to prove $\widehat{\alpha}_{\mathcal{C}}^b\neq \widehat{\alpha}_{\mathcal{C}}^a$, it is sufficient to show $\widetilde{\alpha}^a_{\mathcal{C}} \neq \widetilde{\alpha}^b_{\mathcal{C}}$.
 
Given $\alpha^a = \alpha^b = \alpha_{\texttt{UN}}$, because $G^a_y(x)\neq G^b_y(x)$, we have $\theta^s_{\texttt{UN}}(\alpha_{\texttt{UN}})\neq \theta^s_{\mathcal{C}}(\alpha_{\texttt{UN}},\alpha_{\texttt{UN}})$ and to satisfy Eqn. \eqref{eq:opt_fair_policy}, there are only two possibilities: (1) $\theta^a_{\texttt{UN}}(\alpha_{\texttt{UN}})> \theta^a_{\mathcal{C}}(\alpha_{\texttt{UN}},\alpha_{\texttt{UN}})$, $\theta^b_{\texttt{UN}}(\alpha_{\texttt{UN}})< \theta^b_{\mathcal{C}}(\alpha_{\texttt{UN}},\alpha_{\texttt{UN}})$; (2) $\theta^a_{\texttt{UN}}(\alpha_{\texttt{UN}})< \theta^a_{\mathcal{C}}(\alpha_{\texttt{UN}},\alpha_{\texttt{UN}})$, $\theta^b_{\texttt{UN}}(\alpha_{\texttt{UN}})> \theta^b_{\mathcal{C}}(\alpha_{\texttt{UN}},\alpha_{\texttt{UN}})$.

WLOG, suppose the first case holds. Under Condition \ref{con:transition}\ref{con:transition_II},
$$ \frac{1-g^{1b}(\theta_{\texttt{UN}}^b({\alpha}_{\texttt{UN}}))}{g^{0b}(\theta_{\texttt{UN}}^b({\alpha}_{\texttt{UN}}))} < \frac{1-g^{1b}( \theta_{\mathcal{C}}^b({\alpha}_{\texttt{UN}},\alpha_{\texttt{UN}}))}{g^{0b}( \theta_{\mathcal{C}}^b({\alpha}_{\texttt{UN}},\alpha_{\texttt{UN}}))}; ~~ \frac{1-g^{1a}(\theta_{\texttt{UN}}^a({\alpha}_{\texttt{UN}}))}{g^{0a}(\theta_{\texttt{UN}}^a({\alpha}_{\texttt{UN}}))} > \frac{1-g^{1a}( \theta_{\mathcal{C}}^a({\alpha}_{\texttt{UN}},\alpha_{\texttt{UN}}))}{g^{0a}( \theta_{\mathcal{C}}^a({\alpha}_{\texttt{UN}},\alpha_{\texttt{UN}}))}$$
It implies that $\widetilde{\alpha}^b_{\mathcal{C}}< \widehat{\alpha}^b_{\texttt{UN}} = \widehat{\alpha}^a_{\texttt{UN}} < \widetilde{\alpha}^a_{\mathcal{C}}$. Similarly, under Condition \ref{con:transition}\ref{con:transition_I}, $\widetilde{\alpha}^b_{\mathcal{C}}> \widehat{\alpha}^b_{\texttt{UN}} = \widehat{\alpha}^a_{\texttt{UN}} > \widetilde{\alpha}^a_{\mathcal{C}}$. Therefore, $\widehat{\alpha}^a_{\mathcal{C}}\neq \widehat{\alpha}^b_{\mathcal{C}} $.

In contrast, if $G^a_y(x) = G^b_y(x)$, we have $\theta^s_{\texttt{UN}}(\alpha)= \theta^s_{\mathcal{C}}(\alpha,\alpha)$ and $\widetilde{\alpha}^b_{\mathcal{C}}= \widetilde{\alpha}^a_{\mathcal{C}}$. Therefore, $\widehat{\alpha}^a_{\mathcal{C}}= \widehat{\alpha}^b_{\mathcal{C}}$.
\end{proof}
\paragraph{The proof of Theorem \ref{thm:comp_dp_eqopt}.}
\begin{proof}
WLOG, suppose that $\widehat{\alpha}^a_{\texttt{UN}} > \widehat{\alpha}^b_{\texttt{UN}}$
in the proof. Let $\psi^a_{\mathcal{C}}(\cdot),\psi^b_{\mathcal{C}}(\cdot)$ be balanced functions as defined in Theorem \ref{thm:unique_equilibrium} under constraint $\mathcal{C}$.
Firstly, we show that $\widehat{\alpha}^b_{\texttt{UN}} $ and $\widehat{\alpha}^a_{\texttt{UN}}$ are solutions to $\begin{cases}
\alpha^b = \psi^b_{\mathcal{C}}(\alpha^a)\\
\alpha^a = \alpha^b
\end{cases}$
and 
$\begin{cases}
\alpha^a = \psi^a_{\mathcal{C}}(\alpha^b)\\
\alpha^a = \alpha^b
\end{cases}$, respectively, i.e., $\widehat{\alpha}^b_{\texttt{UN}} = \psi^b_{\mathcal{C}}(\widehat{\alpha}^b_{\texttt{UN}})$ and $\widehat{\alpha}^a_{\texttt{UN}} = \psi_{\mathcal{C}}^a(\widehat{\alpha}^a_{\texttt{UN}})$. 

Because $G^a_y(x) = G^b_y(x)$, $\forall y\in \{0,1\}, \forall x$, when $\alpha^a = \alpha^b = \alpha$,
we have $\gamma^a(x) = \gamma^b(x)$, $\mathcal{P}^a_{\texttt{EqOpt}}(x)=\mathcal{P}^b_{\texttt{EqOpt}}(x)$ and $\mathcal{P}^a_{\texttt{DP}}(x) = \mathcal{P}^b_{\texttt{DP}}(x)$, which implies $\theta^a_{\mathcal{C}}(\alpha,\alpha)=\theta^b_{\mathcal{C}}(\alpha,\alpha)$; furthermore, the optimal fair policies of \texttt{DP} and \texttt{EqOpt} satisfy
${\gamma}^a(\theta^a_{\mathcal{C}}(\alpha,\alpha))={\gamma}^b(\theta^b_{\mathcal{C}}(\alpha,\alpha))=\frac{u_-}{u_++u_-}$ according to the optimal fair policy equation:
\begin{eqnarray*}
\frac{p_a  \alpha^a}{{\gamma}^a(\theta^{a}_{\texttt{EqOpt}})} + \frac{p_b  \alpha^b}{{\gamma}^b(\theta^{b}_{\texttt{EqOpt}})}
 = \frac{p_a  \alpha^a}{
 \frac{u_-}{u_++u_-}
 } + \frac{p_b  \alpha^b}{
 \frac{u_-}{u_++u_-}
 };~~~
p_a  
{\gamma}^a(\theta^{a}_{\texttt{DP}})
+p_b  
{\gamma}^b(\theta^{b}_{\texttt{DP}})
=\frac{u_-}{u_++u_-}.
\end{eqnarray*}
Because ${\gamma}^a(\theta^a_{\texttt{UN}}(\alpha))={\gamma}^b(\theta^b_{\texttt{UN}}(\alpha))=\frac{u_-}{u_++u_-}$ we have ${\gamma}^a(\theta^a_{\texttt{UN}}(\alpha))={\gamma}^a(\theta^a_{\mathcal{C}}(\alpha,\alpha))={\gamma}^b(\theta^b_{\texttt{UN}}(\alpha))={\gamma}^b(\theta^b_{\mathcal{C}}(\alpha,\alpha))$ so that $\theta^a_{\mathcal{C}}(\alpha,\alpha)=\theta^a_{\texttt{UN}}(\alpha)=\theta^b_{\mathcal{C}}(\alpha,\alpha)=\theta^b_{\texttt{UN}}(\alpha)$ holds under any $\alpha$. $\forall s\in \{a,b\}$, because $\widehat{\alpha}^s_{\texttt{UN}}$ is the solution to balanced equation, i.e., $l^s(\widehat{\alpha}^s_{\texttt{UN}}) = h^s(\theta^s_{\texttt{UN}}(\widehat{\alpha}^s_{\texttt{UN}}))$. We have  $l^s(\widehat{\alpha}^s_{\texttt{UN}}) = h^s(\theta^s_{\mathcal{C}}(\widehat{\alpha}^s_{\texttt{UN}},\widehat{\alpha}^s_{\texttt{UN}}))$, which further implies $\widehat{\alpha}^s_{\texttt{UN}} = \psi^s_{\mathcal{C}}(\widehat{\alpha}^s_{\texttt{UN}}) $.

Under Condition \ref{con:transition}\ref{con:transition_II}, according to the proof of Theorem \ref{thm:unique_equilibrium}, we know that $0\leq \frac{d\psi^b_{\mathcal{C}}(\alpha^a)}{d \alpha^a} < 1$ and $0\leq \frac{d\psi^a_{\mathcal{C}}(\alpha^b)}{d \alpha^b} < 1$. Because $\widehat{\alpha}^b_{\texttt{UN}} = \psi^b_{\mathcal{C}}(\widehat{\alpha}^b_{\texttt{UN}})<\widehat{\alpha}^a_{\texttt{UN}}=\psi^a_{\mathcal{C}}(\widehat{\alpha}^a_{\texttt{UN}})$, we have $\widehat{\alpha}^b_{\texttt{UN}}< \psi^b_{\mathcal{C}}(\alpha^a) < \alpha^a$, $\forall \alpha^a \in [\widehat{\alpha}^b_{\texttt{UN}},\widehat{\alpha}^a_{\texttt{UN}}]$. Similarly, we have
$\alpha^b< \psi_{\mathcal{C}}^a(\alpha^b) < \widehat{\alpha}^a_{\texttt{UN}}$, $\forall \alpha^b \in [\widehat{\alpha}^b_{\texttt{UN}},\widehat{\alpha}^a_{\texttt{UN}}]$. Therefore, after representing the two balanced functions as two curves $\mathcal{C}_1 = \{(\alpha^a,\alpha^b): \alpha^a = \psi^a_{\mathcal{C}}(\alpha^b),\alpha^b\in [0,1]\}$ and $\mathcal{C}_2 = \{(\alpha^a,\alpha^b): \alpha^b = \psi^b_{\mathcal{C}}(\alpha^a),\alpha^a\in [0,1]\}$ on the 2D plane  $\{(\alpha^a,\alpha^b): \alpha^a\in [0,1], \alpha^b\in [0,1]\}$, the intersection ($\widehat{\alpha}^a_{\mathcal{C}},\widehat{\alpha}^b_{\mathcal{C}}$) of $\mathcal{C}_1$ and $\mathcal{C}_2$ satisfies: 1) $\widehat{\alpha}^a_{\mathcal{C}} > \widehat{\alpha}^b_{\mathcal{C}}$; 2) $\widehat{\alpha}^b_{\texttt{UN}}<\widehat{\alpha}^a_{\mathcal{C}} < \widehat{\alpha}^a_{\texttt{UN}}$; 3) $\widehat{\alpha}^b_{\texttt{UN}}<\widehat{\alpha}^b_{\mathcal{C}} < \widehat{\alpha}^a_{\texttt{UN}}$. Therefore, $|\widehat{\alpha}^a_{\mathcal{C}}-\widehat{\alpha}^b_{\mathcal{C}}| \leq |\widehat{\alpha}^a_{\texttt{UN}}-\widehat{\alpha}^b_{\texttt{UN}}|$.

Under Condition \ref{con:transition}\ref{con:transition_I}, according to the proof of Theorem \ref{thm:unique_equilibrium}, we know that 
$-1 < \frac{d\psi^b_{\mathcal{C}}(\alpha^a)}{d \alpha^a} \leq 0$ and $-1 < \frac{d\psi^a_{\mathcal{C}}(\alpha^b)}{d \alpha^b} \leq 0$. 
Because $\widehat{\alpha}^b_{\texttt{UN}} = \psi^b_{\mathcal{C}}(\widehat{\alpha}^b_{\texttt{UN}})<\widehat{\alpha}^a_{\texttt{UN}}=\psi^a_{\mathcal{C}}(\widehat{\alpha}^a_{\texttt{UN}})$, we have $\psi_{\mathcal{C}}^b(\alpha^a)<\widehat{\alpha}^b_{\texttt{UN}}$, $\forall \alpha^a>\widehat{\alpha}^b_{\texttt{UN}}$. Similarly, we have
$\psi_{\mathcal{C}}^a(\alpha^b)>\widehat{\alpha}^a_{\texttt{UN}}$, $\forall \alpha^b<\widehat{\alpha}^a_{\texttt{UN}}$. Due to the existence of equilibrium, the intersection ($\widehat{\alpha}^a_{\mathcal{C}},\widehat{\alpha}^b_{\mathcal{C}}$) of $\mathcal{C}_1$ and $\mathcal{C}_2$ must satisfy: 1) $\widehat{\alpha}^a_{\mathcal{C}} > \widehat{\alpha}^b_{\mathcal{C}}$; 2) $\widehat{\alpha}^a_{\texttt{UN}}<\widehat{\alpha}^a_{\mathcal{C}}$; 
3) $\widehat{\alpha}^b_{\mathcal{C}} < \widehat{\alpha}^b_{\texttt{UN}}$. Therefore, $|\widehat{\alpha}^a_{\mathcal{C}}-\widehat{\alpha}^b_{\mathcal{C}}| \geq |\widehat{\alpha}^a_{\texttt{UN}}-\widehat{\alpha}^b_{\texttt{UN}}|$.
\end{proof}

\paragraph{The proof of Theorem \ref{thm:transition}.}
\begin{proof}
{The proof is under the conditions of Theorem \ref{thm:unique_equilibrium} such that there is unique equilibrium of qualification rate.
Under fairness constraint $\mathcal{C} = \texttt{EqOpt}$ or \texttt{DP}, consider 2D plane $\{(\alpha^a,\alpha^b): \alpha^a \in [0,1],\alpha^b \in [0,1]\}$, and note that equilibrium $(\widehat{\alpha}^a_{\mathcal{C}},\widehat{\alpha}^b_{\mathcal{C}})$ is the intersection of two curves $\mathcal{C}_1 = \{(\alpha^a,\alpha^b): \alpha^a = \psi^a_{\mathcal{C}}(\alpha^b),\alpha^b\in [0,1]\}$ and  $\mathcal{C}_2 = \{(\alpha^a,\alpha^b):\alpha^b = \psi^b_{\mathcal{C}}(\alpha^a),\alpha^a\in [0,1]\}$.
Consider a line $\{(\alpha^a,\alpha^b):\alpha^a = \alpha^b,\alpha^a\in[0,1],\alpha^b\in[0,1]\}$, which has unique intersection $\widetilde{\alpha}^a_{\mathcal{C}}$ with $\mathcal{C}_1$, and unique intersection $\widetilde{\alpha}^b_{\mathcal{C}}$ with  $\mathcal{C}_2$. That is, $\widetilde{\alpha}^a_{\mathcal{C}} = \psi^a_{\mathcal{C}}(\widetilde{\alpha}^a_{\mathcal{C}})$, $\widetilde{\alpha}^b_{\mathcal{C}} = \psi^b_{\mathcal{C}}(\widetilde{\alpha}^b_{\mathcal{C}})$.}

First of all, we show that \emph{if $ \frac{u_+}{u_-} \geq \frac{1-T_{10}}{T_{00}}\beta(\widehat{x})$, under Condition \ref{con:transition}\ref{con:transition_II}, $\widehat{\alpha}^b_{\texttt{UN}} < \widehat{\alpha}^a_{\texttt{UN}}$.} 

By Condition \ref{ass:inv_transition}, given any $\alpha^a = \alpha^b=\alpha$, the corresponding qualification profiles of $\mathcal{G}_a$, $\mathcal{G}_b$ satisfy the followings: $\gamma^b(\widehat{x}) = \gamma^a(\widehat{x})$; $\gamma^b(x) < \gamma^a(x), \forall x < \widehat{x}$; $\gamma^b(x) > \gamma^a(x),\forall x > \widehat{x}$. Let $\overline{\alpha}$ be qualification rate such that 
${\gamma^a(\widehat{x})= \gamma^b(\widehat{x}) = \frac{u_-}{u_++u_-} \Longrightarrow} 
\frac{u_+}{u_-} = \beta(\widehat{x})(\frac{1}{\overline{\alpha}}-1)$, where $\beta(\widehat{x}) \coloneqq \frac{G^a_0(\widehat{x})}{G^a_1(\widehat{x})} = \frac{G^b_0(\widehat{x})}{G^b_1(\widehat{x})}$, then $\forall \alpha \in [\overline{\alpha},1]$, 
$\gamma^a(\theta^a_{\texttt{UN}}(\alpha)) = \gamma^b(\theta^b_{\texttt{UN}}(\alpha)) = \frac{u_-}{u_++u_-} < \frac{1}{\beta(\widehat{x})(\frac{1}{\alpha}-1) +1} = \gamma^a(\widehat{x})= \gamma^b(\widehat{x}).$ 
Thus, 
$\forall \alpha \in [\overline{\alpha},1],\; \theta^a_{\texttt{UN}}(\alpha) < \theta^b_{\texttt{UN}}(\alpha) < \widehat{x}$, which implies $\mathbb{G}_1^a(\theta^a_{\texttt{UN}}(\alpha)) < \mathbb{G}_1^b(\theta^b_{\texttt{UN}}(\alpha))$
and  $\mathbb{G}_0^a(\theta^a_{\texttt{UN}}(\alpha)) < \mathbb{G}_0^b(\theta^b_{\texttt{UN}}(\alpha))${; furthermore, under Condition \ref{con:transition}\ref{con:transition_II}, we have 
\begin{eqnarray*}
\frac{1-T_{11}}{T_{01}}< \frac{1-g^{1a}(\theta^a_{\texttt{UN}}(\alpha) )}{g^{0a}(\theta^a_{\texttt{UN}}(\alpha) )} < \frac{1-g^{1b}(\theta^b_{\texttt{UN}}(\alpha) )}{g^{0b}(\theta^b_{\texttt{UN}}(\alpha) )} 
< \frac{1-T_{10}}{T_{00}}, ~\forall \alpha \in [\overline{\alpha},1].
\end{eqnarray*}
%Denote the equilibrium under unconstrained (\texttt{UN}) policy as 
Because $\widehat{\alpha}^a_{\texttt{UN}}$ and $\widehat{\alpha}^b_{\texttt{UN}}$ are solutions to balance equations, i.e.,
$\frac{1}{\widehat{\alpha}^a_{\texttt{UN}}}-1 = \frac{1-g^{1a}(\theta^a_{\texttt{UN}}(\widehat{\alpha}^a_{\texttt{UN}}) )}{g^{0a}(\theta^a_{\texttt{UN}}(\widehat{\alpha}^a_{\texttt{UN}}) )}, \frac{1}{\widehat{\alpha}^b_{\texttt{UN}}}-1 =
\frac{1-g^{1b}(\theta^b_{\texttt{UN}}(\widehat{\alpha}^b_{\texttt{UN}}) )}{g^{0b}(\theta^b_{\texttt{UN}}(\widehat{\alpha}^b_{\texttt{UN}}) )}$. If $\overline{\alpha} \leq \widehat{\alpha}^b_{\texttt{UN}}$, the  $\widehat{\alpha}^b_{\texttt{UN}} < \widehat{\alpha}^a_{\texttt{UN}}$ must hold under Condition \ref{con:transition}\ref{con:transition_II}. Next, we show that 
a sufficient condition of $\overline{\alpha} \leq \widehat{\alpha}^b_{\texttt{UN}}$ is $ \frac{u_+}{u_-} \geq \frac{1-T_{10}}{T_{00}}\beta(\widehat{x})$. 

$ \frac{u_+}{u_-} \geq \frac{1-T_{10}}{T_{00}}\beta(\widehat{x})
\Longrightarrow \frac{1}{\overline{\alpha}} - 1 \geq  \frac{1-T_{10}}{T_{00}}$. 
Since
$ \frac{1}{\widehat{\alpha}^b_{\texttt{UN}}} - 1 < \frac{1-T_{10}}{T_{00}}$, we have 
$\frac{1}{\widehat{\alpha}^b_{\texttt{UN}}} - 1<\frac{1}{\overline{\alpha}} - 1$. Thus, $\overline{\alpha} \leq \widehat{\alpha}^b_{\texttt{UN}}$. Therefore, if $ \frac{u_+}{u_-} \geq \frac{1-T_{10}}{T_{00}}\beta(\widehat{x})$, under Condition \ref{con:transition}\ref{con:transition_II}, $\widehat{\alpha}^b_{\texttt{UN}} < \widehat{\alpha}^a_{\texttt{UN}}$.}

\paragraph{Fairness constraint \texttt{EqOpt}.} 
Secondly, we show that for \texttt{EqOpt} fair policy,  if $ \frac{u_+}{u_-} \geq \frac{1-T_{10}}{T_{00}}\beta(\widehat{x})$, under Condition \ref{con:transition}\ref{con:transition_II}, $\widehat{\alpha}_{\texttt{UN}}^a -  \widehat{\alpha}_{\texttt{UN}}^b > \widehat{\alpha}_{\texttt{EqOpt}}^a -  \widehat{\alpha}_{\texttt{EqOpt}}^b 
\geq
0$. Because two curves $\mathcal{C}_1, \mathcal{C}_2$ are monotonic increasing.
It's sufficient to show two parts: (1) $\widetilde{\alpha}^a_{\texttt{EqOpt}}< \widehat{\alpha}^a_{\texttt{UN}}$, $\widetilde{\alpha}^b_{\texttt{EqOpt}}> \widehat{\alpha}^b_{\texttt{UN}}$; (2) $\widetilde{\alpha}^a_{\texttt{EqOpt}}\geq\widetilde{\alpha}^b_{\texttt{EqOpt}}$. 

Under \texttt{EqOpt} constraint, $\forall \alpha^a,\alpha^b$, $\mathbb{G}_1^a(\theta^a_{\texttt{EqOpt}}(\alpha^a,\alpha^b)) = \mathbb{G}_1^b(\theta^b_{\texttt{EqOpt}}(\alpha^a,\alpha^b))$ must hold so that $\theta^a_{\texttt{EqOpt}}(\alpha^a,\alpha^b) = \theta^b_{\texttt{EqOpt}}(\alpha^a,\alpha^b)$. Consider the case $\alpha^a = \alpha^b = \alpha$, $\forall \alpha \geq \overline{\alpha}$, we have $\theta^a_{\texttt{EqOpt}}(\alpha,\alpha) = \theta^b_{\texttt{EqOpt}}(\alpha,\alpha)$ and $\theta^a_{\texttt{UN}}(\alpha) < \theta^b_{\texttt{UN}}(\alpha)$. It implies that $\theta^a_{\texttt{UN}}(\alpha) <\theta^a_{\texttt{EqOpt}}(\alpha,\alpha) = \theta^b_{\texttt{EqOpt}}(\alpha,\alpha) < \theta^b_{\texttt{UN}}(\alpha) < \widehat{x}$, otherwise Equation \eqref{eq:opt_fair_policy} will be violated. Therefore, the followings hold $\forall \alpha \in [\overline{\alpha},1]$,
\begin{eqnarray*}
 \frac{1-g^{1a}(\theta^a_{\texttt{EqOpt}}(\alpha,\alpha) )}{g^{0a}(\theta^a_{\texttt{EqOpt}}(\alpha,\alpha))} > \frac{1-g^{1a}(\theta^a_{\texttt{UN}}(\alpha) )}{g^{0a}(\theta^a_{\texttt{UN}}(\alpha) )}; ~~~
  \frac{1-g^{1b}(\theta^b_{\texttt{EqOpt}}(\alpha,\alpha) )}{g^{0b}(\theta^b_{\texttt{EqOpt}}(\alpha,\alpha))} < \frac{1-g^{1b}(\theta^b_{\texttt{UN}}(\alpha) )}{g^{0b}(\theta^b_{\texttt{UN}}(\alpha) )}. 
\end{eqnarray*}
$\forall s \in \{a,b\}$, $\widetilde{\alpha}^s_{\texttt{EqOpt}}$ is the solution to $ \frac{1-g^{1s}(\theta^s_{\texttt{EqOpt}}(\alpha,\alpha) )}{g^{0s}(\theta^s_{\texttt{EqOpt}}(\alpha,\alpha))} = \frac{1}{\alpha}-1$ while $\widehat{\alpha}^s_{\texttt{UN}}$  is the solution to $ \frac{1-g^{1s}(\theta^s_{\texttt{UN}}(\alpha) )}{g^{0s}(\theta^s_{\texttt{UN}}(\alpha))} = \frac{1}{\alpha}-1$. Since $\overline{\alpha} \leq \widehat{\alpha}^b_{\texttt{UN}}< \widehat{\alpha}^a_{\texttt{UN}}$, it implies  $\widetilde{\alpha}^a_{\texttt{EqOpt}}< \widehat{\alpha}^a_{\texttt{UN}}$, $\widetilde{\alpha}^b_{\texttt{EqOpt}}> \widehat{\alpha}^b_{\texttt{UN}}$. 

Next, show that $\widetilde{\alpha}^a_{\texttt{EqOpt}}\geq \widetilde{\alpha}^b_{\texttt{EqOpt}}$.
$\forall \alpha\geq \overline{\alpha}$, $\theta^a_{\texttt{EqOpt}}(\alpha,\alpha) = \theta^b_{\texttt{EqOpt}}(\alpha,\alpha)$ implies $\mathbb{G}^a_1(\theta^a_{\texttt{EqOpt}}(\alpha,\alpha) ) = \mathbb{G}^b_1(\theta^b_{\texttt{EqOpt}}(\alpha,\alpha) )$ and $\mathbb{G}^a_0(\theta^a_{\texttt{EqOpt}}(\alpha,\alpha) ) \leq \mathbb{G}^b_0(\theta^b_{\texttt{EqOpt}}(\alpha,\alpha) )$. Therefore,  
$$\frac{1-g^{1a}(\theta^a_{\texttt{EqOpt}}(\alpha,\alpha) )}{g^{0a}(\theta^a_{\texttt{EqOpt}}(\alpha,\alpha))} \leq
\frac{1-g^{1b}(\theta^b_{\texttt{EqOpt}}(\alpha,\alpha) )}{g^{0b}(\theta^b_{\texttt{EqOpt}}(\alpha,\alpha))}.$$ 
Intersections with function $\frac{1}{\alpha}-1$ satisfies $\widetilde{\alpha}^a_{\texttt{EqOpt}}\geq
\widetilde{\alpha}^b_{\texttt{EqOpt}}$. 

It thus concludes that $\widehat{\alpha}_{\texttt{UN}}^a -  \widehat{\alpha}_{\texttt{UN}}^b > \widehat{\alpha}_{\texttt{EqOpt}}^a -  \widehat{\alpha}_{\texttt{EqOpt}}^b 
\geq
0$. 
\paragraph{Fairness constraint \texttt{DP}.} 
Finally, consider \texttt{DP} fair policy, where $\forall \alpha^a,\alpha^b$, $(1-\alpha^a)\mathbb{G}^a_0(\theta^a_\texttt{DP}(\alpha^a,\alpha^b)) + \alpha^a\mathbb{G}^a_1(\theta^a_\texttt{DP}(\alpha^a,\alpha^b)) = (1-\alpha^b)\mathbb{G}^b_0(\theta^b_\texttt{DP}(\alpha^a,\alpha^b)) + \alpha^b\mathbb{G}^b_1(\theta^b_\texttt{DP}(\alpha^a,\alpha^b))$ must hold.

We first show that  under Condition \ref{con:transition}\ref{con:transition_II}, $\widetilde{\alpha}^a_{\texttt{DP}}< \widehat{\alpha}^a_{\texttt{UN}}$, $\widetilde{\alpha}^b_{\texttt{DP}}> \widehat{\alpha}^b_{\texttt{UN}}$. Consider the case $\alpha^a=\alpha^b=\alpha$, $\forall \alpha\geq\overline{\alpha}$. Since
$\forall x, \, (1-\alpha)\mathbb{G}^b_0(x) + \alpha\mathbb{G}^b_1(x) \geq  (1-\alpha)\mathbb{G}^a_0(x) + \alpha\mathbb{G}^a_1(x),$  $(1-\alpha)\mathbb{G}^a_0(\theta^a_\texttt{DP}(\alpha,\alpha)) + \alpha\mathbb{G}^a_1(\theta^a_\texttt{DP}(\alpha,\alpha)) = (1-\alpha)\mathbb{G}^b_0(\theta^b_\texttt{DP}(\alpha,\alpha)) + \alpha\mathbb{G}^b_1(\theta^b_\texttt{DP}(\alpha,\alpha))$ implies 
 $\theta^a_\texttt{DP}(\alpha,\alpha) 
 \geq\theta^b_\texttt{DP}(\alpha,\alpha)$.
% \xr{$(1-\alpha)\mathbb{G}^a_0(\theta^a_\texttt{DP}(\alpha,\alpha)) + \alpha\mathbb{G}^a_1(\theta^a_\texttt{DP}(\alpha,\alpha)) = (1-\alpha)\mathbb{G}^b_0(\theta^b_\texttt{DP}(\alpha,\alpha)) + \alpha\mathbb{G}^b_1(\theta^b_\texttt{DP}(\alpha,\alpha))>  (1-\alpha)\mathbb{G}^a_0(\theta^b_\texttt{DP}(\alpha,\alpha)) + \alpha\mathbb{G}^a_1(\theta^b_\texttt{DP}(\alpha,\alpha))$}, it requires $\theta^a_\texttt{DP}(\alpha,\alpha) > \theta^b_\texttt{DP}(\alpha,\alpha)$ must hold. 
Because $\theta^a_\texttt{UN}(\alpha) < \theta^b_\texttt{UN}(\alpha)$, $\forall \alpha \geq \overline{\alpha}$. It implies that $\theta^a_\texttt{DP}(\alpha,\alpha) > \theta^a_\texttt{UN}(\alpha)$ and 
$\widehat{x} > \theta^b_\texttt{UN}(\alpha) >
\theta^b_\texttt{DP}(\alpha,\alpha)$ must hold. 
Therefore, $\forall \alpha \in [\overline{\alpha},1]$,
\begin{eqnarray*}
 \frac{1-g^{1a}(\theta^a_{\texttt{DP}}(\alpha,\alpha) )}{g^{0a}(\theta^a_{\texttt{DP}}(\alpha,\alpha))} > \frac{1-g^{1a}(\theta^a_{\texttt{UN}}(\alpha) )}{g^{0a}(\theta^a_{\texttt{UN}}(\alpha) )}; ~~~
  \frac{1-g^{1b}(\theta^b_{\texttt{DP}}(\alpha,\alpha) )}{g^{0b}(\theta^b_{\texttt{DP}}(\alpha,\alpha))} < \frac{1-g^{1b}(\theta^b_{\texttt{UN}}(\alpha) )}{g^{0b}(\theta^b_{\texttt{UN}}(\alpha) )} 
\end{eqnarray*}
Similar to reasoning in \texttt{EqOpt} case, we have $\widetilde{\alpha}^a_{\texttt{DP}}< \widehat{\alpha}^a_{\texttt{UN}}$, $\widetilde{\alpha}^b_{\texttt{DP}}> \widehat{\alpha}^b_{\texttt{UN}}$.

Different from \texttt{EqOpt} fairness where $\widetilde{\alpha}^a_{\texttt{EqOpt}}\geq
\widetilde{\alpha}^b_{\texttt{EqOpt}}$, both $\widetilde{\alpha}^a_{\texttt{DP}}\geq
\widetilde{\alpha}^b_{\texttt{DP}}$ and $\widetilde{\alpha}^a_{\texttt{DP}}\leq
\widetilde{\alpha}^b_{\texttt{DP}}$ are likely to occur, depending on distributions $G^a_0(x), G^b_0(x)$, $G^a_1(x)$ and $ G^b_1(x)$. It is because $\theta^a_\texttt{DP}(\alpha,\alpha) > \theta^b_\texttt{DP}(\alpha,\alpha)$ can result in either $\mathbb{G}^a_0(\theta^a_{\texttt{DP}}(\alpha,\alpha) ) \leq \mathbb{G}^b_0(\theta^b_{\texttt{DP}}(\alpha,\alpha) )$ or $\mathbb{G}^a_0(\theta^a_{\texttt{DP}}(\alpha,\alpha) ) \geq \mathbb{G}^b_0(\theta^b_{\texttt{DP}}(\alpha,\alpha) )$. 

For these two outcomes, if $\widetilde{\alpha}^a_{\texttt{DP}}\geq \widetilde{\alpha}^b_{\texttt{DP}}$, then \texttt{DP} fair policy results in a more equitable equilibrium than unconstrained policy; if $\widetilde{\alpha}^a_{\texttt{DP}}\leq\widetilde{\alpha}^b_{\texttt{DP}}$, it means the disadvantaged group is flipped from $\mathcal{G}_b$ to $\mathcal{G}_a$.

\iffalse
Under Condition \ref{con:transition}\ref{con:transition_I}, using the similar analysis, we show that  $\widetilde{\alpha}^a_{\texttt{DP}}> \widehat{\alpha}^a_{\texttt{UN}}$, $\widetilde{\alpha}^b_{\texttt{DP}}< \widehat{\alpha}^b_{\texttt{UN}}$.
$\forall \alpha>\overline{\alpha}$,
we have $\theta^a_\texttt{DP}(\alpha,\alpha) > \theta^a_\texttt{UN}(\alpha)$ and 
$\widehat{x} > \theta^b_\texttt{UN}(\alpha) >
\theta^b_\texttt{DP}(\alpha,\alpha)$, which implies
%Therefore, $\forall \alpha \in [\overline{\alpha},1]$,
\begin{eqnarray*}
 \frac{1-g^{1a}(\theta^a_{\texttt{DP}}(\alpha,\alpha) )}{g^{0a}(\theta^a_{\texttt{DP}}(\alpha,\alpha))} < \frac{1-g^{1a}(\theta^a_{\texttt{UN}}(\alpha) )}{g^{0a}(\theta^a_{\texttt{UN}}(\alpha) )}; ~~~
  \frac{1-g^{1b}(\theta^b_{\texttt{DP}}(\alpha,\alpha) )}{g^{0b}(\theta^b_{\texttt{DP}}(\alpha,\alpha))} > \frac{1-g^{1b}(\theta^b_{\texttt{UN}}(\alpha) )}{g^{0b}(\theta^b_{\texttt{UN}}(\alpha) )} 
\end{eqnarray*}
Thus, $\widetilde{\alpha}^a_{\texttt{DP}}> \widehat{\alpha}^a_{\texttt{UN}}$, $\widetilde{\alpha}^b_{\texttt{DP}}< \widehat{\alpha}^b_{\texttt{UN}}$. Different from \texttt{EqOpt} fairness where $\widetilde{\alpha}^a_{\texttt{EqOpt}}< \widetilde{\alpha}^b_{\texttt{EqOpt}}$, both $\widetilde{\alpha}^a_{\texttt{DP}}
\geq \widetilde{\alpha}^b_{\texttt{DP}}$ and $\widetilde{\alpha}^a_{\texttt{DP}}\leq 
\widetilde{\alpha}^b_{\texttt{DP}}$ are likely to occur, the reason is similar to case under Condition \ref{con:transition}\ref{con:transition_II}.
\fi 
\end{proof}
\paragraph{The proof of Proposition \ref{prop:intervention_policy}.}
\begin{proof}
In the proof, we simplify the notations by removing subscript $\mathcal{C}$. 

Let $\psi^s(\cdot)$, $\psi^{s'}(\cdot)$ be balanced function of policies $(\theta^{a},\theta^{b})$ and $(\theta^{a'},\theta^{b'})$, respectively.
%We know that both $\theta^{a'}_{\mathcal{C}}$ and $\theta^{b'}_{\mathcal{C}}$ satisfy that $\forall \alpha^a, \alpha^b \in [0,1]$ $\theta^{a'}_{\mathcal{C}}>\theta^{a}_{\mathcal{C}}$ and $\theta^{b'}_{\mathcal{C}}>\theta^{b}_{\mathcal{C}}$, because under the fairness constraint $\mathcal{C}$, there is a continuous non-decreasing function $\phi_{\mathcal{C}}(\cdot)$ such that$\phi_{\mathcal{C}}(\theta^a)=\theta^b$.

According to the balanced equation \eqref{eq:balanced_eqn_a},  \begin{equation*}
      \frac{1}{\alpha^s} - 1 =\frac{1-g^{1s}(\theta^s(\alpha^a,\alpha^b))}{g^{0s}(\theta^s(\alpha^a,\alpha^b))}
      =
      \frac{1-(T_{11}^s(1-\mathbb{G}_1^s(\theta^s(\alpha^a,\alpha^b))) + T_{10}^s\mathbb{G}_1^s(\theta^s(\alpha^a,\alpha^b)))}{T_{01}^s(1-\mathbb{G}_0^s(\theta^s(\alpha^a,\alpha^b))) + T_{00}^s\mathbb{G}_0^s(\theta^s(\alpha^a,\alpha^b))}.
  \end{equation*}
Under Condition \ref{con:transition_II}, $\forall \alpha^a, \alpha^b \in [0,1]$, $\theta^{a'}(\alpha^a, \alpha^b)<\theta^{a}(\alpha^a, \alpha^b)$ and $\theta^{b'}(\alpha^a, \alpha^b)<\theta^{b}(\alpha^a, \alpha^b)$. 
 
Under Condition \ref{con:transition_I}, $\forall \alpha^a, \alpha^b \in [0,1]$, $\theta^{a'}(\alpha^a, \alpha^b)>\theta^{a}(\alpha^a, \alpha^b)$ and $\theta^{b'}(\alpha^a, \alpha^b)>\theta^{b}(\alpha^a, \alpha^b)$.
 
Both imply that $\frac{1-g^{1s}(\theta^s(\alpha^a,\alpha^b))}{g^{0s}(\theta^s(\alpha^a,\alpha^b))}>\frac{1-g^{1s}(\theta^{s'}(\alpha^a,\alpha^b))}{g^{0s}(\theta^{s'}(\alpha^a,\alpha^b))}$, and $\forall \alpha^a,\alpha^b \in [0,1]$, $\psi^a(\alpha^b) < \psi^{a'}(\alpha^b)$ and  $\psi^b(\alpha^a) < \psi^{b'}(\alpha^a)$ hold. As a consequence, $\widehat{\alpha}^{a'}>\widehat{\alpha}^a$ and $\widehat{\alpha}^{b'}>\widehat{\alpha}^b$. 

Now consider the long-run average utility of institute $\overline{U}(\theta^a,\theta^b) = \lim_{T\to\infty}\frac{1}{T}\sum_{t=1}^T\mathcal{U}_t(\theta^a,\theta^b)$, where the instantaneous utility at $t$ under threshold policies $\theta^a, \theta^b$ is
\begin{eqnarray*}
\mathcal{U}_t(\theta^a,\theta^b) &=& \sum_{s=a,b}p_s \mathbb{E}_{X_t|S=s}[\textbf{1}(X_t\geq \theta^s)(\gamma_t^s(X_t)(u_++u_-) -u_-)] \\&=& \sum_{s=a,b}p_s\int_{\theta^s}^{\infty}(\gamma^s_t(x)(u_++u_-) -u_-)\mathbb{P}(X_t=x\mid S=s) dx\\
&=&\sum_{s=a,b}p_s\int_{\theta^s}^{\infty} \alpha^s_t\big(G^s_1(x)u_+ +G^s_0(x)u_-\big)- G^s_0(x)u_-dx
\end{eqnarray*}
In the followings, we use a special case ($\mathcal{C} = \texttt{EqOpt}$, $G^a_y(x) = G^b_y(x), \forall x,y=0,1$, under Condition \ref{con:transition}\ref{con:transition_II}) to show that $\overline{U}(\theta^{a'},\theta^{b'}) > \overline{U}(\theta^a,\theta^b)$ can be attained, i.e., the long-run average utility under policy $(\theta^{a'},\theta^{b'})$ can be higher than myopic optimal policy $(\theta^{a},\theta^{b})$.

Since the qualification rates of two groups converge to equilibrium, $\overline{U}(\theta^a,\theta^b) =\mathcal{U}_{\infty}(\theta^a,\theta^b)$ is the same as instantaneous expected utility of institute at the equilibrium state. To show that $\overline{U}(\theta^{a'},\theta^{b'}) > \overline{U}(\theta^a,\theta^b)$, we prove the following holds,
\begin{eqnarray}\label{eq:highUtility}
\sum_{s=a,b}p_s\int_{\theta^{s'}(\widehat{\alpha}^{a'},\widehat{\alpha}^{b'})}^{\infty}f(x;\widehat{\alpha}^{s'})dx >\sum_{s=a,b}p_s \int_{\theta^s(\widehat{\alpha}^a,\widehat{\alpha}^b)}^{\infty}f(x;\widehat{\alpha}^{s})dx
\end{eqnarray}
where $f(x;\widehat{\alpha}^{s})\coloneqq  \widehat{\alpha}^{s}\big(G^s_1(x)u_+ +G^s_0(x)u_-\big)- G^s_0(x)u_-$.

Because $\widehat{\alpha}^{s'}>\widehat{\alpha}^{s}$, $\theta^{s'}(\widehat{\alpha}^{a'},\widehat{\alpha}^{b'})<\theta^{s}(\widehat{\alpha}^{a'},\widehat{\alpha}^{b'})<\theta^{s}(\widehat{\alpha}^{a},\widehat{\alpha}^{b})$ holds under Condition \ref{con:transition_II}. LHS of above inequality can be written as
$$\sum_{s=a,b}p_s\Big(\int_{\theta^{s'}(\widehat{\alpha}^{a'},\widehat{\alpha}^{b'})}^{\theta^{s}(\widehat{\alpha}^{a},\widehat{\alpha}^{b})}f(x;\widehat{\alpha}^{s'})dx + \int_{\theta^{s}(\widehat{\alpha}^{a},\widehat{\alpha}^{b})}^{\infty}f(x;\widehat{\alpha}^{s'})dx\Big). $$
Inequality \eqref{eq:highUtility} can further be re-organized,
\begin{eqnarray}\label{eq:highUtility1}
\sum_{s=a,b}p_s\int_{\theta^{s'}(\widehat{\alpha}^{a'},\widehat{\alpha}^{b'})}^{\theta^{s}(\widehat{\alpha}^{a},\widehat{\alpha}^{b})}f(x;\widehat{\alpha}^{s'})dx> \sum_{s=a,b}p_s\int_{\theta^s(\widehat{\alpha}^a,\widehat{\alpha}^b)}^{\infty}\big(f(x;\widehat{\alpha}^{s})-f(x;\widehat{\alpha}^{s'})\big)dx
\end{eqnarray}
Consider a special case where $\mathcal{C} = \texttt{EqOpt}$ and $G_y^a(x) = G^b_y(x) = G_y(x), \forall x$, $\forall y\in \{0,1\}$. Then $\forall \alpha^a,\alpha^b$, we have $\theta^a(\alpha^a,\alpha^b) = \theta^b(\alpha^a,\alpha^b)$ and $\theta^{a'}(\alpha^a,\alpha^b) = \theta^{b'}(\alpha^a,\alpha^b)$. Inequality \eqref{eq:highUtility1} can be reduced to the following, $\forall s\in \{a,b\}$, simplify notations and let $\widehat{\theta}\coloneqq \theta^{s}(\widehat{\alpha}^{a},\widehat{\alpha}^{b})$, $\widehat{\theta}'\coloneqq \theta^{s'}(\widehat{\alpha}^{a'},\widehat{\alpha}^{b'})$.
\begin{eqnarray}\label{eq:highUtility2}
&&\Big(p_a\widehat{\alpha}^{a'}+p_b\widehat{\alpha}^{b'}\Big)\Big(\mathbb{G}_1(\widehat{\theta})-\mathbb{G}_1(\widehat{\theta}')\Big)u_+ \nonumber\\&+& \underbrace{\Big(u_+(1-\mathbb{G}_1(\widehat{\theta}))+u_-(1-\mathbb{G}_0(\widehat{\theta}))\Big)\Big(p_a(\widehat{\alpha}^{a'}-\widehat{\alpha}^{a})+p_b(\widehat{\alpha}^{b'}-\widehat{\alpha}^{b})\Big)}_{\textbf{term 1}} \nonumber\\
&>& \Big(p_a(1-\widehat{\alpha}^{a'})+p_b(1-\widehat{\alpha}^{b'})\Big)\Big(\mathbb{G}_0(\widehat{\theta})-\mathbb{G}_0(\widehat{\theta}')\Big)u_-
\end{eqnarray}
Because $\frac{1}{ \widehat{\alpha}^{s'}}-1 = \frac{1-g^{1s}(\widehat{\theta'}) }{g^{0s}(\widehat{\theta'}) }$ and $\frac{1}{ \widehat{\alpha}^{s}}-1 = \frac{1-g^{1s}(\widehat{\theta}) }{g^{0s}(\widehat{\theta})}$.
\begin{eqnarray*}
\widehat{\alpha}^{s'}-\widehat{\alpha}^{s} > \frac{T^s_{01}-T^s_{00}}{1-T^s_{10}+T^s_{01}}(\mathbb{G}_0(\widehat{\theta})-\mathbb{G}_0(\widehat{\theta}') )%;~~~~~ 1-\widehat{\alpha}^{s'} < \frac{1-T^s_{10}}{1-T^s_{11}+T^s_{00}}
\end{eqnarray*}
We have $\textbf{term 1} >$
\begin{eqnarray*}
 \underbrace{\Big(\frac{u_+}{u_-}(1-\mathbb{G}_1(\widehat{\theta}))+(1-\mathbb{G}_0(\widehat{\theta}))\Big)\Big(p_a \frac{T^a_{01}-T^a_{00}}{1-T^a_{10}+T^a_{01}} + p_b \frac{T^b_{01}-T^b_{00}}{1-T^b_{10}+T^b_{01}} \Big)}_{\coloneqq h(\widehat{\theta})>0}
\Big(\mathbb{G}_0(\widehat{\theta})-\mathbb{G}_0(\widehat{\theta}') \Big)u_-
\end{eqnarray*}
For the optimal \texttt{EqOpt} fair threshold $\theta(\widehat{\alpha}^{a'},\widehat{\alpha}^{b'})$, the following holds 
\begin{eqnarray*}
\Big(p_a\widehat{\alpha}^{a'}+p_b\widehat{\alpha}^{b'}\Big){G}_1(\theta(\widehat{\alpha}^{a'},\widehat{\alpha}^{b'}))u_+ = \Big(p_a(1-\widehat{\alpha}^{a'})+p_b(1-\widehat{\alpha}^{b'})\Big)G_0(\theta(\widehat{\alpha}^{a'},\widehat{\alpha}^{b'}))u_-\\
\Big(p_a\widehat{\alpha}^{a'}+p_b\widehat{\alpha}^{b'}\Big){G}_1(x)u_+ > \Big(p_a(1-\widehat{\alpha}^{a'})+p_b(1-\widehat{\alpha}^{b'})\Big)G_0(x)u_-, \forall x > \theta(\widehat{\alpha}^{a'},\widehat{\alpha}^{b'})\\
\Big(p_a\widehat{\alpha}^{a'}+p_b\widehat{\alpha}^{b'}\Big){G}_1(x)u_+ < \Big(p_a(1-\widehat{\alpha}^{a'})+p_b(1-\widehat{\alpha}^{b'})\Big)G_0(x)u_-, \forall x < \theta(\widehat{\alpha}^{a'},\widehat{\alpha}^{b'})
\end{eqnarray*}
It implies that $\exists$ some $\delta>0$ s.t. $\forall x\in (\theta(\widehat{\alpha}^{a'},\widehat{\alpha}^{b'})-\delta,\theta(\widehat{\alpha}^{a'},\widehat{\alpha}^{b'})+\delta ) \coloneqq \mathcal{B}(\theta(\widehat{\alpha}^{a'},\widehat{\alpha}^{b'}),\delta )$,
\begin{eqnarray*}
\Big(p_a\widehat{\alpha}^{a'}+p_b\widehat{\alpha}^{b'}\Big){G}_1(x)u_+ + h(\widehat{\theta})G_0(x)u_- > \Big(p_a(1-\widehat{\alpha}^{a'})+p_b(1-\widehat{\alpha}^{b'})\Big)G_0(x)u_-.
\end{eqnarray*}
$\widehat{\theta},\widehat{\theta}'\in \mathcal{B}(\theta(\widehat{\alpha}^{a'},\widehat{\alpha}^{b'}),\delta )$ can be satisfied as long as $|\theta^s(\alpha^a,\alpha^b)-\theta^{s'}(\alpha^a,\alpha^b)| \leq \epsilon$ for some sufficiently small $\epsilon>0$.

Using the mean value theorem, $\exists G_y(x)$ and $\widetilde{x}\in (\widehat{\theta}',\widehat{\theta} )\subset \mathcal{B}(\theta(\widehat{\alpha}^{a'},\widehat{\alpha}^{b'}),\delta )$ s.t.
\begin{eqnarray*}
&&\Big(p_a\widehat{\alpha}^{a'}+p_b\widehat{\alpha}^{b'}\Big)(\mathbb{G}_1(\widehat{\theta})- \mathbb{G}_1(\widehat{\theta}'))u_+ +  h(\widehat{\theta})(\mathbb{G}_0(\widehat{\theta})- \mathbb{G}_0(\widehat{\theta}'))u_-
\\&=& 
\Big(\Big(p_a\widehat{\alpha}^{a'}+p_b\widehat{\alpha}^{b'}\Big){G}_1(\widetilde{x})u_+ +  h(\widehat{\theta})G_0(\widetilde{x})u_-\Big)(\widehat{\theta}-\widehat{\theta}')\\
&>& \Big(\Big(p_a(1-\widehat{\alpha}^{a'})+p_b(1-\widehat{\alpha}^{b'})\Big)G_0(\widetilde{x})u_-\Big)(\widehat{\theta}-\widehat{\theta}') \\&\geq& \Big(p_a(1-\widehat{\alpha}^{a'})+p_b(1-\widehat{\alpha}^{b'})\Big)(\mathbb{G}_0(\widehat{\theta})- \mathbb{G}_0(\widehat{\theta}'))u_-.
\end{eqnarray*}
Therefore, inequality \eqref{eq:highUtility2}
holds and $\overline{U}(\theta^{a'},\theta^{b'})>\overline{U}(\theta^{a},\theta^{b})$. 
\end{proof}

\paragraph{The proof of Proposition \ref{prop:existance_eq_polices}.}
\begin{proof}
To ensure $\alpha^s_t\rightarrow \widehat{\alpha}$, threshold policy $\theta^s(\alpha^s)$ as a function of $\alpha^s\in[0,1]$ should be designed such that $ \frac{1-g^{1s}(\theta^s(\alpha^s))}{g^{0s}(\theta^s(\alpha^s))} = \frac{1}{\alpha^s}-1$ has a unique solution $\widehat{\alpha}$. Let $\mathcal{I}_s \coloneqq \big[\frac{1-\max\{T_{11}^s,T^s_{10}\} }{\max\{T_{01}^s,T_{00}^s\}}, \frac{1-\min\{T_{11}^s,T^s_{10}\} }{\min\{T_{01}^s,T_{00}^s\}}\big]$, then $ \frac{1-g^{1s}(\theta^s(\alpha^s))}{g^{0s}(\theta^s(\alpha^s))} \in \mathcal{I}_s$ for any threshold policy $\theta^s(\alpha^s)$.

If $ \mathcal{I}_a\cap \mathcal{I}_b = \emptyset$, then %$\mathcal{I}'_a\cap \mathcal{I}'_b = \emptyset$ and
$\frac{1-g^{1a}(\theta^a({\alpha}))}{g^{0a}(\theta^a({\alpha}))} = \frac{1-g^{1b}(\theta^b({\alpha}))}{g^{0b}(\theta^b({\alpha}))}$ can never be attained, i.e., no threshold policy can result in equitable equilibrium.

If $ \mathcal{I}_a\cap \mathcal{I}_b \neq \emptyset$, then $\forall \widehat{\alpha} \in  \mathcal{I}_a\cap \mathcal{I}_b$ and $\forall s\in \{a,b\}$, 
%$\forall \widehat{\alpha} \notin [\frac{\min\{T_{01}^s,T_{00}^s\}}{1-\min\{T_{11}^s,T^s_{10}\}+\min\{T_{01}^s,T_{00}^s\} },\frac{\max\{T_{01}^s,T_{00}^s\}}{1-\max\{T_{11}^s,T^s_{10}\}+\max\{T_{01}^s,T_{00}^s\} } ] \Longrightarrow \frac{1}{\widehat{\alpha}}-1\notin \mathcal{I}_1$, and any threshold policy $\theta^s(\alpha^s)$ will result in $\frac{1-g^{1s}(\theta^s(\widehat{\alpha}))}{g^{0s}(\theta^s(\widehat{\alpha}))}\neq \frac{1}{\widehat{\alpha}}-1$, i.e., it is impossible to achieve $\alpha^s_t\rightarrow \widehat{\alpha}$  solely by manipulating policy. 
%$\forall \widehat{\alpha} \in [\frac{\min\{T_{01}^s,T_{00}^s\}}{1-\min\{T_{11}^s,T^s_{10}\}+\min\{T_{01}^s,T_{00}^s\} },\frac{\max\{T_{01}^s,T_{00}^s\}}{1-\max\{T_{11}^s,T^s_{10}\}+\max\{T_{01}^s,T_{00}^s\} } ]  \Longrightarrow \frac{1}{\widehat{\alpha}}-1\in \mathcal{I}_1$, 
there exists threshold policy $\theta^s(\alpha^s)$ such that $\frac{1-g^{1s}(\theta^s(\widehat{\alpha}))}{g^{0s}(\theta^s(\widehat{\alpha}))}= \frac{1}{\widehat{\alpha}}-1$. Specifically, under Condition \ref{con:transition}\ref{con:transition_II} (resp. \ref{con:transition}\ref{con:transition_I}), function $$h^s(x)\coloneqq \frac{1-g^{1s}(x)}{g^{0s}(x)} = \frac{1-(T_{11}^s(1-\mathbb{G}_1^s(x))+T_{10}^s\mathbb{G}_1^s(x))}{T_{01}^s(1-\mathbb{G}_0^s(x))+T_{00}^s\mathbb{G}_0^s(x)}$$ is strictly increasing (resp. decreasing) in $x\in(-\infty,+\infty)$ from $\frac{1-T^s_{11}}{T^s_{01}}$ (resp. $\frac{1-T^s_{10}}{T^s_{00}}$) to $\frac{1-T^s_{10}}{T^s_{00}}$ (resp. $\frac{1-T^s_{11}}{T^s_{01}}$) and any non-increasing function $\theta^s(\alpha^s)$ that satisfies $\theta^s(\widehat{\alpha}) = (h^s)^{-1}(\frac{1}{\widehat{\alpha}}-1)$ can result in  $\alpha^s_t\rightarrow \widehat{\alpha}$, where $(h^s)^{-1}(\cdot)$ is the inverse function of $h^s(\cdot)$.
\end{proof}

\paragraph{The proof of Proposition \ref{prop:intervention_transition}.}
\begin{proof}
According to the balanced equation \eqref{eq:balanced_eqn_a},  \begin{equation*}
      \frac{1}{\alpha^s} - 1 =\frac{1-g^{1s}(\theta^s(\alpha^a,\alpha^b))}{g^{0s}(\theta^s(\alpha^a,\alpha^b))}
      =
      \frac{1-(T_{11}^s(1-\mathbb{G}_1^s(\theta^s(\alpha^a,\alpha^b))) + T_{10}^s\mathbb{G}_1^s(\theta^s(\alpha^a,\alpha^b)))}{T_{01}^s(1-\mathbb{G}_0^s(\theta^s(\alpha^a,\alpha^b))) + T_{00}^s\mathbb{G}_0^s(\theta^s(\alpha^a,\alpha^b))}.
  \end{equation*}
$\forall \alpha^a,\alpha^b \in[0,1] $, increasing any $T_{yd}^s$ decreases $\frac{1-g^{1s}(\theta^s(\alpha^a,\alpha^b))}{g^{0s}(\theta^s(\alpha^a,\alpha^b))}$. Let $\psi^{s'}(\cdot)$ be the consequent balanced function after increasing $T_{yd}^s$, and $\widehat{\alpha}^{s'}$ be corresponding equilibrium. 
Given any $\alpha^a,\alpha^b \in [0,1]$, we have
$\psi^a(\alpha^b) < \psi^{a'}(\alpha^b)$ and  $\psi^b(\alpha^a) < \psi^{b'}(\alpha^a)$. Therefore, 
$\widehat{\alpha}^{a'}>\widehat{\alpha}^a$ and 
$\widehat{\alpha}^{b'}>\widehat{\alpha}^b$.
\end{proof}

\bibliographystyle{abbrvnat}
\end{document}